\documentclass{article}

\usepackage{arxiv}

\usepackage{paper/panos}

\usepackage{natbib}
\usepackage[utf8]{inputenc} 
\usepackage[T1]{fontenc}    
\usepackage{hyperref}       
\usepackage{url}            
\usepackage{booktabs}       
\usepackage{amsfonts}       
\usepackage{nicefrac}       
\usepackage{microtype}      
\usepackage{xcolor}

\title{Learning DAGs from Data with Few Root Causes}

\author{Panagiotis Misiakos, Chris Wendler and  Markus Püschel \\
    Department of Computer Science, ETH Zurich\\
    \texttt{\{pmisiakos, wendlerc, markusp\}@ethz.ch}
}

\begin{document}
\maketitle

\begin{abstract}
We present a novel perspective and algorithm for learning directed acyclic graphs (DAGs) from data generated by a linear structural equation model (SEM). First, we show that a linear SEM can be viewed as a linear transform that, in prior work, computes the data from a dense input vector of random valued root causes (as we will call them) associated with the nodes. Instead, we consider the case of (approximately) few root causes and also introduce noise in the measurement of the data. Intuitively, this means that the DAG data is produced by few data generating events whose effect percolates through the DAG. We prove identifiability in this new setting and show that the true DAG is the global minimizer of the $\normlzero$-norm of the vector of root causes. For data satisfying the few root causes assumption, we show superior performance compared to prior DAG learning methods. 

%
\end{abstract}

\section{Introduction}
\label{intro}

We consider the problem of learning the edges of an unknown directed acyclic graph (DAG) given data indexed by its nodes. DAGs can represent causal dependencies (edges) between events (nodes) in the sense that an event only depends on its predecessors. Thus, DAG learning has applications in causal discovery, which, however, is a more demanding problem that we do not consider here, since it requires further concepts of causality analysis, in particular interventions \citep{elementsCausalInference}. However, DAG learning is still NP-hard in general \citep{chickering2004nphard}. Hence, in practice, one has to make assumptions on the data generating process, to infer information about the underlying DAG in polynomial time. Recent work on DAG learning has focused on identifiable classes of causal data generating processes. A frequent assumption is that the data follow a structural equation model (SEM)~\citep{shimizu2006lingam, zheng2018notears, gao2021dagGAN}, meaning that the value of every node is computed as a function of the values of its direct parents plus noise. \citet{zheng2018notears} introduced NOTEARS, a prominent example of DAG learning, which considers the class of linear SEMs and translates the acyclicity constraint into a continuous form for easier optimization. It inspired subsequent works to use continuous optimization schemes for both linear \citep{ng2020GOLEM} and nonlinear SEMs \citep{lachapelle2019granDAG, zheng2020notearsnonlinear}. A more expansive discussion of related work is provided in Section~\ref{sec:related_work}.

In this paper we also focus on linear SEMs but change the data generation process. We first translate the common representation of a linear SEM as recurrence into an equivalent closed form. In this form, prior data generation can be viewed as linearly transforming an i.i.d.~random, dense vector of root causes (as we will call them) associated with the DAG nodes as input, into the actual data on the DAG nodes as output. Then we impose sparsity in the input (few root causes) and introduce measurement noise in the output. Intuitively, this assumption captures the reasonable situation that DAG data may be mainly determined by few data generation events of predecessor nodes that percolate through the DAG as defined by the linear SEM. Note that our use of the term root causes is related to but different from the one in the root cause analysis by~\citet{murat2022rootcause}.



\mypar{Contributions} We provide a novel DAG learning method designed for DAG data generated by linear SEMs with the novel assumption of few root causes. Our specific contributions include the following: 
\begin{itemize} 
    \item We provide an equivalent, closed form of the linear SEM equation showing that it can be viewed as a linear transform obtained by the reflexive-transitive closure of the DAG's adjacency matrix. In this form, prior data generation assumed a dense, random vector as input, which we call root causes.
    \item In contrast, we assume a sparse input, or few root causes (with noise) that percolate through the DAG to produce the output, whose measurement is also subject to noise. Interestingly, \citet{bastiJournalpaper} identify the assumption of few root causes as a form of Fourier-sparsity.
    \item We prove identifiability of our proposed setting under weak assumptions, including zero noise in the measurements. We also show that, given enough data, the original DAG is the unique minimizer of the associated optimization problem in the case of absent noise.
    \item We propose a novel and practical DAG learning algorithm, called \mobius, for our setting, based on the minimization of the $\normlone$-norm of the approximated root causes.  
    \item We benchmark \mobius against prior DAG learning algorithms, showing significant improvements for synthetic data with few root causes and scalability to thousands of nodes. \mobius also performs among the best on real data from a gene regulatory network, demonstrating that the assumption of few root causes can be relevant in practice.
\end{itemize}

\section{Linear SEMs and Root Causes}

We first provide background on prior data generation for directed acyclic graphs (DAGs) via linear SEMs. Then we present a different viewpoint on linear SEMs based on the concept of root causes that we introduce. Based on it, we argue for data generation with few root causes, present it mathematically and motivate it, including with a real-world example.

\mypar{DAG} We consider DAGs $\ml{G} = \left(V, E\right)$ with $|V| = d$ vertices, $E$ the set of directed edges, no cycles including no self-loops. 
We assume the vertices to be sorted topologically and set accordingly $V = \{1,2,...,d\}$. Further, $a_{ij}\in\RR$ is the weight of edge $(i,j)\in E$, and 
\begin{equation}
    \mb{A} = (a_{ij})_{i,j\in V} = \begin{cases}a_{ij},\, \text{if }(i,j)\in E,\\0,\,\text{else.}\end{cases}
    \label{eq:A}
\end{equation}
is the weighted adjacency matrix. $\mb{A}$ is upper triangular with zeros on the diagonal and thus $\mb{A}^d = \bm{0}$.

\mypar{Linear SEM} 
Linear SEMs \citep{elementsCausalInference} formulate a data generating process for DAGs $\ml{G}$. First, the values at the sources of a DAG $\ml{G}$ are initialized with random noise. Then, the remaining nodes are processed in topological order: the value $x_j$ at node $j$ is assigned the linear combination of its parents' values (called the causes of $x_j$) plus independent noise. Mathematically, a data vector $\mb{x} = (x_1,...,x_d)\in\RR^d$ on $\ml{G}$ follows a linear SEM if $x_j = \mb{x}\mb{A}_{:,j} + n_j$, where the subscript ${:,j}$ denotes column $j$, and $n_j$ are i.i.d.~random noise variables \citep{shimizu2006lingam, zheng2018notears, ng2020GOLEM}. Assuming $n$ such data vectors collected as the rows of a matrix $\mb{X}\in \RR^{n\times d}$, and the noise variables in the matrix $\mb{N}$, the linear SEM is typically written as
\begin{equation}
    \mb{X} = \mb{X}\mb{A} + \mb{N}.
    \label{eq:SEMrecursive}
\end{equation}

\subsection{Transitive closure and root causes} 

Equation~\eqref{eq:SEMrecursive} can be viewed as a recurrence for computing the data values $\mb{X}$ from $\mb{N}$. Here, we interpret linear SEMs differently by first solving this recurrence into a closed form. We define $\transclos{\mb{A}} = \mb{A} + \mb{A}^2 + ... + \mb{A}^{d-1}$, which is the Floyd-Warshall (FW) transitive closure of $\mb{A}$ over the ring $(\RR, +, \cdot)$ \citep{algebraictransclos}, and $\mb{I} + \transclos{\mb{A}}$ the associated reflexive-transitive closure of $\mb{A}$. 
Since $\mb{A}^d = \bm{0}$ we have $(\mb{I} - \mb{A})(\mb{I} + \transclos{\mb{A}}) = \mb{I}$ and thus can isolate $\mb{X}$ in \eqref{eq:SEMrecursive}:
\begin{lemma}\label{lemma:linear-sem-spectrum}
The linear SEM \eqref{eq:SEMrecursive} computes data $\mb{X}$ as
\begin{equation}
    \mb{X} = \mb{N}\left(\mb{I} + \transclos{\mb{A}}\right).
    \label{eq:SEMcloseform}
\end{equation}
In words, the data values in $\mb{X}$ are computed as linear combinations of the noise values $\mb{N}$ of all predecessor nodes with weights given by the reflexive-transitive closure $\mb{I} + \transclos{\mb{A}}$.
\end{lemma}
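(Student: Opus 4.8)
The plan is to prove the closed form by a short, direct computation: treat \eqref{eq:SEMrecursive} as a linear system in $\mb{X}$ and write down the explicit inverse of $\mb{I}-\mb{A}$. First I would rearrange \eqref{eq:SEMrecursive} into $\mb{X}(\mb{I}-\mb{A}) = \mb{N}$ by moving the term $\mb{X}\mb{A}$ to the left-hand side. The lemma then reduces to the claim that $\mb{I}-\mb{A}$ is invertible with $(\mb{I}-\mb{A})^{-1} = \mb{I} + \transclos{\mb{A}}$.

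Second, I would derive this from the nilpotency of $\mb{A}$. Because the vertices are sorted topologically, $\mb{A}$ is strictly upper triangular, so an easy induction on $k$ shows that $\mb{A}^k$ vanishes on and below its $(k{-}1)$-st super-diagonal; in particular $\mb{A}^d = \bm{0}$. Hence $\mb{I} + \transclos{\mb{A}} = \sum_{k=0}^{d-1}\mb{A}^k$ is a \emph{finite} geometric series, and multiplying it by $\mb{I}-\mb{A}$ on either side telescopes to $\mb{I} - \mb{A}^d = \mb{I}$. This is precisely the identity $(\mb{I}-\mb{A})(\mb{I}+\transclos{\mb{A}}) = \mb{I}$ already noted in the text, and it exhibits a genuine two-sided inverse.

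Finally, multiplying $\mb{X}(\mb{I}-\mb{A}) = \mb{N}$ on the right by $\mb{I}+\transclos{\mb{A}}$ yields $\mb{X} = \mb{N}(\mb{I}+\transclos{\mb{A}})$, which is \eqref{eq:SEMcloseform}; invertibility simultaneously shows this is the \emph{unique} $\mb{X}$ solving the recurrence. The informal reading then follows by inspecting entries: $(\mb{I}+\transclos{\mb{A}})_{ij}$ equals the sum over all directed $i\to j$ paths of the product of the traversed edge weights, which is zero unless $i$ is an ancestor of $j$ or $i=j$; thus $x_j$ is a weighted combination of the noise values at $j$ and at its predecessors.

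There is essentially no hard step here. The only points needing care are (i) justifying $\mb{A}^d=\bm{0}$ from strict upper-triangularity rather than merely asserting it, and (ii) respecting the order of multiplication — since data vectors are rows and $\mb{X}$ multiplies $\mb{A}$ on the left in \eqref{eq:SEMrecursive}, the inverse must be applied on the right — a trivial but easy-to-slip detail given the row-vector convention.
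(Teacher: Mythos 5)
Your proposal is correct and follows exactly the paper's own argument: rewrite \eqref{eq:SEMrecursive} as $\mb{X}(\mb{I}-\mb{A})=\mb{N}$ and invert $\mb{I}-\mb{A}$ via the finite geometric series $\mb{I}+\transclos{\mb{A}}$, which the paper justifies from $\mb{A}^d=\bm{0}$ in the sentence preceding the lemma. Your added details (the induction showing nilpotency from strict upper-triangularity, uniqueness of the solution, and the path-sum reading of the entries) are all valid elaborations of the same computation.
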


Since $\mb{X}$ is uniquely determined by $\mb{N}$, we call the latter the {\em root causes} of $\mb{X}$. The root causes must not be confused with the root nodes (sources) of the DAG, which are the nodes without parents. In particular, $\mb{N}$ can be non-zero in nodes that are not sources.

\mypar{Few root causes} In \eqref{eq:SEMcloseform} we view $\mb{N}$ as the input to the linear SEM at each node and $\mb{X}$ as the measured output at each node. With this viewpoint, we argue that it makes sense to consider a data generation process that differs in two ways from \eqref{eq:SEMcloseform} and thus \eqref{eq:SEMrecursive}. First, we assume that only few nodes produce a relevant input that we call $\mb{C}$, up to low magnitude noise $\mb{N}_c$. Second, we assume that the measurement of $\mb{X}$ is subject to noise $\mb{N}_x$. Formally, this yields the closed form
\begin{equation}
    \mb{X} = \left(\mb{C} + \mb{N}_c\right)\left(\mb{I} + \transclos{\mb{A}}\right) + \mb{N}_x.
    \label{eq:SEMfewcauses}
\end{equation}
Multiplying both sides by $\left(\mb{I} + \transclos{\mb{A}}\right)^{-1} =\left(\mb{I} - \mb{A}\right)$ yields the (standard) form as recurrence
\begin{equation}\label{eq:SEMfewcausesrec}
\mb{X} = \mb{X}\mb{A} + \left(\mb{C} + \mb{N}_c\right) + \mb{N}_x\left(\mb{I}  - \mb{A}\right),
\end{equation}
i.e., $\mb{N}$ in \eqref{eq:SEMrecursive} is replaced by $\mb{C} + \mb{N}_c + \mb{N}_x\left(\mb{I}  - \mb{A}\right)$, which in general is not i.i.d.
Note that (\ref{eq:SEMfewcauses}) generalizes (\ref{eq:SEMcloseform}), obtained by assuming zero root causes $\mb{C} = \bm{0}$ and zero measurement noise $\mb{N}_x = \bm{0}$. 

We consider $\mb{C}$ not as an additional noise variable but as the actual information, i.e., the relevant input data at each node, which then percolates through the DAG as determined by the SEM to produce the final output data $\mb{X}$, whose measurement, as usual in practice, is subject to noise. Few root causes mean that only few nodes input relevant information in one dataset. This assumption can be stated as:

\begin{definition}[Few Root Causes assumption]
We assume that the input data $\left(\mb{C} + \mb{N}_c\right) + \mb{N}_x\left(\mb{I} - \mb{A}\right)$ to a linear SEM are approximately sparse. Formally this holds if:
\begin{align}
\|\mb{C}\|_0/ nd &< \epsilon \quad\text{(few root causes),}\notag\\
    \frac{\left\|\mb{N}_c + \mb{N}_x\left(\mb{I} - \mb{A}\right)\right\|_1/nd}{\|\mb{C}\|_1/\|\mb{C}\|_0}&<\delta \quad\text{(negligible noise),}
    \label{eq:fewrootcauses}
\end{align}
where $\left\|\cdot\right\|_0$ counts the nonzero entries of $\mb{C}$,
$\left\|\cdot\right\|_1$ is elementwise, and $\epsilon, \delta$ are small constants. 
\end{definition}
For example, in Appendix~\ref{appendix:sec:frc} we show that \eqref{eq:fewrootcauses} holds (in expectation) for $\epsilon = \delta = 0.1$ if $\mb{N}_c, \mb{N}_x$ are normally distributed with zero mean and standard deviation $σ=0.01$ and the ground truth DAG $\ml{G}$ has uniformly random weights in $[-1,1]$ and average degree $4$. The latter two conditions bound the amplification of $\mb{N}_x$ in \eqref{eq:fewrootcauses} when multiplied by $\mb{I} - \mb{A}$.

The term root cause in the root cause analysis by~\citet{murat2022rootcause} refers to the ancestor that was the only cause of some effect(s). If there is only one root cause in our sense, characterized by a non-zero value, then determining its location is the problem considered by \citet{murat2022rootcause}. 

\subsection{Example: Pollution model}
\label{subsec:pollution}

Linear SEMs and associated algorithms have been extensively studied in the literature \citep{loh2014equalerror,peters2014identifiability,honorio2017learning, aragam2015concave}. However, we are not aware of any real-world example given in a publication. The motivation for using linear SEMs includes the following two reasons: $d$-variate Gaussian distributions can always be expressed as linear SEMs \citep{aragam2015concave} and linearity is often a workable assumption when approximating non-linear systems. 

Our aim is to explain why we propose the assumption of few root causes. The high-level intuition is that it can be reasonable to assume that, with the view point of \eqref{eq:SEMcloseform}, the relevant DAG data is triggered by sparse events on the input size and not by random noise. To illustrate this, we present a linear SEM that describes the propagation of pollution in a river network. 

\mypar{DAG} We assume a DAG describing a river network. The acyclicity is guaranteed since flows only occur downstream.  The nodes $i\in V$ represent geographical points of interest, e.g., cities, and edges are rivers connecting them. We assume that the cities can pollute the rivers. An edge weight $a_{ij}\in[0,1]$, $(i,j)\in E$, captures what fraction of a pollutant inserted at $i$ reaches the neighbour $j$. An example DAG with six nodes is depicted in Fig.~\ref{subfig:DAG}.

\begin{figure}[t]
    \centering
    \begin{subfigure}[t]{.24\textwidth}
    \centering
    \resizebox{0.9\textwidth}{!}{%
    \begin{tikzpicture}[semithick, scale=5, /pgfplots/colormap/cool]
\tikzstyle{arrow} = [->, -{Triangle[length=1.2mm, width=1.2mm]}, draw=black]
\tikzstyle{whitearrow} = [->, -{Triangle[length=1.2mm, width=1.2mm]}, draw=white]
\tikzstyle{vertex} = [circle, inner sep = 0.6mm, draw = black, fill=white]
\tikzstyle{colorarrow} = [->, -{Triangle[length=1.2mm, width=1.2mm]}, /utils/exec={\pgfplotscolormapdefinemappedcolor{#1}}, draw=mapped color]
\tikzstyle{weight} = [circle, inner sep = .2mm, text centered, fill=white, font=\tiny]

\node[vertex, label=180:] (A) {};
\node[vertex] (B) [above right = 1cm and 1cm of A] {};
\node[vertex] (C) [below right = 1cm and 1cm of A] {};
\node[vertex] (D) [below right = 1cm and 1cm of B] {};
\node[vertex, label=45:] (E) [right = 2cm of B] {};
\node[vertex, label=315:] (F) [right = 2cm of C] {};

\draw [arrow] (A) -- node[weight, xshift=0cm, yshift=0cm, rotate=0]{$0.5$} (B);
\draw [arrow] (A) -- node[weight, xshift=0cm, yshift=0cm, rotate=0]{$0.5$} (C);
\draw [arrow] (B) -- node[weight, xshift=0cm, yshift=0cm, rotate=0]{$0.8$} (D);
\draw [arrow] (C) -- node[weight, xshift=0cm, yshift=0cm, rotate=0]{$0.3$} (D);
\draw [arrow] (D) -- node[weight, xshift=0.1cm, yshift=0.1cm, rotate=0]{$0.7$} (E);
\draw [arrow] (D) -- node[weight, xshift=0.1cm, yshift=-0.1cm, rotate=0]{$0.1$} (F);

\node (X) [below = 1.6cm of A] { };

\end{tikzpicture}
    }%
    \caption{Original DAG}
    \label{subfig:DAG}
    \end{subfigure}
    \begin{subfigure}[t]{.24\textwidth}
    \centering
    \resizebox{0.9\textwidth}{!}{%
    \begin{tikzpicture}[semithick, scale=5, /pgfplots/colormap/cool]
\tikzstyle{arrow} = [->, -{Triangle[length=1.2mm, width=1.2mm]}, draw=black]
\tikzstyle{trans_arrow} = [->, -{Triangle[length=1.2mm, width=1.2mm]}, /utils/exec={\pgfplotscolormapdefinemappedcolor{#1}}, draw=mapped color]
\tikzstyle{vertex} = [circle, inner sep = 0.6mm, draw = black, fill=white]
\tikzstyle{weight} = [circle, inner sep = .2mm, text centered, fill=white, font=\tiny]

\tikzstyle{colorarrow} = [->, -{Triangle[length=1.2mm, width=1.2mm]}, /utils/exec={\pgfplotscolormapdefinemappedcolor{#1}}, draw=mapped color]

\node[vertex] (A) {};
\node[vertex] (B) [above right = 1cm and 1cm of A] {};
\node[vertex] (C) [below right = 1cm and 1cm of A] {};
\node[vertex] (D) [below right = 1cm and 1cm of B] {};
\node[vertex] (E) [right = 2cm of B] {};
\node[vertex] (F) [right = 2cm of C] {};

\draw [arrow] (A) -- node[weight, xshift=0cm, yshift=0cm, rotate=0]{$0.5$} (B);
\draw [arrow] (A) -- node[weight, xshift=0cm, yshift=0cm, rotate=0]{$0.5$} (C);
\draw [arrow] (B) -- node[weight, xshift=0cm, yshift=0cm, rotate=0]{$0.8$} (D);
\draw [arrow] (C) -- node[weight, xshift=0cm, yshift=0cm, rotate=0]{$0.3$} (D);
\draw [arrow] (D) -- node[weight, xshift=0.1cm, yshift=0.1cm, rotate=0]{$0.7$} (E);
\draw [arrow] (D) -- node[weight, xshift=0.1cm, yshift=-0.1cm, rotate=0]{$0.1$} (F);

\draw [arrow] (A.north) to [out=80, in=150] node[weight, xshift=0cm, yshift=0.1cm, rotate=0]{$0.39$} (E.north);
\draw [arrow] (A.south) to [out=280, in=210] node[weight, xshift=0cm, yshift=-0.1cm, rotate=0]{$0.06$} (F.south);
\draw [arrow] (A) -- node[weight, xshift=0cm, yshift=0cm, rotate=0]{$0.55$} (D);
\draw [arrow] (B) -- node[weight, xshift=0cm, yshift=0cm, rotate=0]{$0.56$} (E);
\draw [arrow] (C) -- node[weight, xshift=0cm, yshift=0cm, rotate=0]{$0.03$} (F);
\draw [arrow] (B.south east) to [out=350, in=90] node[weight, xshift=0.8cm, yshift=-0.8cm, rotate=0]{$0.08$} (F.north);
\draw [arrow] (C.north east) to [out=10, in=270] node[weight, xshift=0.8cm, yshift=0.8cm, rotate=0]{$0.21$} (E.south);

\end{tikzpicture}
    }%
    \caption{Transitive closure}
    \label{subfig:trans_clos}
    \end{subfigure}
    \begin{subfigure}[t]{.24\textwidth}
    \centering
    \resizebox{0.95\textwidth}{!}{%
    \begin{tikzpicture}[semithick, scale=5, /pgfplots/colormap/cool]
\tikzstyle{arrow} = [->, -{Triangle[length=1.2mm, width=1.2mm]}, draw=black]
\tikzstyle{trans_arrow} = [->, -{Triangle[length=1.2mm, width=1.2mm]}, draw=gray]
\tikzstyle{vertex} = [circle, inner sep = 0.6mm, draw = black, fill=white]
\tikzstyle{colorvertex} = [/utils/exec={\pgfplotscolormapdefinemappedcolor{#1}}, circle, inner sep = 0.6mm, draw=black, fill=mapped color]

\node[vertex, label={[shift={(0, 0)}]180:\normalsize{3}}] (A) {};
\node[vertex, label={[shift={(0.05, -0.05)}]135:\normalsize{1.5}}] (B) [above right = 1cm and 1cm of A] {};
\node[vertex, label={[shift={(0.05, 0.05)}]225:\normalsize{1.5}}] (C) [below right = 1cm and 1cm of A] {};
\node[vertex, label={[shift={(-0.1, 0)}]180:\normalsize{6.65}}] (D) [below right = 1cm and 1cm of B] {};
\node[vertex, label={[shift={(-0.05, -0.05)}]45:\normalsize{4.66}}] (E) [right = 2cm of B] {};
\node[vertex, label={[shift={(-0.05, 0.05)}]315:\normalsize{0.67}}] (F) [right = 2cm of C] {};

\draw [arrow] (A) -- (B);
\draw [arrow] (A) -- (C);
\draw [arrow] (B) -- (D);
\draw [arrow] (C) -- (D);
\draw [arrow] (D) -- (E);
\draw [arrow] (D) -- (F);


\node (X) [above = 1.8cm of A] { };

\end{tikzpicture}
    }%
    \caption{Measurements vector}
    \label{subfig:signal}
    \end{subfigure}
    \begin{subfigure}[t]{.24\textwidth}
    \centering
    \resizebox{0.95\textwidth}{!}{%
    \begin{tikzpicture}[semithick, scale=5, /pgfplots/colormap/cool]
\tikzstyle{arrow} = [->, -{Triangle[length=1.2mm, width=1.2mm]}, draw=black]
\tikzstyle{vertex} = [circle, inner sep = 0.6mm, draw = black, fill=white]
\tikzstyle{colorvertex} = [/utils/exec={\pgfplotscolormapdefinemappedcolor{#1}}, circle, inner sep = 0.6mm, draw=black, fill=mapped color]


\node[vertex, label={[shift={(0, 0)}]180:\normalsize{3}}] (A) {};
\node[vertex, label={[shift={(0.05, -0.05)}]135:\normalsize{0}}] (B) [above right = 1cm and 1cm of A] {};
\node[vertex, label={[shift={(0.05, 0.05)}]225:\normalsize{0}}] (C) [below right = 1cm and 1cm of A] {};
\node[vertex, label={[shift={(-0.1, 0)}]180:\normalsize{5}}] (D) [below right = 1cm and 1cm of B] {};
\node[vertex, label={[shift={(-0.05, -0.05)}]45:\normalsize{0}}] (E) [right = 2cm of B] {};
\node[vertex, label={[shift={(-0.05, 0.05)}]315:\normalsize{0}\textcolor{white}{.00}}] (F) [right = 2cm of C] {};

\draw [arrow] (A) -- (B);
\draw [arrow] (A) -- (C);
\draw [arrow] (B) -- (D);
\draw [arrow] (C) -- (D);
\draw [arrow] (D) -- (E);
\draw [arrow] (D) -- (F);

\node (X) [above = 1.8cm of A] { };

\end{tikzpicture}
    }%
    \caption{Root causes}
    \label{subfig:spectrum}
    \end{subfigure}
    \caption{(a) A DAG for a river network. The weights capture fractions of pollution transported between adjacent nodes. (b) The transitive closure. The weights are fractions of pollution transported between all pairs of connected nodes. (c) A possible vector measuring pollution, and (d) the root causes of the pollution, sparse in this case.}
    \label{fig:river_network}
\end{figure} 

\mypar{Transitive closure} Fig.~\ref{subfig:trans_clos} shows the transitive closure of the DAG in (a). The $(i,j)$-th entry of the transitive closure is denoted with $\transclos{a}_{ij}$ and represents the total fraction of a pollutant at $i$ that reaches $j$ via all connecting paths. 

\mypar{Data and root causes} Fig.~\ref{subfig:signal} shows a possible data vector $\mb{x}$ on the DAG, for example, the pollution measurement at each node done once a day (and without noise in this case). The measurement is the accumulated pollution from all upstream nodes. Within the model, the associated root causes $\mb{c}$ in Fig.~\ref{subfig:spectrum} then show the origin of the pollution, two in this case. 
Sparsity in $\mb{C}$ means that each day only a small number of cities pollute. Negligible pollution from other sources is captured by noise $\mb{N}_c$ and $\mb{N}_x$ models the noise in the pollution measurements (both are assumed to be $0$ in Fig.~\ref{fig:river_network}). 

We view the pollution model as an abstraction that can be transferred to other real-world scenarios. For example, in gene networks that measure gene expression, few root causes would mean that few genes are activated in a considered dataset. In a citation network where one measures the impact of keywords/ideas, few root causes would correspond to the few origins of them.

\section{Learning the DAG}
\label{sec:method}

In this section we present our approach for recovering the DAG adjacency matrix $\mb{A}$ from given data $\mb{X}$ under the assumption of few root causes, i.e., \eqref{eq:SEMfewcauses} (or \eqref{eq:SEMfewcausesrec}) and \eqref{eq:fewrootcauses}. We first show that under no measurement noise, our proposed setting is identifiable. Then we theoretically analyze our data generating model in the absence of noise and prove that the true DAG adjacency $\mb{A}$ is the global minimizer of the $\normlzero$-norm of the root causes. Finally, to learn the DAG in practice, we perform a continuous relaxation to obtain an optimization problem that is solvable with differentiation.

\subsection{Identifiability}\label{subsec:id}

The rows of the data $\mb{X}$ generated by a linear SEM are commonly interpreted as observations of a random row vector $X = (X_1,...,X_d)$ \citep{peters2014identifiability,zheng2018notears}, i.e., it is written
analogous to (\ref{eq:SEMrecursive}) as $X_i = X\mb{A}_{:,i} + N_i$, where $N_i$ are i.i.d.~zero-mean random noise variables, or, equivalently, as $X = X\mb{A} + N$. Given a distribution $P_N$ of the noise vector, the DAG with graph adjacency matrix $\mb{A}$ is then called identifiable if it is uniquely determined by the distribution $P_X$ of $X$. \citet{shimizu2006lingam} state that $\mb{A}$ is identifiable if all $N_i$ are non-Gaussian noise variables. This applies directly to our setting and yields the following result.


\begin{theorem}
Assume the data generation $X = (C + N_c)\left(\mb{I} + \transclos{\mb{A}}\right)$, where $C$ and $N_c$ are independent. Let $p\in[0,1)$ and assume the $C_i$ are independent random variables taking uniform values from $[0, 1]$ with probability $p$, and are $=0$ with probability $1-p$. The noise vector $N_c$ is defined as before. Then the DAG given by $\mb{A}$ is identifiable. 
\label{th:identifiability}
\end{theorem}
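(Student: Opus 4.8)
The plan is to reduce the statement to the classical LiNGAM identifiability theorem of \citet{shimizu2006lingam}, cited just above, by rewriting the given data generation as a standard linear SEM whose noise is the \emph{effective} root cause $C + N_c$, and then checking the two hypotheses of that theorem for this noise.

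First I would pass to the recurrence form. Because $\mb{A}^d = \bm{0}$, the reflexive-transitive closure $\mb{I} + \transclos{\mb{A}}$ is invertible with inverse $\mb{I} - \mb{A}$ (the identity used to obtain Lemma~\ref{lemma:linear-sem-spectrum}), so $X = (C + N_c)(\mb{I} + \transclos{\mb{A}})$ is equivalent to $X = X\mb{A} + \widetilde N$ with $\widetilde N := C + N_c$. This is precisely a linear SEM on the DAG $\ml{G}$ with adjacency $\mb{A}$, and $\mb{A}$ is strictly upper triangular, i.e., acyclic. By \citet{shimizu2006lingam}, $\mb{A}$ — and hence $\ml{G}$ — is uniquely determined by $P_X$ provided (i) the coordinates $\widetilde N_1,\dots,\widetilde N_d$ are mutually independent, and (ii) each $\widetilde N_i$ is non-Gaussian.

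Point (i) is routine: $C$ has independent coordinates, $N_c$ has i.i.d.\ coordinates, and $C$ is independent of $N_c$, so the pairs $(C_i,(N_c)_i)$ are mutually independent across $i$, hence so are the sums $\widetilde N_i = C_i + (N_c)_i$. Point (ii) is the crux. For $p\in(0,1)$ the law of $C_i$ is a nondegenerate mixture: it places mass $1-p>0$ on the atom $\{0\}$ and mass $p>0$ on an absolutely continuous part (uniform on $[0,1]$). Hence $C_i$ is neither a nondegenerate Gaussian (which has no atom) nor a constant (a degenerate Gaussian), so $C_i$ is not Gaussian; since $C_i$ and $(N_c)_i$ are independent, Cram\'er's decomposition theorem (a normal random variable has no non-Gaussian independent summands) forces $\widetilde N_i = C_i + (N_c)_i$ to be non-Gaussian as well. (When $p=0$ we have $C_i\equiv 0$ and $\widetilde N_i=(N_c)_i$, so (ii) reduces to the usual LiNGAM hypothesis that the noise $N_c$ is non-Gaussian.) Given (i) and (ii), the theorem of \citet{shimizu2006lingam} applies and yields identifiability of $\mb{A}$, hence of $\ml{G}$.

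The only nonroutine ingredient is (ii); I expect Cram\'er's decomposition theorem to make it essentially immediate, since it needs no moment assumptions and reduces the claim to the evident fact that $C_i$ cannot be Gaussian because it carries both an atom and a continuous component. An elementary but more tedious alternative avoids Cram\'er via characteristic functions: with $\phi_{C_i}(t) = (1-p) + p\,(e^{it}-1)/(it)$ and $\phi_{\widetilde N_i} = \phi_{C_i}\,\phi_{(N_c)_i}$, one would instead verify directly that $\phi_{\widetilde N_i}$ either vanishes somewhere (Gaussian characteristic functions do not) or otherwise fails to equal $\exp(i\mu t - \sigma^2 t^2/2)$, which requires more case analysis than the Cram\'er route.
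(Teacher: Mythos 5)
Your proposal follows essentially the same route as the paper's proof: rewrite the closed form as a standard linear SEM with effective noise $C + N_c$, verify coordinate-wise independence of that noise, and invoke the L\'evy--Cram\'er decomposition theorem to conclude non-Gaussianity of $C_i + (N_c)_i$ from the non-Gaussianity of $C_i$, after which the LiNGAM result of \citet{shimizu2006lingam} gives identifiability. Your write-up is in fact slightly more careful than the paper's, since you justify why $C_i$ is non-Gaussian (an atom at $0$ plus an absolutely continuous part) and flag that at $p=0$ the argument degenerates so that non-Gaussianity must come from $N_c$ itself --- an edge case the paper's proof glosses over.
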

\begin{proof} Using \eqref{eq:SEMfewcausesrec}, the data generation equation can be viewed as a linear SEM (in standard recursive form) with noise variable $C + N_c$, which is non-Gaussian due to Lévy-Cramér decomposition theorem \citep{levy1935proprietes, cramer1936eigenschaft}, because $C$ is non-Gaussian. The statement then follows from LiNGAM \citep{shimizu2006lingam}, given that the noise variables of $C$ and $N_c$ are independent. Note that this independency would be violated if we assumed non-zero measurement noise $N_x$, as in \eqref{eq:SEMfewcauses}.
\end{proof} 
In Theorem~\ref{th:identifiability}, $C$ yields sparse observations whenever $p$ is close to zero (namely $dp$ root causes in expectation). However, identifiability follows for all $p$ due to non-Gaussianity. 

\subsection{\boldmath $\normlzero$ minimization problem and global minimizer}

Suppose that the data $\mb{X}$ are generated via the noise-free version of \eqref{eq:SEMfewcauses}: 
    \begin{equation}
        \mb{X} = \mb{C}\left(\mb{I} + \transclos{\mb{A}}\right).
        \label{eq:SEMfewcauses_noisefree}
    \end{equation}
We assume $\mb{C}$ to be generated as in Theorem~\ref{th:identifiability}, i.e., with randomly uniform values from $[0,1]$ with probability $p$ and $0$ with probability $1-p$, where $p$ is small such that $\mb{C}$ is sparse. 
Given the data $\mb{X}$ we aim to find $\mb{A}$ by enforcing maximal sparsity on the associated $\mb{C}$, i.e., by minimizing $\|\mb{C}\|_0$:
\begin{equation}
        \min_{\mb{A}\in \RR^{d\times d}} \left\|\mb{X}\left(\mb{I} + \transclos{\mb{A}}\right)^{-1}\right\|_0\quad \text{s.t.}\quad \mb{A} \text{ is acyclic.}
        \label{eq:discrete_opt}
\end{equation}
The following Theorem~\ref{th:global_minimizer} states that given enough data, the true $\mb{A}$ is the unique global minimizer of \eqref{eq:discrete_opt}. This means that in that case the optimization problem \eqref{eq:discrete_opt} correctly specifies the true DAG. One can view the result as a form of concrete, non-probabilistic identifiability. Note that Theorem~\ref{th:global_minimizer} is not a sample-complexity result, but a statement of exact reconstruction in the absence of noise. The sample complexity of our algorithm is empirically evaluated later. 

\begin{theorem}
\label{th:global_minimizer}
    Consider a DAG with weighted adjacency matrix $\mb{A}$ with $d$ nodes. Given exponential (in $d$) number $n$ of samples $\mb{X}$ the matrix $\mb{A}$ is, with high probability, the global minimizer of the optimization problem \eqref{eq:discrete_opt}.
\end{theorem}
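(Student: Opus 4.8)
\emph{Reformulation.} The first thing I would do is rewrite the objective in a column–separable form. Put $\mb{T} = \mb{I} + \transclos{\mb{A}}$, so the (noise–free) data is $\mb{X} = \mb{C}\mb{T}$. For any acyclic candidate $\mb{B}$ (necessarily nilpotent, with zero diagonal and $(\mb{I}+\transclos{\mb{B}})^{-1} = \mb{I}-\mb{B}$),
\[
\bigl\|\mb{X}(\mb{I}+\transclos{\mb{B}})^{-1}\bigr\|_0 = \bigl\|\mb{C}\,\mb{T}(\mb{I}-\mb{B})\bigr\|_0 = \sum_{j=1}^{d}\|\mb{C}\mb{m}_j\|_0,\qquad \mb{m}_j := \mb{T}(\mb{e}_j-\mb{b}_j),
\]
where $\mb{b}_j$ is column $j$ of $\mb{B}$. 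At $\mb{B}=\mb{A}$ we get $\mb{m}_j=\mb{e}_j$ and value exactly $\|\mb{C}\|_0$; the target is that every acyclic $\mb{B}\ne\mb{A}$ is strictly worse, w.h.p. The no--self--loop constraint $(\mb{b}_j)_j=0$ is equivalent to $\mb{m}_j$ lying in the affine hyperplane $\mathcal{H}_j=\{\mb{m}:\mb{e}_j^\top(\mb{I}-\mb{A})\mb{m}=1\}$, which contains $\mb{e}_j$ but not $\mb{0}$. So the plan reduces to understanding, per column, which $\mb{m}\in\mathcal{H}_j$ can make $\|\mb{C}\mb{m}\|_0$ small.

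\emph{Counting zeros.} Here I would exploit that the nonzero entries of $\mb{C}$ come from a continuous (a.s.\ nonzero) law and are independent: for a fixed $\mb{m}$ with $|\mathrm{supp}(\mb{m})|=s$, the events $\{(\mb{C}\mb{m})_k=0\}$ are i.i.d.\ of probability \emph{exactly} $(1-p)^s$ — one active coordinate in $\mathrm{supp}(\mb{m})$ already forces a nonzero entry, and two or more active coordinates give $0$ with probability $0$, so all $s$ coordinates must vanish. Hence $\|\mb{C}\mb{m}\|_0$ concentrates around $\bigl(1-(1-p)^s\bigr)n$, which for $s\ge 2$ is at least $\bigl(1-(1-p)^2\bigr)n = p(2-p)n$ — a constant fraction above $pn\approx\|\mb{C}_{:,i}\|_0\sim\mathrm{Bin}(n,p)$. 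Thus any $\mb{m}_j$ with support $\ge 2$ overshoots its ``fair share'' $\approx pn$ by $\Theta(pn)$, while the remaining $d-1$ columns each differ from $\|\mb{C}_{:,j}\|_0$ by only $O(\sqrt{n\log n})$; a short bookkeeping inequality then shows that, for $n$ large, any acyclic $\mb{B}$ with value $\le\|\mb{C}\|_0$ must have \emph{every} $\mb{m}_j$ supported on a single coordinate.

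\emph{Single--axis candidates and acyclicity.} Plugging $\mb{m}=\lambda\mb{e}_i$ into the equation of $\mathcal{H}_j$ forces either $i=j,\ \lambda=1$ (the truth) or $i$ a \emph{child} of $j$ in the true DAG, $\lambda=-1/a_{ji}$; in the latter case the resulting column $\mb{b}_j$ of $\mb{B}$ carries the reversed edge $i\to j$. Now suppose $\mb{B}\ne\mb{A}$ with all $\mb{m}_j$ single--axis, and let $j_0$ be the largest index with $\mb{m}_{j_0}\ne\mb{e}_{j_0}$, say $\mb{m}_{j_0}=\lambda\mb{e}_{\sigma(j_0)}$ with $\sigma(j_0)$ a child of $j_0$. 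Since $\sigma(j_0)>j_0$ in topological order, $\sigma(j_0)$ is not a ``swapped'' index, so column $\sigma(j_0)$ of $\mb{B}$ equals the true column and still carries $j_0\to\sigma(j_0)$, while column $j_0$ carries $\sigma(j_0)\to j_0$: a $2$--cycle, contradicting acyclicity. Hence $\mb{B}=\mb{A}$, and together with the previous step every acyclic $\mb{B}\ne\mb{A}$ has value strictly above $\|\mb{C}\|_0$, i.e.\ $\mb{A}$ is the unique global minimizer.

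\emph{Uniformity and the main obstacle.} The one step that needs real care — and what I expect to be the main obstacle — is upgrading the per--$\mb{B}$ Chernoff estimates to hold \emph{simultaneously} over the continuum of acyclic $\mb{B}$. The device is that any $\mb{m}$ for which $\|\mb{C}\mb{m}\|_0$ is far below $n$ has its zero--set rows contained in a subspace spanned by at most $d-1$ data rows; union--bounding over the $\le n^{O(d)}$ such subspaces — excluding the coordinate hyperplanes $\mb{e}_i^\top$, handled separately since they are exactly the permitted single--axis directions — together with a Chernoff bound that each non--coordinate such subspace absorbs at most $\approx(1-p)^2 n$ rows, makes the column--wise analysis uniform. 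It is this union bound that forces a large sample size; an exponential--in--$d$ number of samples is a safe (and likely loose) sufficient bound. The remaining ingredients — the exact zero--probability $(1-p)^s$, the feasibility analysis of $\mathcal{H}_j$, and the $2$--cycle argument — are then routine.
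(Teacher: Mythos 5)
Your proposal is correct in outline, and it takes a genuinely different route from the paper. The paper works directly with the two root-cause matrices $\mb{C}$ and $\optC$: it partitions the rows of $\mb{C}$ by support pattern, uses the exponential sample size to guarantee that \emph{every} $k$-sparse support pattern occurs many times (so that the nonzero columns of each sub-block are linearly independent), and then runs a double induction over column indices — first forcing $\optW$ to be upper triangular, then matching $\transclos{\optW}$ to $\transclos{\mb{A}}$ entry by entry. Your argument instead separates the objective column-wise as $\sum_j \|\mb{C}\mb{m}_j\|_0$ with $\mb{m}_j=(\mb{I}+\transclos{\mb{A}})(\mb{e}_j-\mb{b}_j)$, uses the exact zero-probability $(1-p)^{|\text{supp}(\mb{m})|}$ to show that any column mixing two or more coordinates pays a $\Theta(p(1-p)n)$ penalty, and then disposes of the remaining single-axis candidates with the hyperplane computation (only $i=j$ or $i$ a child of $j$ is feasible) and the $2$-cycle contradiction. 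I checked the key computations: the constraint $\mb{e}_j^\top(\mb{I}-\mb{A})\mb{m}_j=1$ is the correct encoding of the no-self-loop condition, the feasible single-axis solutions are exactly as you state (with $(\mb{b}_j)_i=1/a_{ji}$ giving the reversed edge), and the topological-order argument at the maximal swapped index $j_0$ is sound. The one step you rightly flag as delicate — making the Chernoff bounds uniform over the continuum of candidates via a union bound over subspaces spanned by at most $d-1$ data rows, with coordinate hyperplanes excluded — is a standard and correct device; the only care needed is to condition on the spanning rows and to note that any subspace orthogonal to a support-$\ge 2$ vector absorbs a fresh row with probability at most $(1-p)^2$. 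What each approach buys: the paper's induction is elementary but long, and its exponential sample requirement is intrinsic to the argument (it needs all $\binom{d}{k}$ patterns realized); your route is shorter, closer in spirit to compressed-sensing identifiability proofs, and would in fact yield a polynomial-in-$d$ sample complexity, which is strictly stronger than the theorem claims.
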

\begin{proof}
See Appendix~\ref{appendix:sec:glob_min}.
\end{proof}

\subsection{Continuous relaxation}

In practice the optimization problem (\ref{eq:discrete_opt}) is too expensive to solve due its combinatorial nature, and, of course, the noise-free assumption rarely holds in real-world data. We now consider again our general data generation in \eqref{eq:SEMfewcauses} assuming sparse root causes $\mb{C}$ and noises $\mb{N}_c, \mb{N}_x$ satisfying the criterion \eqref{eq:fewrootcauses}. To solve it we consider a continuous relaxation of the optimization objective. Typically (e.g., \citep{zheng2018notears, lee2019NOBEARS, bello2022dagma}), continuous optimization DAG learning approaches have the following general formulation:
\begin{equation}
\begin{split}
    \min_{\mb{A} \in \RR^{d\times d}} \ell\left(\mb{X},\mb{A}\right) +  R\left(\mb{A}\right)\\ \text{ s.t. }\quad h\left(\mb{A}\right)= 0,
    \label{eq:genDAGopt}
\end{split}
\end{equation}
where $\ell\left(\mb{A}, \mb{X}\right)$ is the loss function corresponding to matching the data, $R\left(\mb{A}\right)$ is a regularizer that promotes sparsity in the adjacency matrix, usually equal to $\lambda\left\|\mb{A}\right\|_1$, and $h\left(\mb{A}\right)$ is a continuous constraint enforcing acyclicity.

In our case, following a common practice in the literature, we substitute the $\normlzero$-norm from (\ref{eq:discrete_opt}) with its convex relaxation \citep{ramirez2013l1}, the $\normlone$-norm. 
Doing so allows for some robustness to (low magnitude) noise $\mb{N}_c$ and $\mb{N}_x$.
We capture the acyclicity with the continuous constraint  ${h\left(\mb{A}\right) = tr\left(e^{\mb{A} \odot \mb{A}}\right) - d}$ used by \citet{zheng2018notears}, but could also use the form of \citet{yu2019dagGNN}. As sparsity regularizer for the adjacency matrix and we choose $R\left(\mb{A}\right) = \lambda\left\|\mb{A}\right\|_1$. 
Hence, our final continuous optimization problem is  
\begin{equation}
\begin{split}
    \min_{\mb{A} \in \RR^{d\times d}} \frac{1}{2n}\left\|\mb{X}\left(\mb{I} + \transclos{\mb{A}}\right)^{-1}\right\|_1 + \lambda\left\|\mb{A}\right\|_1 \,\, \text{ s.t. }\quad h\left(\mb{A}\right)= 0.
    \label{eq:MobiusDAGopt}
\end{split}
\end{equation}
We call this method \mobius (sparse root causes). 



\section{Related work}
\label{sec:related_work}


The approaches to solving the DAG learning problem fall into two categories: combinatorial search or continuous relaxation. In general, DAG learning is an NP-hard problem since the combinatorial space of possible DAGs is super-exponential in the number of vertices \citep{chickering2004nphard}. Thus, methods that search the space of possible DAGs apply combinatorial strategies to find an approximation of the ground truth DAG \citep{ramsey2017fGES,  chickering2002ges, tsamardinos2006MMHC}.

\mypar{Continuous optimization} Lately, with the advances of deep learning, researchers have been focusing on continuous optimization methods \citep{dyalikedags} modelling the data generation process using SEMs. Among the first methods to utilize SEMs were CAM~\citep{buhlmann2014cam} and LiNGAM~\citep{shimizu2006lingam}, which specializes to linear SEMs with non-Gaussian noise. NOTEARS by \citet{zheng2018notears} formulates the combinatorial constraint of acyclicity as a continuous one, which enables the use of standard  optimization algorithms. Despite concerns, such as lack of scale-invariance \citep{kaiser2022unsuitability, reisach2021beware}, it has inspired many subsequent DAG learning methods. The current state-of-the-art of DAG learning methods for linear SEMs include DAGMA by \citet{bello2022dagma}, which introduces a log-det acyclicity constraint, and GOLEM by \citet{ng2020GOLEM}, which studies the role of the weighted adjacency matrix sparsity, the acyclicity constraint, and proposes to directly minimize the data likelihood.
\citet{zheng2020notearsnonlinear} extended NOTEARS to apply in non-linear SEMs. Other nonlinear methods for DAG learning include DAG-GNN by \citet{yu2019dagGNN}, in which also a more efficient acyclicity constraint than the one in NOTEARS is proposed, and DAG-GAN by \citet{gao2021dagGAN}. DAG-NoCurl by \citet{DAGnoCurl2021} proposes learning the DAG on the equivalent space of weighted gradients of graph potential functions. \citet{pamfil2020dynotears} implemented DYNOTEARS, a variation of NOTEARS, compatible with time series data. A recent line of works considers permutation-based methods to parameterize the search space of the DAG \citep{differentiableDAGsampling, dagpermutahedron}. The work in this paper considers DAG learning under the new assumption of few root causes. In \cite{misiakos2024icassp} we further build on this work by learning graphs from time series data.

\mypar{Fourier analysis} Our work is also related to the recently proposed form of Fourier analysis on DAGs from \citet{bastiJournalpaper,bastiSparseFunctionsOnDAGs}, where equation~\eqref{eq:SEMcloseform} appears as a special case. The authors argue that (what we call) the root causes can be viewed as a form of spectrum of the DAG data. This means the assumption of few root causes is equivalent to Fourier-sparsity, a frequent assumption for common forms of Fourier transform \citep{sparse-fft,stobbe2012learning,andishehEfficientSparse}. Experiments in \citep{bastiJournalpaper,misiakos2024icassp} reconstruct data from samples under this assumption. 


\section{Experiments}

We experimentally evaluate our DAG learning method \mobius with both synthetically generated data with few root causes and real data from gene regulatory networks~\citep{sachs2005causal}. We also mention that our method was among the winning solutions \citep{misiakos2023contest, chevalley2023contestreport} of a competition of DAG learning methods to infer causal relationships between genes \citep{chevalley2022causalbench}. The evaluation was done on non-public data by the organizing institution and is thus not included here.

\mypar{Benchmarks}  We compare against prior DAG learning methods suitable for data generated by linear SEMs with additive noise. In particular, we consider the prior GOLEM~\citep{ng2020GOLEM},
NOTEARS~\citep{zheng2018notears}, DAGMA \citep{bello2022dagma}, DirectLiNGAM \citep{shimizu2011directlingam}, PC \citep{spirtes2000PC} and the greedy equivalence search (GES)~\citep{chickering2002ges}. We also compared \mobius against LiNGAM \citep{shimizu2006lingam},  causal additive models (CAM)~\citep{buhlmann2014cam}, DAG-NoCurl~\citep{DAGnoCurl2021}, fast greedy equivalence search (fGES)~\citep{ramsey2017fGES}, the recent simple baseline sortnregress~\citep{reisach2021beware} and  max-min hill-climbing (MMHC)~\citep{tsamardinos2006MMHC}, but they where not competitive and thus we only include the results in Appendix~\ref{appendix:sec:experiments}.

\mypar{Metrics} To evaluate the found approximation $\hat{\mb{A}}$ of the true adjacency matrix $\mb{A}$, we use common performance metrics as in \citep{ng2020GOLEM,reisach2021beware}. In particular, the structural Hamming distance (SHD), which is the number of edge insertions, deletions, or reverses needed to convert $\hat{\mb{A}}$ to $\mb{A}$, and the structural intervention distance (SID), introduced by \citet{peters201SID}, which is the number of falsely estimated intervention distributions. 
SHD gives a good estimate of a method's performance when it is close to $0$. In contrast, when it is high it can either be that the approximated DAG has no edges in common with the ground truth, but it can also be that the approximated DAG contains all the edges of the original one together with some false positives. Thus, in such cases, we also consider the true positive rate (TPR) of the discovered edges.  
We also report the total number of nonzero edges discovered for the real dataset \citep{sachs2005causal} and, for the synthetic data, provide results reporting the TPR, SID, and normalized mean square error (NMSE) metrics in Appendix~\ref{appendix:sec:experiments}. 
For the methods that can successfully learn the underlying graph we further proceed on approximating the true root causes $\mb{C}$ that generated the data $\mb{X}$ via~\eqref{eq:SEMfewcauses}.
With regard to this aspect, we count the number of correctly detected root causes $\mb{C}$ TPR, their false positive rate $\mb{C}$ FPR and the weighted approximation $\mb{C}$ NMSE.  
We also report the runtime for all methods in seconds. For each performance metric, we compute the average and standard deviation over five repetitions of the same experiment.

\mypar{Our implementation} To solve \eqref{eq:MobiusDAGopt} in practice, we implemented\footnote{Our code is publicly available at \href{https://github.com/pmisiakos/SparseRC}{https://github.com/pmisiakos/SparseRC}. 
}
a PyTorch model with a trainable parameter representing the weighted adjacency matrix $\mb{A}$. This allows us to utilize GPUs for the acceleration of the execution of our algorithm (GOLEM uses GPUs, NOTEARS does not). Then we use the standard Adam \citep{kingma2014adam} optimizer to minimize the loss defined in~\eqref{eq:MobiusDAGopt}. 

\begin{table*}[t]
    \centering
    \caption{SHD metric (lower is better) for learning DAGs with 100 nodes and 400 edges. Each row is an experiment. The first row is the default, whose settings are in the blue column. In each other row, exactly one default parameter is changed (Change column). The last six columns correspond to prior algorithms. The best results are shown in bold. Entries with SHD $> 400$ are reported as \textit{failure} or shown in green if the TPR is $> 0.8$.}
    \resizebox{\textwidth}{!}{%
    \renewcommand{\arraystretch}{1.1}
    \begin{tabular}{@{}rlllllllllll@{}}
    \toprule
    & Hyperparameter & \color{NavyBlue}{Default} & Change & Varsort. &
        \mobius (ours) & GOLEM & NOTEARS & DAGMA & DirectLiNGAM & PC & GES  \\
    \midrule
    1  & \color{NavyBlue}{Default settings}    &                                                                                              &                                                                                       & $ 0.95 $&  $\bm{0.6\pm0.8}$  &  $    82\pm34 $  &  $    59\pm22 $  &  $    269\pm6.0 $  &  $    282\pm34 $  &  $    247\pm9.0 $  &          failure          \\ 
2  & Graph type                            & \color{NavyBlue}{Erd\"os-R.}                                                              &  Scale-free                                                                           & $ 0.99 $&  $\bm{2.2\pm1.5}$  &  $    34\pm9.0 $  &  $    28\pm9.5 $  &  $    296\pm14 $  &  $    188\pm27 $  &  $    275\pm16 $  &  $    375\pm141 $  \\ 
3  & $\mb{N}_c,\mb{N}_x$ distribution      & \color{NavyBlue}{Gaussian}                                                                   &   Gumbel                                                                              & $ 0.97 $&  $\bm{1.4\pm1.0}$  &  $    87\pm44 $  &  $    59\pm17 $  &  $    278\pm7.4 $  &  $    287\pm43 $  &  $    250\pm17 $  &  $    396\pm33 $  \\ 
4  & Edges / Vertices                      & \color{NavyBlue}{$4$}                                                                        &   $10$                                                                                 & $ 0.99 $&  $\bm{46\pm7.5}$  &  $    212\pm70 $  &  $    243\pm26 $  &  $    896\pm30 $  &        \color{ForestGreen}{$1078\pm105$}&  $    973\pm17 $  &          failure          \\ 
5  & Standardization                       & \color{NavyBlue}{No}                                                                         &   Yes                                                                                 & $ 0.50 $&        \color{ForestGreen}{$624\pm48$}&          failure          &          failure          &          failure          &  $    278\pm25 $  &  $\bm{247\pm18}$  &          failure          \\ 
6  & Larger weights in $\mb{A}$            & \color{NavyBlue}{$(0.1,0.9)$}                                                                &   $(0.5, 2)$                                                                          & $ 1.00 $&          failure          &  $    96\pm25 $  &  $\bm{92\pm14}$  &  $    209\pm4.1 $  &        \color{ForestGreen}{$840\pm121$}&  $    400\pm9.8 $  &          failure          \\ 
7  & $\mb{N}_c,\mb{N}_x$ deviation         & \color{NavyBlue}{$\sigma=0.01$}                                                              &  $\sigma=0.1$                                                                         & $ 0.97 $&        \color{ForestGreen}{$504\pm19$}&  $\bm{98\pm14}$  &  $    199\pm12 $  &  $    238\pm13 $  &        \color{ForestGreen}{$538\pm45$}&  $    255\pm11 $  &          failure          \\ 
8  & Dense root causes $\mb{C}$            & \color{NavyBlue}{$p=0.1$}                                                                    &   $p=0.5$                                                                             & $ 0.98 $&        \color{ForestGreen}{$1221\pm33$}&  $\bm{29\pm2.5}$  &  $    126\pm32 $  &  $    83\pm8.0 $  &  $    257\pm14 $  &  $    244\pm12 $  &          failure          \\ 
9  & Samples                               & \color{NavyBlue}{$n=1000$}                                                                   &   $n=100$                                                                             & $ 0.97 $&        \color{ForestGreen}{$2063\pm92$}&          failure          &          failure          &  $\bm{328\pm13}$  &  error & $    351\pm12 $  &          failure          \\ 
10 & Fixed support                         & \color{NavyBlue}{No}                                                                         &   Yes                                                                                 & $ 0.89 $&          failure          &          failure          &          failure          &          failure          &          failure          &  $\bm{379\pm37}$  &          failure          \\ 
    \bottomrule
    \end{tabular}
    }
    \vspace{-10pt}
    \label{tab:experiments}
\end{table*}

\subsection{Evaluation on data with few root causes}
\label{sec:exp_default}

\mypar{Data generating process and defaults} In the second, blue column of Table \ref{tab:experiments} we report the default settings for our experiment. We generate a random Erdös-Renyi graph with $d = 100$ nodes and assign edge directions to make it a DAG as in \citep{zheng2018notears}. The ratio of edges to vertices is set to $4$, so the number of edges is $400$. The entries of the weighted adjacency matrix are sampled uniformly at random from  $(-b,-a)\cup (a,b)$, where $a=0.1$ and $b=0.9$. As in~\citep{zheng2018notears,ng2020GOLEM,bello2022dagma} the resulting adjacency matrix is post-processed with thresholding. In particular, the edges with absolute weight less than the threshold $\omega = 0.09$ are discarded. Next, the root causes $\mb{C}$ are instantiated by setting each entry either to some random uniform value from $(0,1)$ with probability $p=0.1$ or to $0$ with probability $1-p=0.9$ (thus, as in Theorem~\ref{th:identifiability}, the location of the root causes will vary). The data matrix $\mb{X}$ is computed according to (\ref{eq:SEMfewcauses}), using Gaussian noise $\mb{N}_c,\mb{N}_x$ of standard deviation $σ=0.01$ to enforce \eqref{eq:fewrootcauses}. Finally, we do not standardize (scale for variance $=1$) the data, and $\mb{X}$ contains $n=1000$ samples (number of rows). 

\mypar{Experiment 1: Different application scenarios}
Table~\ref{tab:experiments} compares \mobius to six prior algorithms using the SHD metric. Every row corresponds to a different experiment that alters one particular hyperparameter of the default setting, which is the first row with values of the blue column as explained above. For example, the second row only changes the graph type from Erdös-Renyi to scale-free, while keeping all other settings. Note that in the last row, fixed support means that the location of the root causes is fixed for every sample in the data.

Since the graphs have 400 edges, an SHD $\ll 400$ can be considered as good and beyond 400 can be considered a failure. In row~3 this threshold is analogously set to $1000$. The failure cases are indicated as such in Table~\ref{tab:experiments}, or the SHD is shown in green if they still achieve a good $\text{TPR} > 0.8$. In Appendix~\ref{app:sec:tables} we report the TPR, SID, and the runtimes of all methods. 

In the first three rows, we examine scenarios that perfectly match the condition~\eqref{eq:fewrootcauses} of few root causes. For the default settings (row 1), scale-free graphs (row 2), and different noise distribution (row 3) \mobius performs best and almost perfectly detects all edges. The next experiments alter parameters that deteriorate the few root causes assumption. Row~4 considers DAGs with average degree $10$, i.e., about $1000$ edges. High degree can amplify the measurement noise and hinder the root causes assumption~\eqref{eq:fewrootcauses}. However, our method still performs best, but even larger degrees decay the performance of all methods as shown in experiment~5 below.
The standardization of data (row 5) is generally known to negatively affect algorithms with continuous objectives \citep{reisach2021beware} as is the case here. Also, standardization changes the relative scale of the root causes and as a consequence affects \eqref{eq:fewrootcauses}.
Row~6 considers edge weights $>1$ which amplifies the measurement noise in \eqref{eq:fewrootcauses} and our method fails.
Higher standard deviation in the root causes noise (row 7) or explicitly higher density in $\mb{C}$ (row 8) decreases performance overall. However, In both cases \mobius still discovers a significant amount of the unknown edges.
For a small number of samples (row 9) most methods fail. Ours achieves a high TPR but requires more data to converge to the solution as we will see in experiment~2 below. Row~10 is out of the scope of our method and practically all methods fail.

Overall, the performance of \mobius depends heavily on the degree to which the assumption of few root causes \eqref{eq:fewrootcauses} is fulfilled. The parameter choices in the table cover a representative set of possibilities from almost perfect recovery (row 1) to complete failure (row 10). 

\mobius is also significantly faster than the best competitors GOLEM and NOTEARS with typical speedups in the range of $10$--$50\times$ as shown in Table~\ref{tab:experiments_time}. It is worth mentioning that even though LiNGAM provides the identifiability Theorem~\ref{th:identifiability} of our proposed setting, it is not able to recover the true DAG. While both DirectLiNGAM and LiNGAM come with theoretical guarantees for their convergence, these require conditions, such as infinite amount of data, which in practice are not met. We include a more extensive reasoning for their subpar performance in Appendix~\ref{app:sec:lingam} together a particularly designed experiment for LiNGAM and DirectLiNGAM.

\begin{table*}[t]
\footnotesize
    \centering
    \caption{Runtime [seconds] report of the top-performing methods in Table~\ref{tab:experiments}.}
    \resizebox{0.8\textwidth}{!}{%
    \begin{tabular}{@{}rllllll@{}}
    \toprule
    & Hyperparameter & \color{NavyBlue}{Default} & Change  &
        \mobius (ours)  & GOLEM & NOTEARS   \\
    \midrule
    1.  & \color{NavyBlue}{Default settings}    &                                                                                              &                                                                                        &  $\bm{10\pm1.8}$  &  $    529\pm210 $  &  $    796\pm185 $  \\ 
2.  & Graph type                            & \color{NavyBlue}{Erd\"os-Renyi}                                                              &  Scale-free                                                                            &  $\bm{11\pm1.1}$  &  $    460\pm184 $  &  $    180\pm7.2 $  \\ 
3.  & $\mb{N}_c,\mb{N}_x$ distribution      & \color{NavyBlue}{Gaussian}                                                                   &   Gumbel                                                                               &  $\bm{8.2\pm0.7}$  &  $    349\pm125 $  &  $    251\pm48 $  \\ 
4.  & Edges / Vertices                      & \color{NavyBlue}{$4$}                                                                        &   $10$                                                                                  &  $\bm{14\pm1.0}$  &  $    347\pm121 $  &  $    471\pm82 $  \\ 
5.  & Samples                               & \color{NavyBlue}{$n=1000$}                                                                   &   $n=100$                                                                              &  $\bm{13\pm0.7}$  &  $    194\pm9.6 $  &  $    679\pm72 $  \\ 
6.  & Standardization                       & \color{NavyBlue}{No}                                                                         &   Yes                                                                                  &  $\bm{11\pm1.9}$  &  $    326\pm145 $  &  $    781\pm76 $  \\ 
7.  & Larger weights in $\mb{A}$            & \color{NavyBlue}{$(0.1,0.9)$}                                                                &   $(0.5, 2)$                                                                           &  $\bm{8.4\pm0.6}$  &  $    431\pm177 $  &  $    2834\pm228 $  \\ 
8.  & $\mb{N}_c,\mb{N}_x$ deviation         & \color{NavyBlue}{$\sigma=0.01$}                                                              &  $\sigma=0.1$                                                                          &  $\bm{8.7\pm0.7}$  &  $    309\pm63 $  &  $    433\pm53 $  \\ 
9.  & Dense root causes $\mb{C}$            & \color{NavyBlue}{$p=0.1$}                                                                    &   $p=0.5$                                                                              &  $\bm{9.1\pm0.7}$  &  $    334\pm121 $  &  $    427\pm35 $  \\ 
10. & Fixed support                         & \color{NavyBlue}{No}                                                                         &   Yes                                                                                  &  $\bm{15\pm2.0}$  &  $    360\pm142 $  &  $    669\pm386 $  \\ 
\bottomrule
    \end{tabular}
    }
    \vspace{-10pt}
    \label{tab:experiments_time}
\end{table*}

\mypar{Varsortability} In Table~\ref{tab:experiments} we also include the varsortability for each experimental setting. Our measurements for Erdös-Renyi graphs (all rows except row 2) are typically $1$--$2\%$ lower than those reported in \citep[Appendix G.1]{reisach2021beware} for linear SEMs, but still high in general. However, the trivial variance-sorting method sortnregress, included in Appendix~\ref{appendix:sec:experiments}, fails overall. Note again that for fixed sparsity support (last row), all methods fail and varsortability is lower. Therefore, in this scenario, our data generating process poses a hard problem for DAG learning.

\mypar{Experiment 2: Varying number of nodes or samples} In this experiment we first benchmark \mobius with varying number of nodes (and thus number of edges) in the ground truth DAG, while keeping the data samples in $\mb{X}$ equal to $n=1000$. Second, we vary the number of samples, while keeping the number of nodes fixed to $d=100$. All other parameters are set to default. The results are shown in Fig.~\ref{fig:plots} reporting SHD, SID, and the runtime. 

\begin{figure*}[t]
    \centering
    \begin{subfigure}{0.27\columnwidth}
    \includegraphics[width=\linewidth]{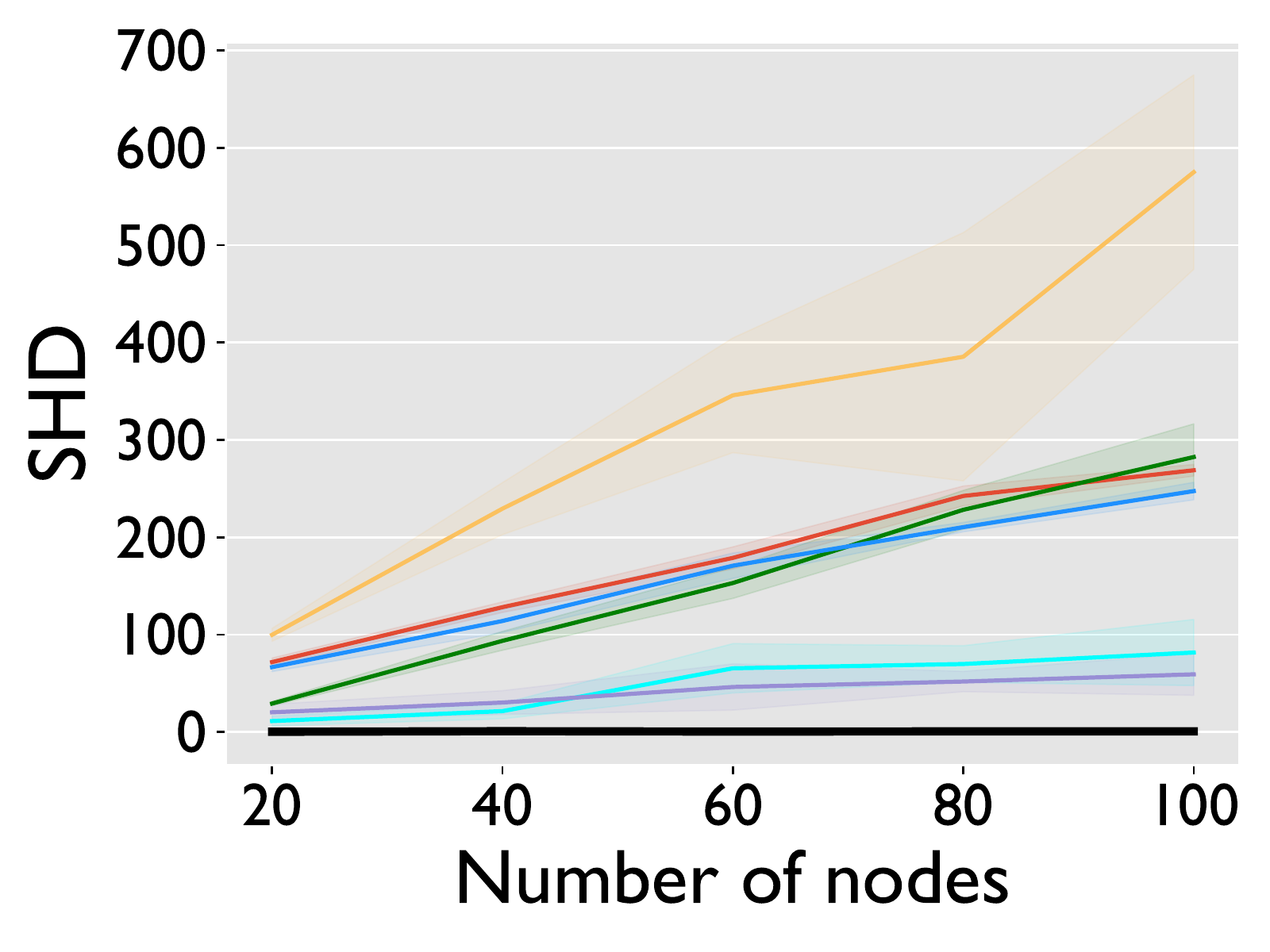}
    
    \includegraphics[width=\linewidth]{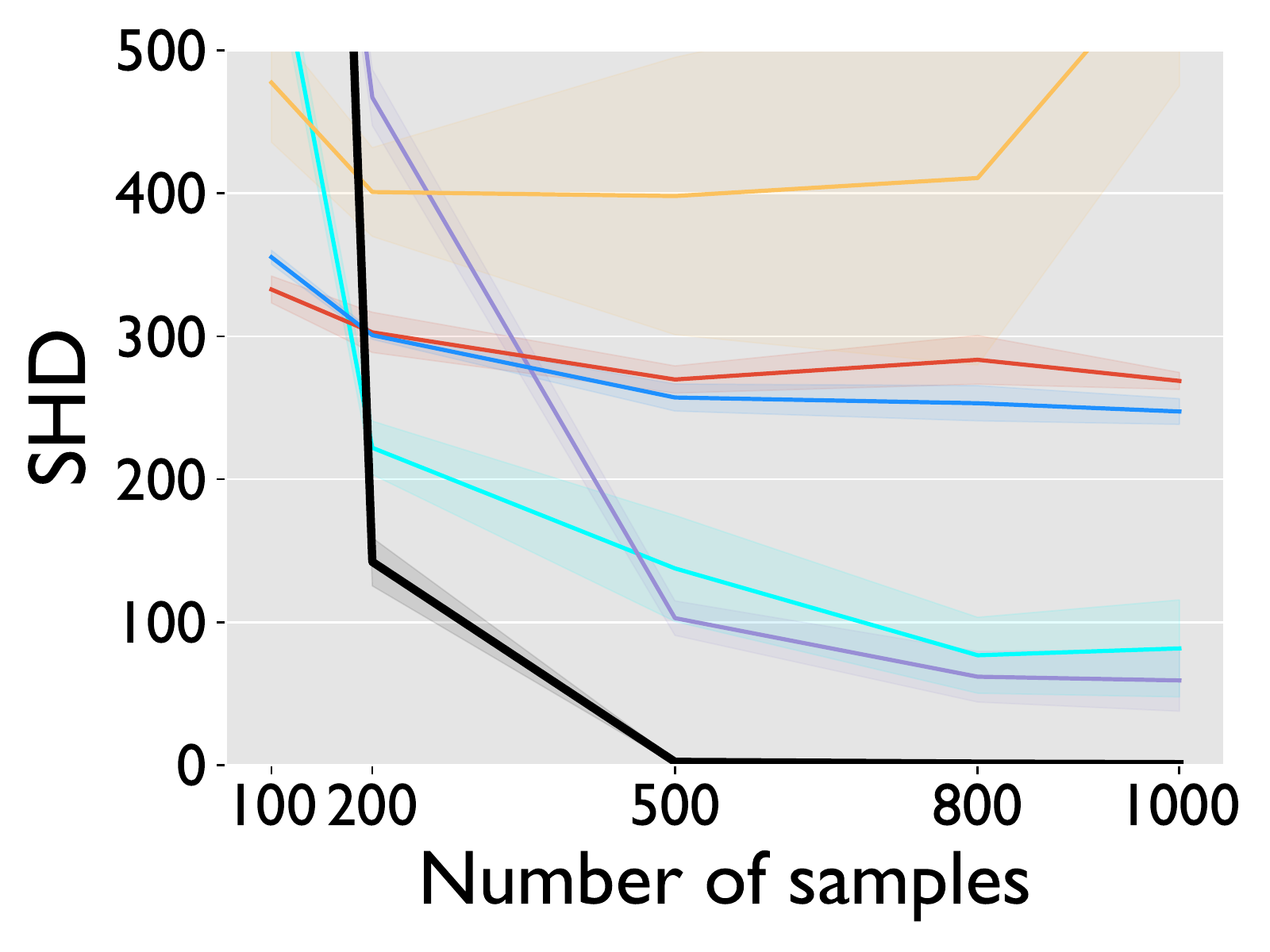}
    \caption{SHD ($\downarrow$)}
    \label{subfig:SHD}
    \end{subfigure}
    \begin{subfigure}{0.27\columnwidth}
    \includegraphics[width=\linewidth]{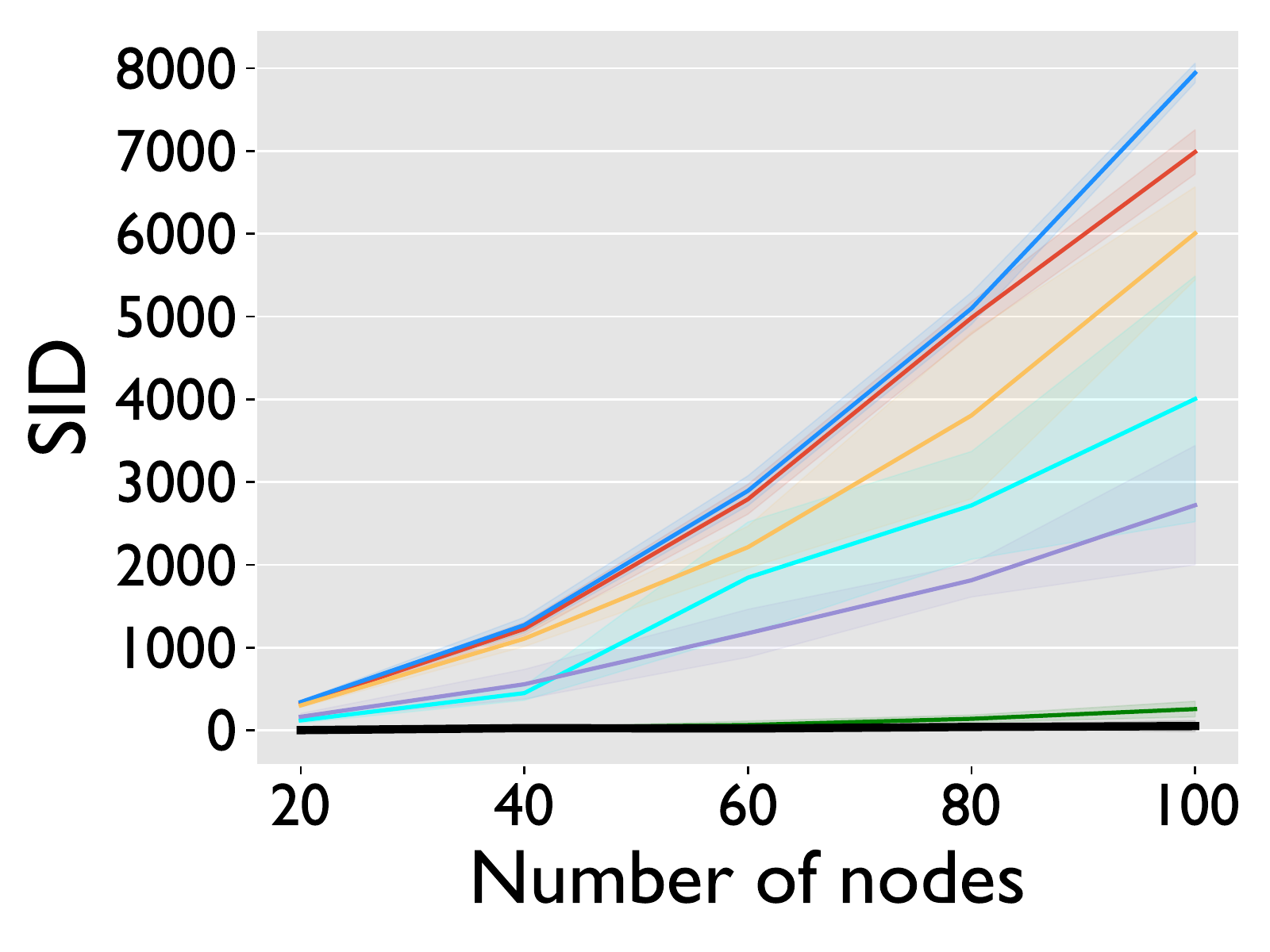}
    
    \includegraphics[width=\linewidth]{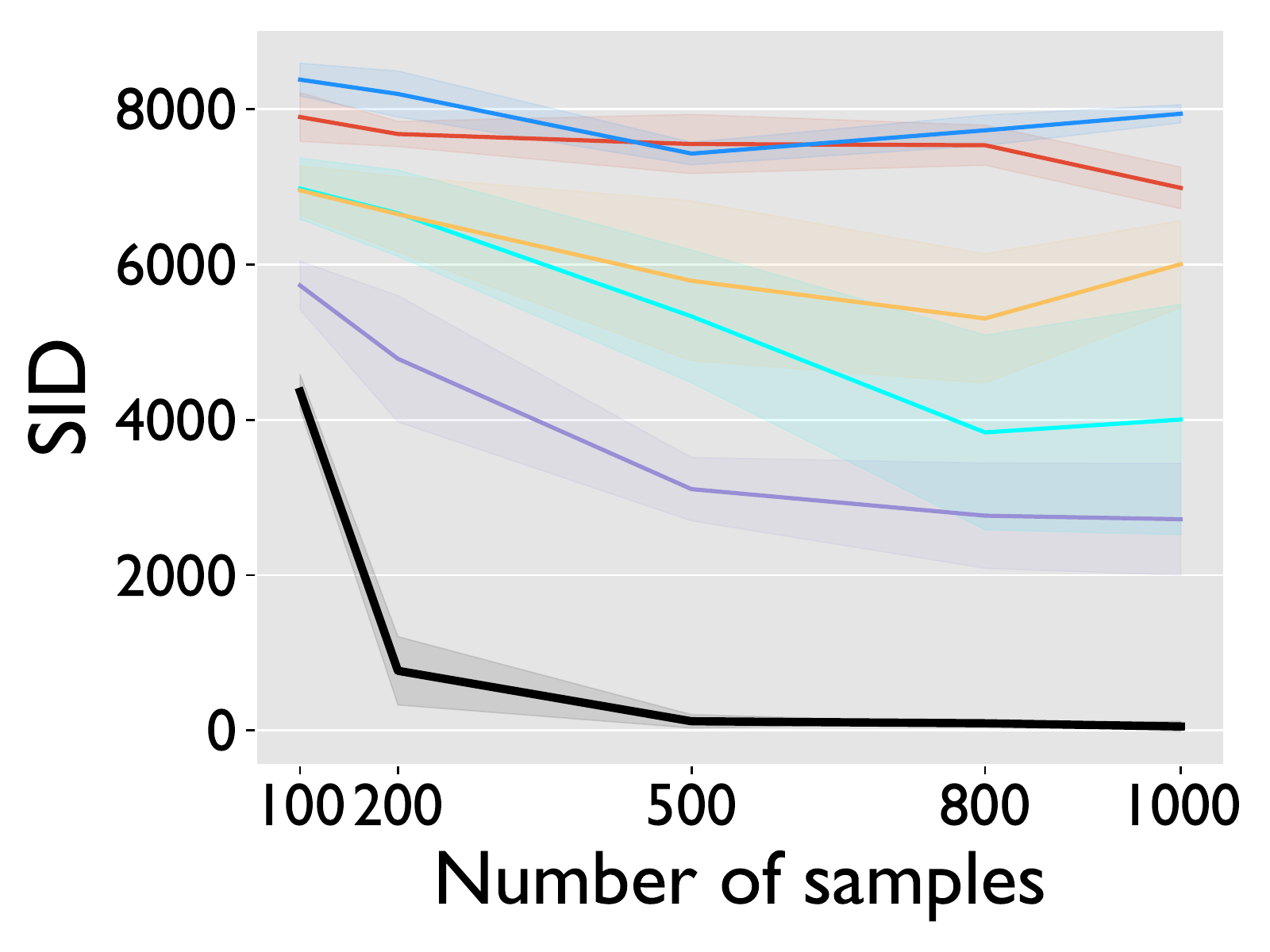}
    \caption{SID ($\downarrow$)}
    \label{subfig:SID}
    \end{subfigure}
    \begin{subfigure}{0.27\columnwidth}
    \includegraphics[width=\linewidth]{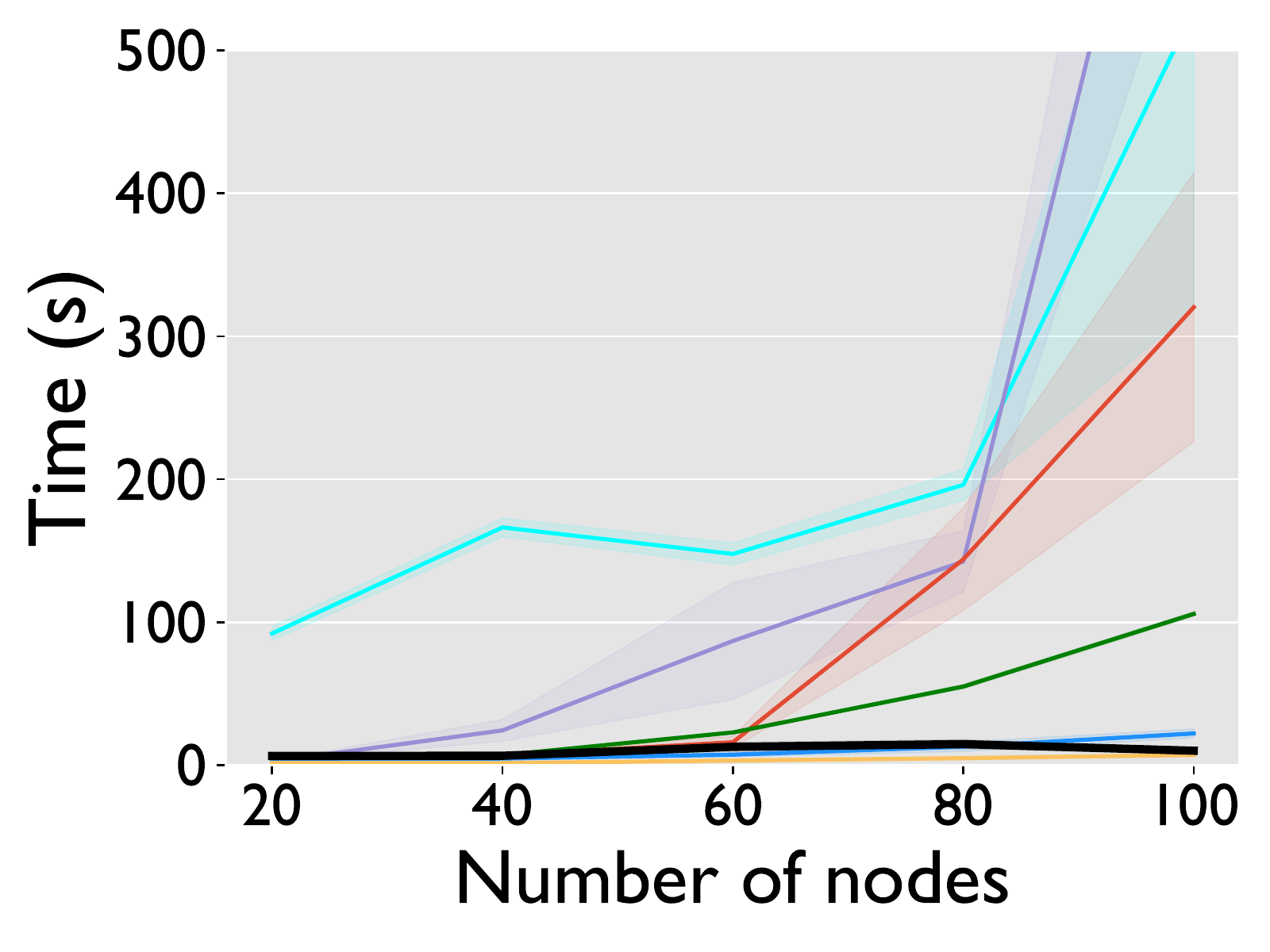}

    \includegraphics[width=\linewidth]{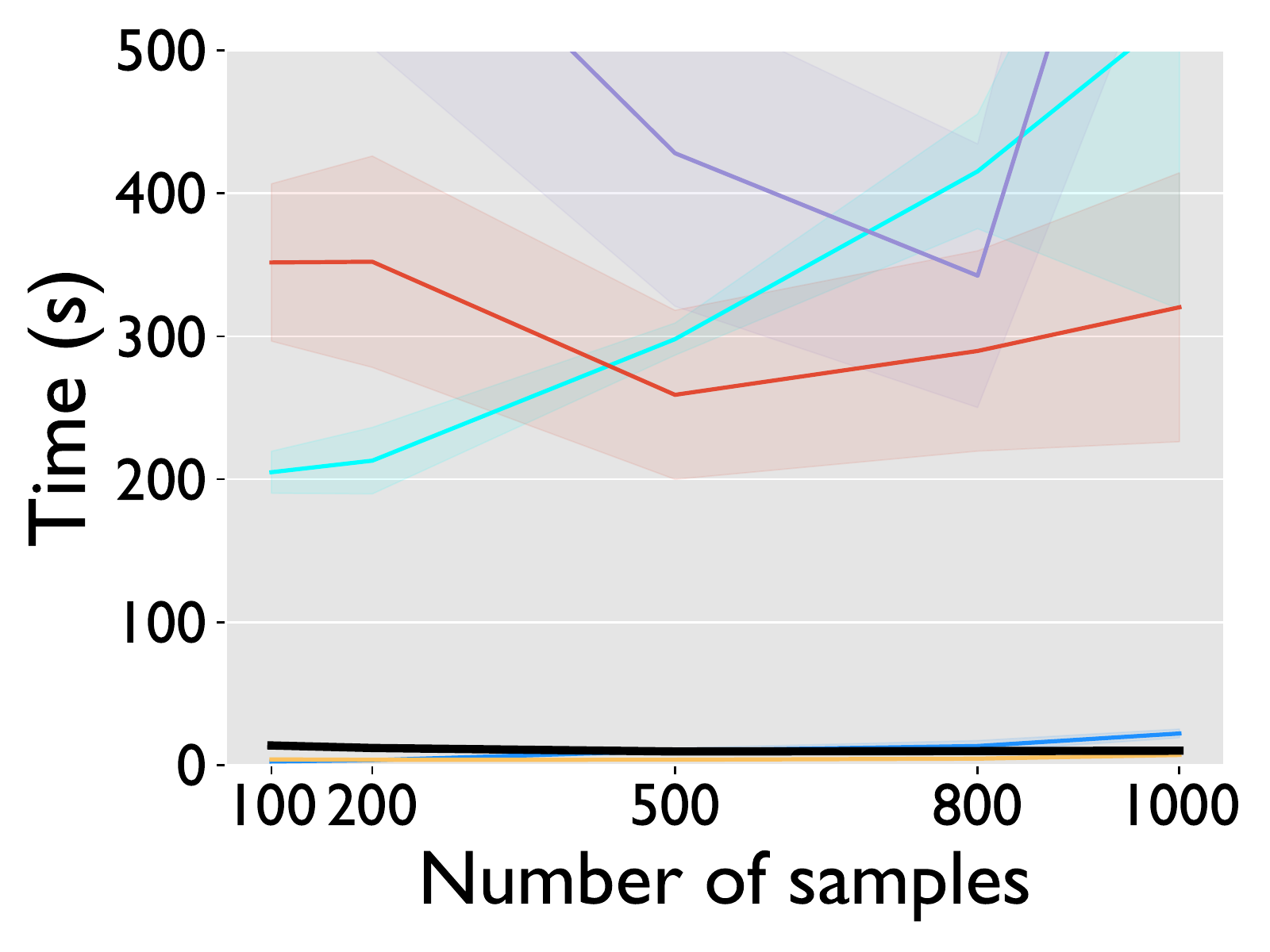}
    \caption{Runtime [s]}
    \label{subfig:time}
    \end{subfigure}    
    \hspace{6.5pt}
    \begin{subfigure}{0.15\columnwidth}
    \includegraphics[trim={7.8cm 0 .7cm 0}, clip, width=\linewidth]{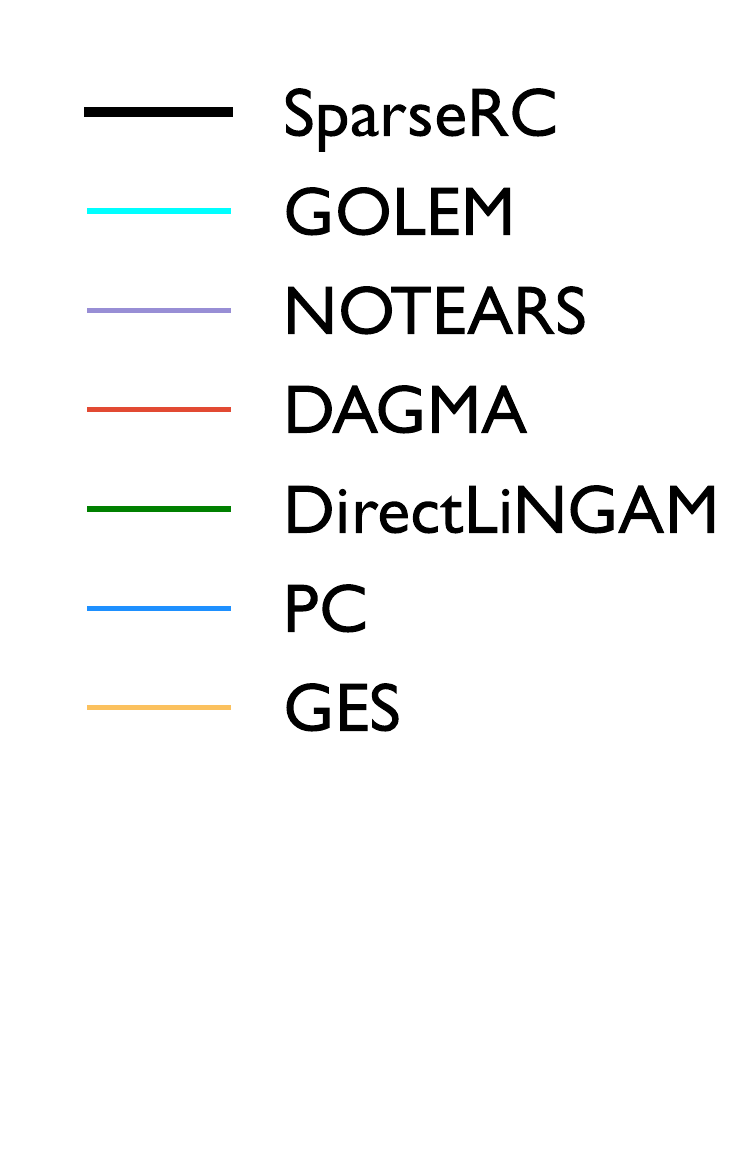}
    \end{subfigure}

    \begin{subfigure}{0.27\linewidth}
    \includegraphics[width=\linewidth]{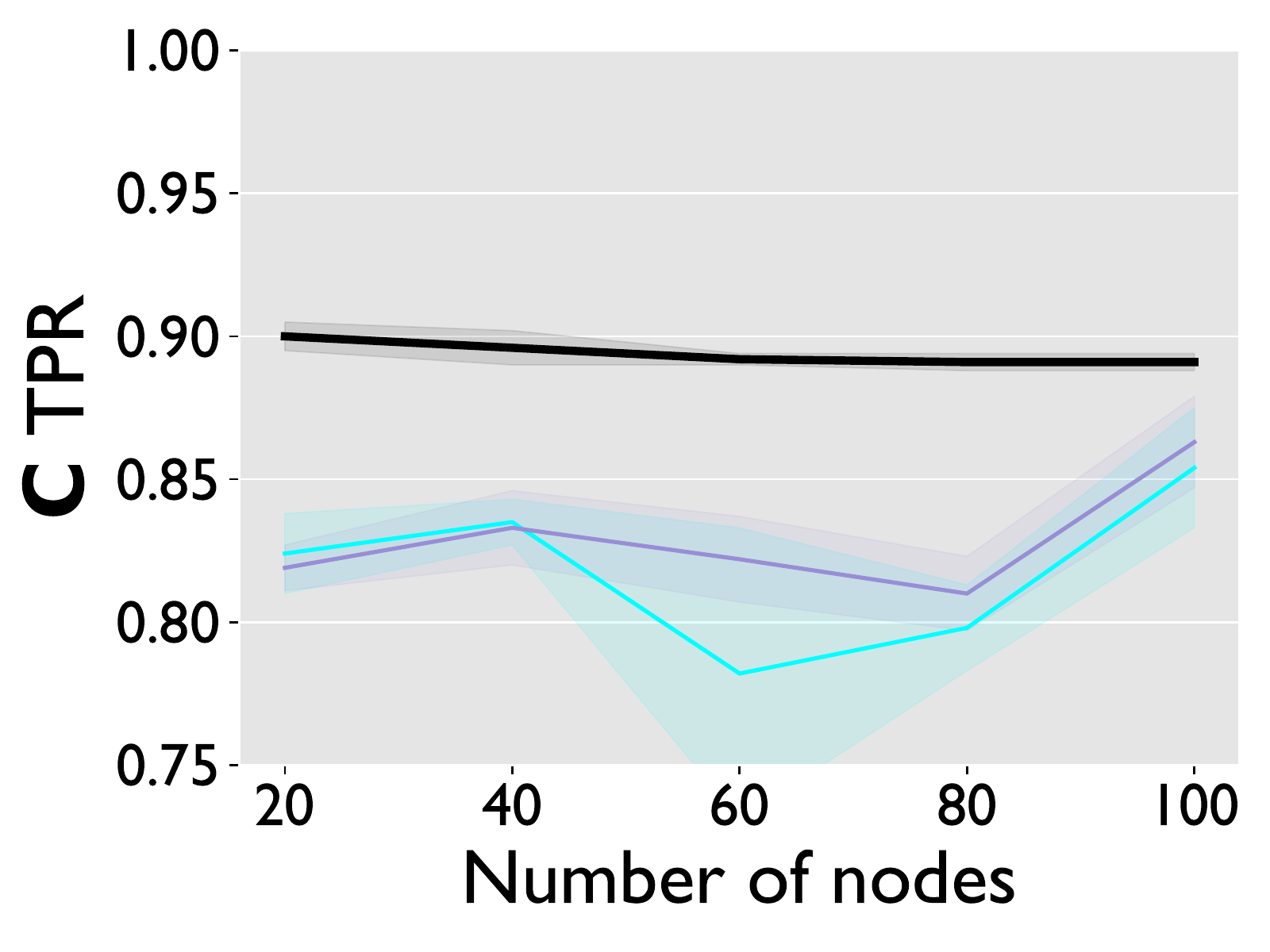}
    \caption{Root causes TPR ($\uparrow$)}
    \end{subfigure}
    \begin{subfigure}{0.27\linewidth}
    \includegraphics[width=\linewidth]{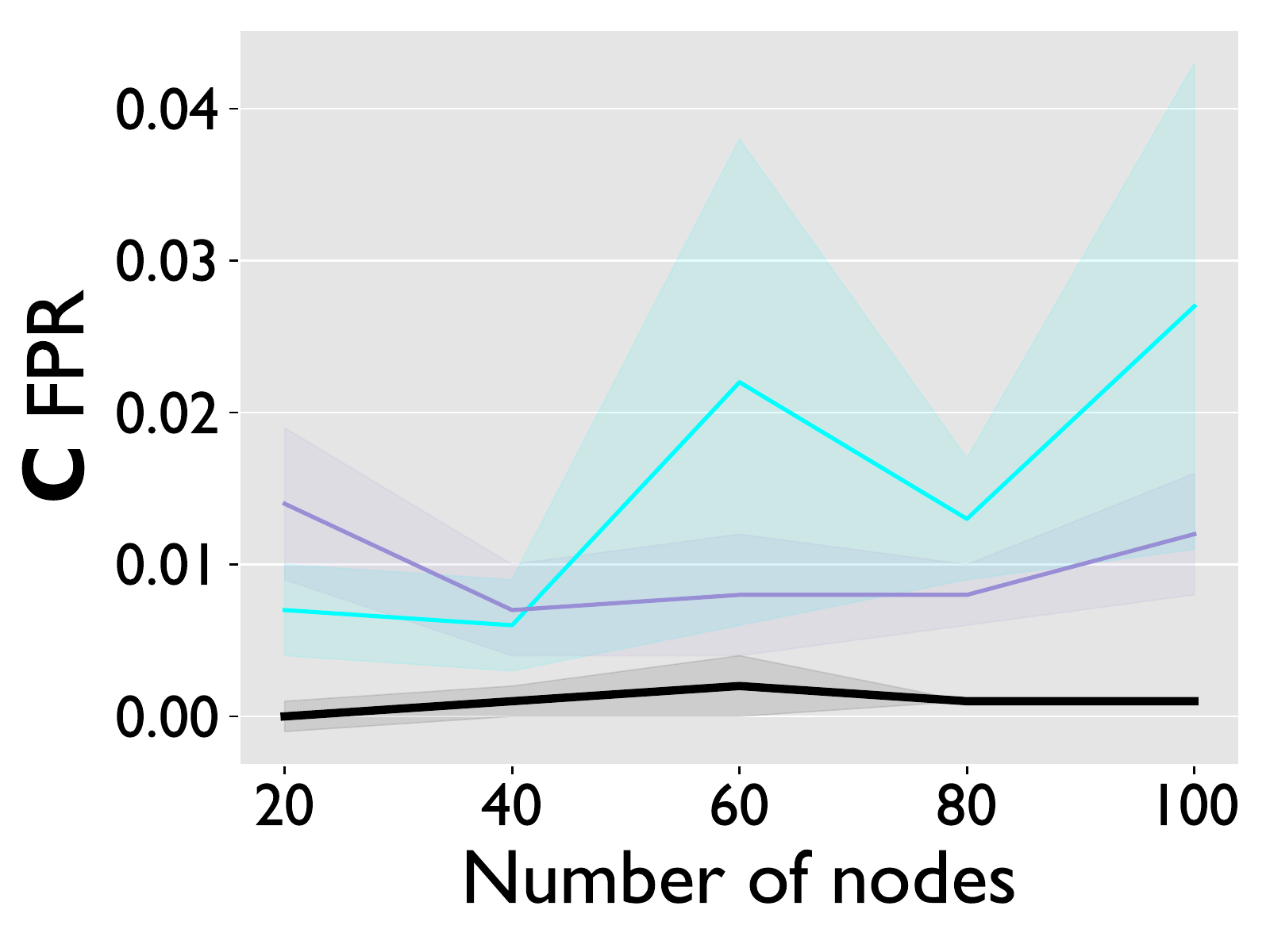}
    \caption{Root causes FPR ($\downarrow$)}
    \end{subfigure}
    \begin{subfigure}{0.27\linewidth}
    \includegraphics[width=\linewidth]{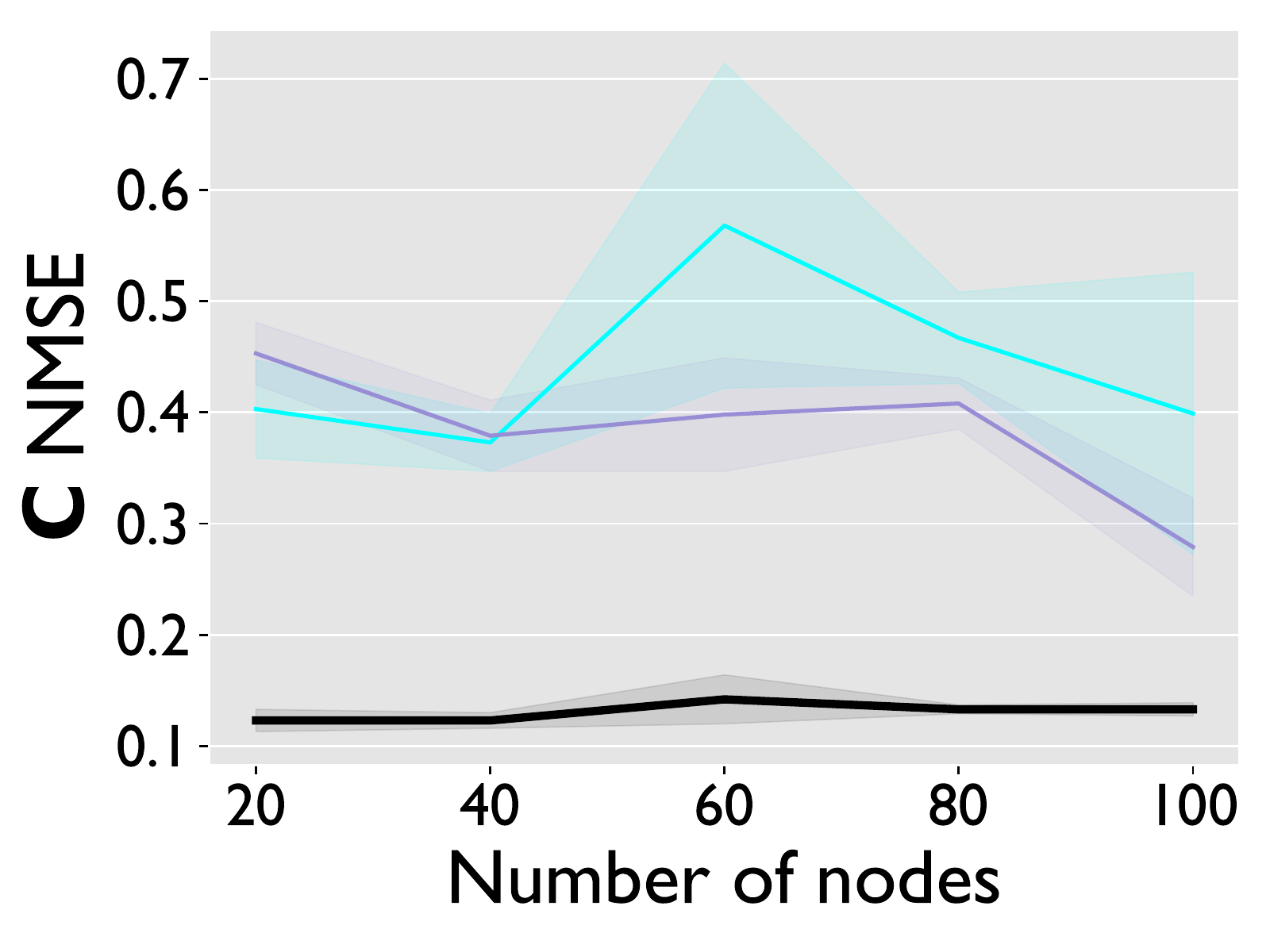}
    \caption{Root causes NMSE($\downarrow$)}
    \end{subfigure} 
    \hfill
    \begin{subfigure}{0.15\linewidth}
    \includegraphics[trim={9cm 0 0 0}, clip, width=\linewidth]{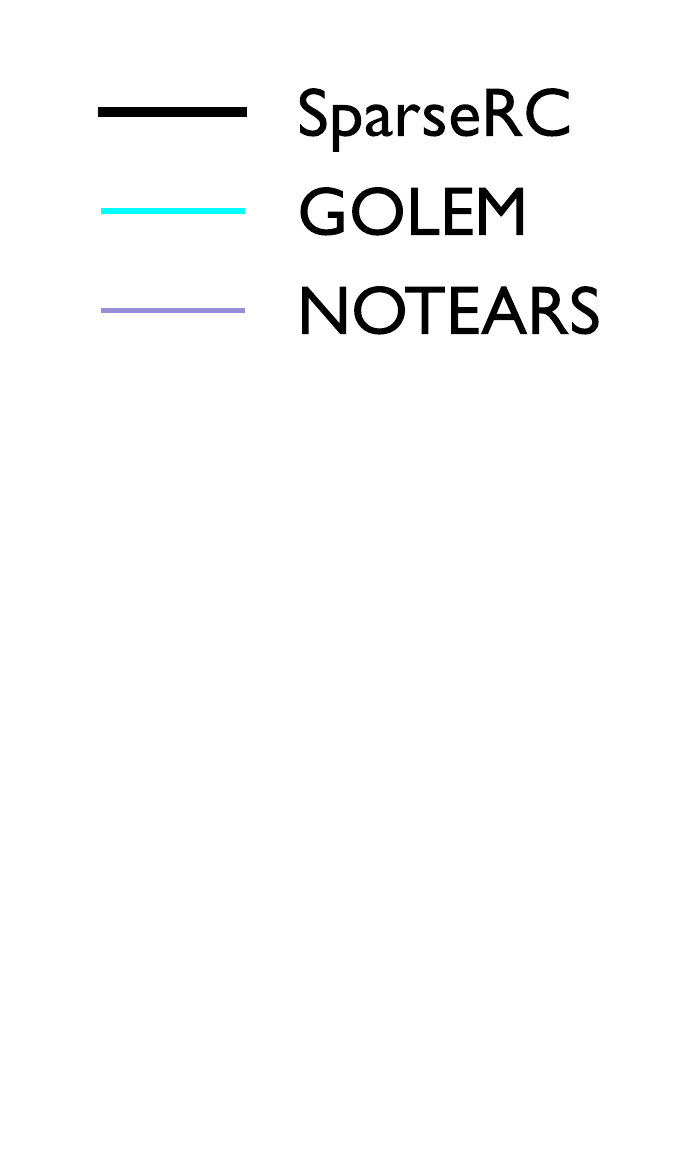}
    \vspace*{-30pt}
    \end{subfigure}
    \caption{Performance report on the default settings. (a,b,c) illustrate SHD, SID and runtime (lower is better) while varying the number of nodes with $1000$ samples (first row) or varying the number of samples with $100$ nodes (second row). (d, e) illustrate TPR and FPR of the estimated support of $\mathbf{C}$, and (f) reports the accuracy of estimating $\mb{C}$ as NMSE.}
    \label{fig:plots}
\end{figure*}

When varying the number of nodes (first row in Fig.~\ref{fig:plots}) \mobius, GOLEM, and NOTEARS achieve very good performance whereas the other methods perform significantly worse. As the number of nodes increases, \mobius performs best both in terms of SHD and SID while being significantly faster than GOLEM and NOTEARS. When increasing the number of samples (second row) the overall performances improve. For low number of samples \mobius, GOLEM and NOTEARS fail. Their performance significantly improves after $500$ samples where \mobius overall achieves the best result. The rest of the methods have worse performance. \mobius is again significantly faster than GOLEM and NOTEARS in this case. 

\mypar{Experiment 3: Learning the root causes} Where our method succeeds we can also recover the root causes that generate the data. Namely, if we recover a very good estimate of the true adjacency matrix via~\eqref{eq:MobiusDAGopt}, we may compute an approximation $\widehat{\mb{C}}$ of the root causes $\mb{C}$, up to noise, by solving~\eqref{eq:SEMfewcauses}:
\begin{equation}
    \widehat{\mathbf{C}}= \mathbf{C} + \mathbf{N}_c + \mathbf{N}_x\left(\mathbf{I} -\mathbf{A}\right) = \mathbf{X}\left(\mathbf{I} + \overline{\mathbf{A}}\right)^{-1}.
\end{equation}
In the last row of Fig.~\ref{fig:plots} we evaluate the top-performing methods on the recovery of the root causes (and the associated values) with respect to detecting their locations ($\mb{C}$ TPR and FPR) and recovering their numerical values using the  $\mb{C}$ NMSE metric. The support is chosen as the entries $(i,j)$ of $\mb{C}$ that are in magnitude within $10\%$ of the largest value in $\mb{C}$ (i.e., relatively large): $c_{ij} > 0.1\cdot \mb{C}_{\text{max}}$. We consider the default experimental settings.

\mypar{Experiment 4: Larger DAGs} In Table \ref{tab:large_graphs}, we investigate the scalability of the best methods \mobius, GOLEM, and NOTEARS from the previous experiments. We consider five different settings up to $d=3000$ nodes and $n=10000$ samples. We use the prior default settings except for the sparsity of the root causes, which is now set to $5\%$. The metrics here are SHD and the runtime . The results show that \mobius excels in the very sparse setting with almost perfect reconstruction, far outperforming the others including in runtime. \mobius even recovers the edge weights with an average absolute error of about $5\%$ for each case in Table~\ref{tab:large_graphs}, as shown in Appendix~\ref{app:sec:largeDAG}.

\begin{table}
\footnotesize
    \centering
    \caption{Performance of the top-performing methods on larger DAGs.}
    \begin{subtable}{.582\linewidth}
    \begin{tabular}{@{}lrrr@{}}
    \toprule
      Nodes $d$, samples $n$ &  \mobius  & NOTEARS & GOLEM\\
    \midrule
    $d=200,\,n=500$    & $22  $    & $155$    & $281$    \\
    $d=500,\,n=1000$   & $27   $    & $245$    & $574$    \\
    $d=1000,\,n=5000$  & $26   $  & $282$    & $699$    \\
    $d=2000,\,n=10000$ & $50   $    & $489$    &  time-out   \\
    $d=3000,\,n=10000$ & $134   $    & time-out    &  time-out   \\
    \bottomrule
    \end{tabular}
    \caption{SHD ($\downarrow$)}
    \end{subtable}
    \quad
    \begin{subtable}{.356\linewidth}
    \begin{tabular}{@{}rrr@{}}
    \toprule
      \mobius  & NOTEARS & GOLEM\\
    \midrule
    $21$    & $1061$  & $600$ \\
    $204$   & $5789$  & $5428$ \\
    $1085$ & $10104$  & $37290$ \\
    $7141$  & $46213$  & time-out \\
    $19660$ & time-out  & time-out \\
    \bottomrule
    \end{tabular}
    \caption{Runtime (s)}
    \end{subtable}
    \label{tab:large_graphs}
\end{table}

\mypar{Experiment 5: Varying average degree}
We evaluate the top-scoring methods on DAGs with higher average degree (dense DAGs). Note that this violates the typical assumption of sparse DAGs \citep{zheng2018notears}. In Fig.~\ref{fig:denseDAGs} we consider default settings with DAGs of average degree up to $20$. SID was not computed due to the presence of cycles in the output of the algorithms and TPR is reported instead. We conclude that for dense DAGs the performance decays, as expected.

\begin{figure}[t]
    \centering
    \begin{subfigure}{0.27\linewidth}
    \includegraphics[width=\linewidth]{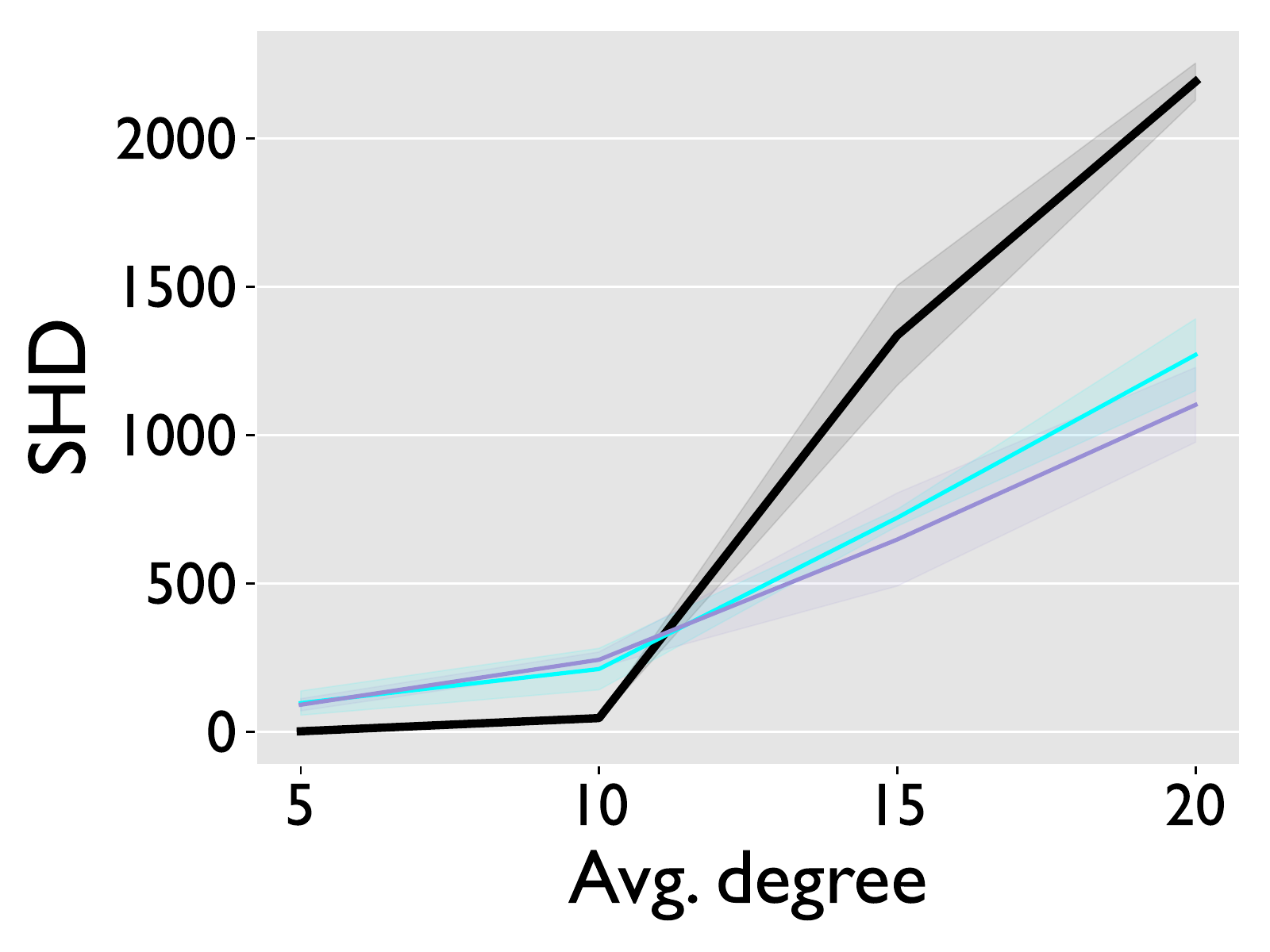}
    \caption{SHD ($\downarrow$)}
    \end{subfigure}
    \begin{subfigure}{0.27\linewidth}
    \includegraphics[width=\linewidth]{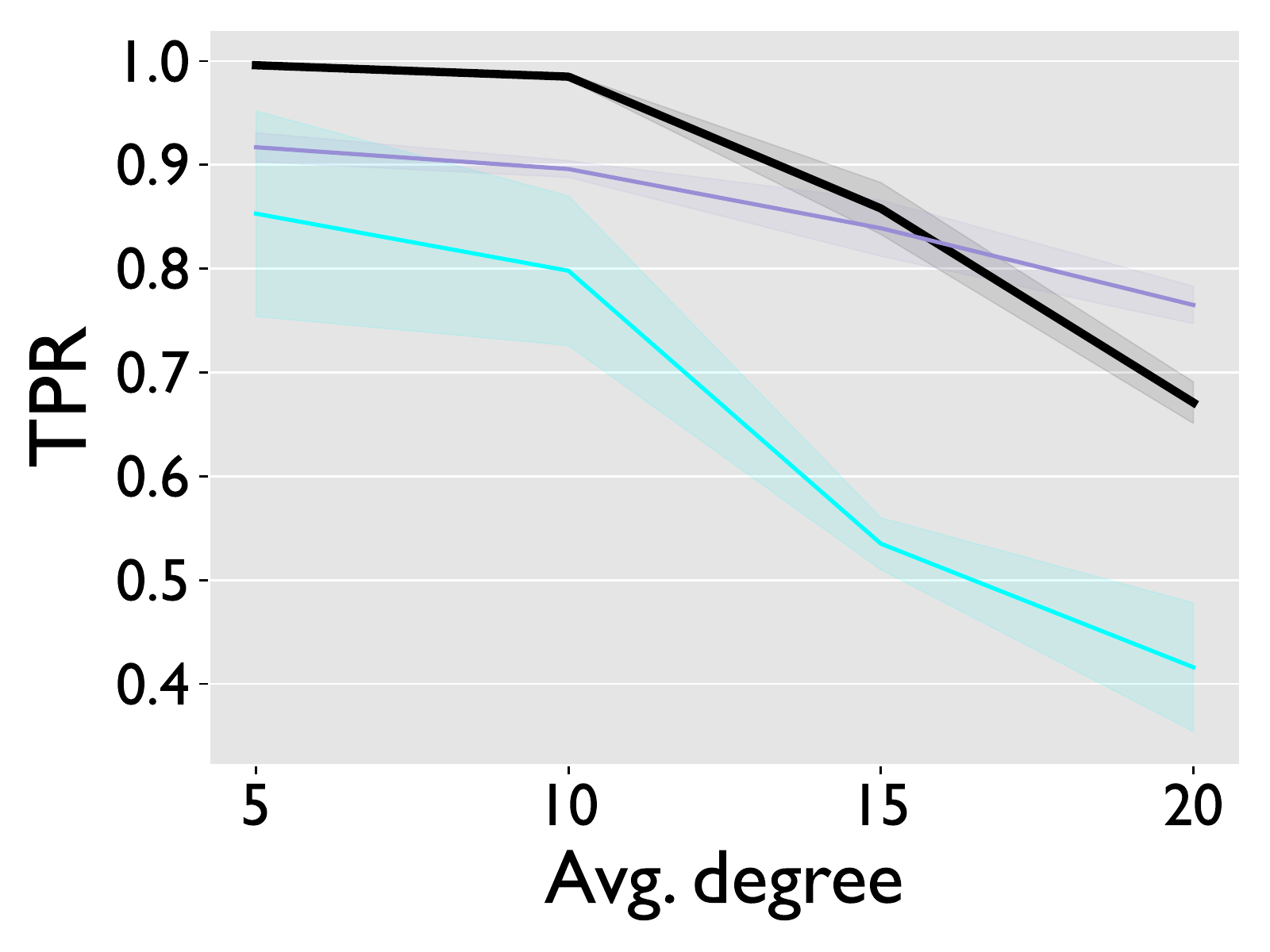}
    \caption{TPR ($\uparrow$)}
    \end{subfigure}
    \begin{subfigure}{0.27\linewidth}
    \includegraphics[width=\linewidth]{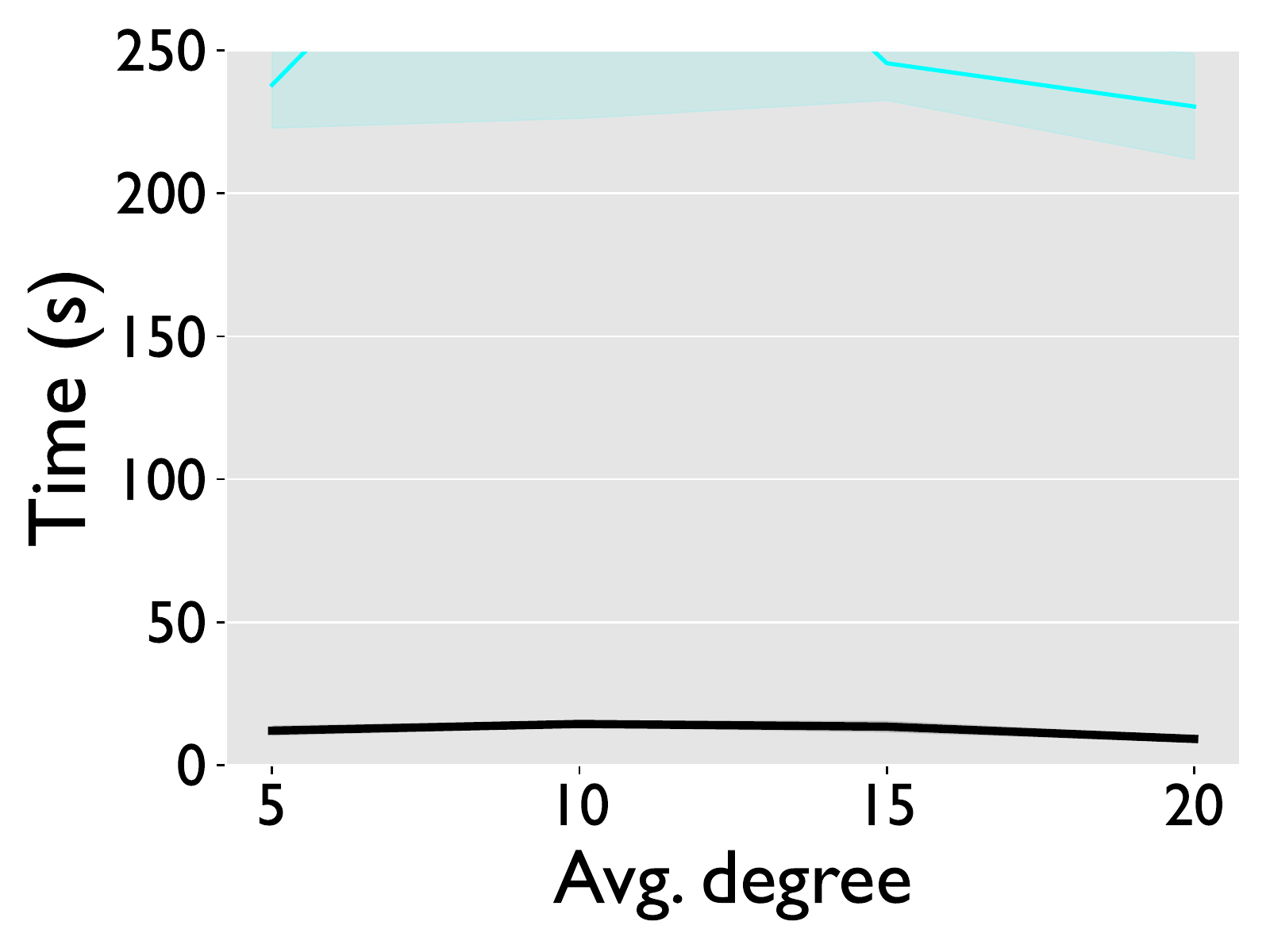}
    \caption{Runtime [s]}
    \end{subfigure}
    \begin{subfigure}{0.15\linewidth}
    \includegraphics[trim={9cm 0 0 0}, clip, width=\linewidth]{figures/plot_edges_legend_only.pdf}
    \end{subfigure}
    \caption{Evaluation of the top-performing methods on denser DAGs.}
    \label{fig:denseDAGs}
\end{figure}

\subsection{Evaluation on a real dataset}

We apply \mobius on the causal protein-signaling network data provided by \citet{sachs2005causal}. The dataset consists of $7466$ samples (we use the first $853$) for a network with $11$ nodes that represent proteins, and $17$ edges representing their interactions. It is small, but the task of learning it has been considered a difficult benchmark in~\citep{ng2020GOLEM, gao2021dagGAN, yu2019dagGNN, zheng2018notears}. It is not known whether the assumption of few root causes holds in this case. We report the performance metrics SHD and SID for the most successful methods in Table~\ref{tab:sachs}. The number of total edges is used to ensure that the output of the methods are non-trivial (e.g. empty graph).

The best SID is achieved by GOLEM (equal variance), which, however, has higher SHD. NOTEARS has the best SHD equal to $11$. Overall, \mobius performs well, achieving the closest number of edges to the real one with $16$ and a competitive SHD and SID. We also mention again the good performance of \mobius on a non-public biological dataset \citep{misiakos2023contest, chevalley2023contestreport} as part of the competition by \citet{chevalley2022causalbench}.

\begin{table}[t]
\vspace*{-12pt}
\footnotesize
    \centering
    \caption{Performance  of the top-performing methods on the dataset by \citet{sachs2005causal}.}
    \begin{tabular}{@{}lccc@{}}
    \toprule
       &  SHD $\downarrow$ &  SID $\downarrow$ & Total edges \\
    \midrule
    \mobius         & $15$      & $45$        & $16$ \\
    NOTEARS         & $\bm{11}$ & $44$        & $15$      \\
    GOLEM           & $21$      & $\bm{43}$        & $19$      \\
    DAGMA         & $14$      & $46$        & $11$ \\
    \bottomrule
    \end{tabular}
    \label{tab:sachs}
    \vspace*{-10pt}
\end{table}

\section{Broader Impact and Limitations}

\vspace*{-3pt}
Our method inherits the broader impact of prior DAG learning methods including the important caveat for practical use that the learned DAG may not represent causal relations, whose discovery requires interventions. Further limitations that we share with prior work include (a) Learning DAGs beyond 10000 nodes are out of reach, (b) there is no theoretical convergence guarantee for the case that includes noise, (c) empirically, the performance drops in low varsortability, (d) our method is designed for linear SEMs like most of the considered benchmarks.

A specific limitation of our contribution is that it works well only for few root causes of varying location in the dataset. This in addition implies that the noise must have low deviation and the graph to have low average degree and weights of magnitude less than one. Also, in our experiments, we only consider root causes with support that follows a multivariate Bernoulli distribution.

\section{Conclusion} 

\vspace*{-3pt}
We presented a new perspective on linear SEMs by introducing the notion of root causes. Mathematically, this perspective translates (or solves) the recurrence describing the SEM into an invertible linear transformation that takes as input DAG data, which we call root causes, to produce the observed data as output. Prior data generation for linear SEMs assumed a dense, random valued input vector. In this paper, we motivated and studied the novel scenario of data generation and DAG learning for few root causes, i.e., a sparse input with noise, and noise in the measurement data. Our solution in this setting performs significantly better than prior algorithms, in particular for high sparsity where it can even recover the edge weights, is scalable to thousands of nodes, and thus expands the set of DAGs that can be learned in real-world scenarios where current methods fail.

\small
\bibliography{main}
\bibliographystyle{abbrvnat}

\newpage

\appendix
\section{Few Root Causes Assumption}\label{appendix:sec:frc}
\begin{lemma}
    Consider random row vector $C\in\RR^{1\times d}$ of root causes that are generated as in Section \ref{sec:exp_default} with probability $p=0.1$ being non-zero and value taken uniformly at random from $(0,1)$. Also, let the noise vectors $N_c,N_x\in\RR^{1\times d}$ be Gaussian $N_c,N_x \sim\ml{N}\left(\bm{0},σ\mb{I}\right)$ with $σ=0.01$ and the DAG has average degree $δ(G) = 4$ and weights $a_{ij}\in [-1,1]$. Then we show that for $\epsilon = \delta = 0.1$:

    \begin{align}
\frac{\expv{\|{C}\|_0}}{d} &\leq \epsilon \quad\text{(few root causes),} \notag\\
    \frac{\expv{\left\|{N}_c + {N}_x\left(\mb{I} - \mb{A}\right)\right\|_1}}{\expv{\|{C}\|_1}}&\leq\delta \quad\text{(negligible noise),}
    \end{align}
\end{lemma}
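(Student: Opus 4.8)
The plan is to bound each of the two expectations separately and reduce everything to elementary computations with the given numerical parameters. For the first inequality, I would note that $\|C\|_0 = \sum_{i=1}^d \mathbf{1}[C_i \neq 0]$, so by linearity of expectation $\expv{\|C\|_0} = d\cdot p$; dividing by $d$ gives exactly $p = 0.1 = \epsilon$, so the bound holds with equality. This part is immediate.

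For the second (negligible noise) inequality, I would handle numerator and denominator independently. The denominator $\expv{\|C\|_1} = \sum_i \expv{|C_i|}$: each $C_i$ is nonzero with probability $p$ and, conditioned on being nonzero, uniform on $(0,1)$ with mean $1/2$, so $\expv{|C_i|} = p/2$ and $\expv{\|C\|_1} = dp/2$. For the numerator, use the triangle inequality $\left\|N_c + N_x(\mathbf{I}-\mathbf{A})\right\|_1 \le \|N_c\|_1 + \|N_x(\mathbf{I}-\mathbf{A})\|_1$, take expectations, and bound each term. Since $N_c \sim \ml{N}(\bm 0, \sigma \mathbf{I})$ entrywise, $\expv{\|N_c\|_1} = d\cdot\sigma\sqrt{2/\pi}$. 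For the second term I would write $\big(N_x(\mathbf{I}-\mathbf{A})\big)_j = N_{x,j} - \sum_i N_{x,i} a_{ij}$, so its absolute expectation is at most $\expv{|N_{x,j}|} + \sum_i |a_{ij}|\,\expv{|N_{x,i}|} = \sigma\sqrt{2/\pi}\,\big(1 + \sum_i |a_{ij}|\big)$; summing over $j$ and using that $\sum_{i,j}|a_{ij}| \le (\text{number of edges}) = d\cdot\delta(G)/2 \cdot$ (with $|a_{ij}|\le 1$) gives $\expv{\|N_x(\mathbf{I}-\mathbf{A})\|_1} \le \sigma\sqrt{2/\pi}\,\big(d + d\,\delta(G)/2\big)$. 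Collecting the pieces, the ratio is at most
\[
\frac{\sigma\sqrt{2/\pi}\,\big(2d + d\,\delta(G)/2\big)}{dp/2} \;=\; \frac{2\sigma\sqrt{2/\pi}\,\big(2 + \delta(G)/2\big)}{p},
\]
and plugging in $\sigma = 0.01$, $\delta(G) = 4$, $p = 0.1$ yields a value well below $0.1 = \delta$, completing the bound.

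The one modeling subtlety I would be careful about is how $\mathbf{A}$ enters: the bound on $\sum_{i,j}|a_{ij}|$ uses that the average degree $\delta(G)=4$ fixes the edge count at $2d$ and each weight satisfies $|a_{ij}|\le 1$, so $\sum_{i,j}|a_{ij}|\le 2d$ — this is where the two hypotheses "average degree $4$" and "weights in $[-1,1]$" are jointly used, exactly as the excerpt flags ("the latter two conditions bound the amplification of $N_x$"). I would also note whether $\mathbf{A}$ is to be treated as fixed (deterministic) or averaged over; either reading works since the edge-count-times-max-weight bound is distribution-free, but I would state the convention explicitly. The computation is otherwise routine; there is no real obstacle beyond keeping the constants straight and making sure the triangle-inequality slack is not so loose that the final number exceeds $0.1$ — a quick numerical check ($2\cdot 0.01\cdot 0.7979\cdot 4 / 0.1 \approx 0.64$… wait, this is larger than $\delta$), which suggests the intended argument must be tighter, e.g. exploiting that most columns of $\mathbf{A}$ have few nonzeros so $\sum_i|a_{ij}|$ is typically $O(\delta(G))$ only on average and the per-column blow-up is smaller, or using that $\expv{|C_i|}$ should not be divided by the full $d$ but the bound is stated in expectation form. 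I would therefore double-check the exact normalization in \eqref{eq:fewrootcauses} (the $/nd$ factors and the $\|\mathbf{C}\|_1/\|\mathbf{C}\|_0$ denominator, which equals the conditional mean $1/2$, not $p/2$) before finalizing constants — reconciling these normalizations is the step most likely to need care.
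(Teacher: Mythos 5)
Your approach is the same as the paper's: linearity of expectation for the $\ell_0$ part, triangle inequality plus the folded-normal mean $\sigma\sqrt{2/\pi}$ for the noise terms, and an edge-count-times-max-weight bound for the amplification by $\mb{I}-\mb{A}$. The place where you correctly get stuck is the denominator, and your diagnosis is right: the lemma writes $\expv{\|C\|_1}$, whose literal value is $dp/2 = 0.05d$, but the paper's proof sets $\expv{\|C\|_1} = 0.5d$, i.e.\ it silently uses $d$ times the average magnitude of a \emph{nonzero} entry — the normalization $\|\mb{C}\|_1/\|\mb{C}\|_0$ appearing in \eqref{eq:fewrootcauses} of the main text. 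With that reading the ratio becomes $6d\sigma\sqrt{2/\pi} \,/\, (0.5d) = 12\sigma\sqrt{2/\pi} \approx 0.096 < 0.1$ and the bound (barely) closes; with the literal $\expv{\|C\|_1}$ it is off by a factor of $1/p = 10$ and fails, exactly as your numerical check shows. So the fix is not a tighter numerator estimate (no tightening recovers a factor of $10$) but replacing the denominator by $d\cdot\expv{\|C\|_1/\|C\|_0} = d/2$, consistent with \eqref{eq:fewrootcauses}; as written, the appendix statement and its proof are not mutually consistent, and you were right to flag this rather than force the constants.

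One smaller point: the paper takes the number of edges to be $\delta(G)\cdot d = 4d$ (the experiments set edges/vertices $=4$), so its bound on the amplification term is $(\delta(G)+1)\expv{\|N_x\|_1} = 5d\sigma\sqrt{2/\pi}$ and the total numerator is $6d\sigma\sqrt{2/\pi}$, whereas your convention $|E| = d\,\delta(G)/2 = 2d$ gives $3d\sigma\sqrt{2/\pi}$ and a total of $4d\sigma\sqrt{2/\pi}$. This only moves the final constant from roughly $0.096$ to roughly $0.064$ and does not affect correctness, but you should state the convention explicitly, since under the paper's convention the margin below $\delta=0.1$ is thin.
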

\begin{proof}
    From the Bernoulli distribution for the support of the root causes we get 
    \begin{equation}
        \expv{\|C\|_0} = \frac{pd}{d} = 0.1 
    \end{equation}
    Also the root causes take uniform values in $(0,1)$ and have expected value $=0.5$. Therefore,
    \begin{equation}
        \expv{\|C\|_1} = 0.5d
    \end{equation}
    Finally, we have that: 
    \begin{align}
        \expv{\left\|{N}_c + {N}_x\left(\mb{I} - \mb{A}\right)\right\|_1} & \leq \notag\\ 
        \expv{\|N_c\|_1} + \expv{\|N_x\left(\mb{I} - \mb{A}\right)\|_1} & \leq \\ 
        \expv{\|N_c\|_1} + (δ(G) + 1)\expv{\|N_x\|_1} &= \notag\\
        6dσ\sqrt{\frac{2}{π}} 
    \end{align}
    The computation of the expected value follows from the mean of the Folded normal distribution \citep{foldedDist}. Dividing the last two relations gives the required relation
    \begin{align}
    \frac{\expv{\left\|{N}_c + {N}_x\left(\mb{I} - \mb{A}\right)\right\|_1}}{\expv{\|{C}\|_1}}&\leq 0.1\cdot\frac{6}{5}\sqrt{\frac{2}{π}} <\delta 
    \end{align}
    
\end{proof}

\section{Additional experimental results}
\label{appendix:sec:experiments}

\subsection{Different application scenarios}
\label{app:sec:tables}
\mypar{Experiment 1: Different application scenarios} We present additional experimental results to include more metrics and baselines. The results support our observations and conclusions from the main text. First, we expand Table~\ref{tab:experiments} reporting the SHD metric and provide the computation of SHD, TPR, SID and runtime in the tables below. Those further include the methods LiNGAM \citep{shimizu2006lingam}, CAM \citep{buhlmann2014cam}, DAG-NoCurl \citep{DAGnoCurl2021}, fGES \citep{ramsey2017fGES}, sortnregress \citep{reisach2021beware} and MMHC \citep{tsamardinos2006MMHC} .

\begin{table*}[b]
\caption{SHD metric (lower is better) for learning DAGs with 100 nodes and 400 edges. Each row is an experiment. The first row is the default, whose settings are in the blue column. In each other row, exactly one default parameter is changed (change). The best results are shown in bold. Entries with SHD higher that $400$ are reported as \textit{failure} and in such a case the corresponding TPR is reported if it is higher than $0.8$.}
    \centering
    \resizebox{\textwidth}{!}{%
    \renewcommand{\arraystretch}{1.1}
    \begin{tabular}{@{}lllllllllll@{}}
    \toprule
    &Hyperparameter & \color{NavyBlue}{Default} & Change  &
         LiNGAM & CAM & DAG-NoCurl & fGES & sortnregress & MMHC \\
    \midrule
    1.  & \color{NavyBlue}{Default settings}    &                                                                                              &                                                                                        &  $\bm{285\pm38}$  &  $    325\pm28 $  &        \color{ForestGreen}{$737\pm94$}&        \color{ForestGreen}{$547\pm86$}&        \color{ForestGreen}{$800\pm136$}&          failure          \\ 
2.  & Graph type                            & \color{NavyBlue}{Erd\"os-Renyi}                                                              &  Scale-free                                                                            &  $\bm{187\pm28}$  &          failure          &  $    247\pm45 $  &  $    382\pm64 $  &  $    300\pm36 $  &          failure          \\ 
3.  & $\mb{N}_c,\mb{N}_x$ distribution      & \color{NavyBlue}{Gaussian}                                                                   &   Gumbel                                                                               &        \color{ForestGreen}{$478\pm342$}&  $\bm{350\pm18}$  &        \color{ForestGreen}{$642\pm55$}&        \color{ForestGreen}{$497\pm38$}&        \color{ForestGreen}{$796\pm86$}&          failure          \\ 
4.  & Edges / Vertices                      & \color{NavyBlue}{$4$}                                                                        &   $10$                                                                                  &        \color{ForestGreen}{$1102\pm108$}&  $\bm{961\pm20}$  &        \color{ForestGreen}{$1455\pm277$}&          failure          &        \color{ForestGreen}{$1950\pm97$}&          failure          \\ 
5.  & Samples                               & \color{NavyBlue}{$n=1000$}                                                                   &   $n=100$                                                                              &  error  &  time-out &         failure & error & error         &          failure          \\ 
6.  & Standardization                       & \color{NavyBlue}{No}                                                                         &   Yes                                                                                  &  $    365\pm38 $  &  $\bm{329\pm34}$  &          failure          &        \color{ForestGreen}{$570\pm109$}&          failure          &          failure          \\ 
7.  & Larger weights in $\mb{A}$            & \color{NavyBlue}{$(0.1,0.9)$}                                                                &   $(0.5, 2)$                                                                           &        \color{ForestGreen}{$860\pm129$}&          failure          &  $\bm{135\pm56}$  &          failure          &        \color{ForestGreen}{$1275\pm133$}&          failure          \\ 
8.  & $\mb{N}_c,\mb{N}_x$ deviation         & \color{NavyBlue}{$\sigma=0.01$}                                                              &  $\sigma=0.1$                                                                          &          failure          &  $\bm{341\pm29}$  &          failure          &          failure          &        \color{ForestGreen}{$906\pm53$}&          failure          \\ 
9.  & Dense root causes $\mb{C}$            & \color{NavyBlue}{$p=0.1$}                                                                    &   $p=0.5$                                                                              &          failure          &  $\bm{271\pm21}$  &        \color{ForestGreen}{$586\pm58$}&        \color{ForestGreen}{$528\pm72$}&        \color{ForestGreen}{$746\pm134$}&          failure          \\ 
10. & Fixed support                         & \color{NavyBlue}{No}                                                                         &   Yes                                                                                  &          failure          &  $\bm{390\pm51}$  &          failure          &          failure          &          failure          &          failure          \\ 
\bottomrule
    \end{tabular}
    }
\end{table*}

\vfill
\newpage

\begin{table*}[t]
\caption{TPR (higher is better). Entries with TPR lower that $0.5$ are reported as \textit{failure}.}
\begin{subtable}{\textwidth}
    \centering
    \caption{}
    \resizebox{\textwidth}{!}{%
    \renewcommand{\arraystretch}{1.1}
    \begin{tabular}{@{}lllllllllll@{}}
    \toprule
    & Hyperparameter & \color{NavyBlue}{Default} & Change  &
        \mobius (ours) & GOLEM & NOTEARS & DAGMA & DirectLiNGAM & PC & GES  \\
    \midrule
    1.  & \color{NavyBlue}{Default settings}    &                                                                                              &                                                                                        &  $\bm{1.00\pm0.00}$  &  $    0.82\pm0.09 $  &  $    0.92\pm0.02 $  &          failure          &  $    0.99\pm0.00 $  &          failure          &  $    0.72\pm0.04 $  \\ 
2.  & Graph type                            & \color{NavyBlue}{Erd\"os-Renyi}                                                              &  Scale-free                                                                            &  $\bm{0.99\pm0.00}$  &  $    0.95\pm0.01 $  &  $    0.95\pm0.02 $  &          failure          &  $    0.99\pm0.01 $  &          failure          &  $    0.84\pm0.09 $  \\ 
3.  & $\mb{N}_c,\mb{N}_x$ distribution      & \color{NavyBlue}{Gaussian}                                                                   &   Gumbel                                                                               &  $\bm{1.00\pm0.00}$  &  $    0.82\pm0.11 $  &  $    0.92\pm0.01 $  &          failure          &  $    0.99\pm0.00 $  &          failure          &  $    0.81\pm0.02 $  \\ 
4.  & Edges / Vertices                      & \color{NavyBlue}{$4$}                                                                        &   $10$                                                                                  &  $    0.98\pm0.00 $  &  $    0.80\pm0.07 $  &  $    0.90\pm0.01 $  &          failure          &  $\bm{0.99\pm0.00}$  &          failure          &          failure          \\ 
5.  & Samples                               & \color{NavyBlue}{$n=1000$}                                                                   &   $n=100$                                                                              &  $\bm{0.90\pm0.02}$  &          failure          &  $    0.75\pm0.02 $  &          failure          &       error &   failure          &  $    0.61\pm0.05 $  \\ 
6.  & Standardization                       & \color{NavyBlue}{No}                                                                         &   Yes                                                                                  &  $    0.84\pm0.01 $  &          failure          &          failure          &          failure          &  $\bm{0.99\pm0.01}$  &          failure          &  $    0.78\pm0.09 $  \\ 
7.  & Larger weights in $\mb{A}$            & \color{NavyBlue}{$(0.1,0.9)$}                                                                &   $(0.5, 2)$                                                                           &          failure          &  $    0.85\pm0.04 $  &  $    0.89\pm0.01 $  &  $    0.53\pm0.01 $  &  $\bm{1.00\pm0.00}$  &          failure          &  $    0.70\pm0.09 $  \\ 
8.  & $\mb{N}_c,\mb{N}_x$ deviation         & \color{NavyBlue}{$\sigma=0.01$}                                                              &  $\sigma=0.1$                                                                          &  $    0.88\pm0.01 $  &  $    0.85\pm0.03 $  &  $    0.86\pm0.01 $  &          failure          &  $\bm{0.97\pm0.01}$  &          failure          &  $    0.70\pm0.04 $  \\ 
9.  & Dense root causes $\mb{C}$            & \color{NavyBlue}{$p=0.1$}                                                                    &   $p=0.5$                                                                              &  $    0.88\pm0.02 $  &  $    0.94\pm0.01 $  &  $    0.93\pm0.00 $  &  $    0.81\pm0.02 $  &  $\bm{0.99\pm0.00}$  &          failure          &  $    0.80\pm0.05 $  \\ 
10. & Fixed support                         & \color{NavyBlue}{No}                                                                         &   Yes                                                                                  &          failure          &          failure          &          failure          &          failure          &  $    0.57\pm0.02 $  &          failure          &  $\bm{0.64\pm0.05}$  \\ 
\bottomrule
    \end{tabular}
    }
        \vspace{10pt}
\end{subtable}

\begin{subtable}{\textwidth}
    \centering
    \caption{}
    \resizebox{\textwidth}{!}{%
    \renewcommand{\arraystretch}{1.1}
    \begin{tabular}{@{}llllllllllll@{}}
    \toprule
    & Hyperparameter & \color{NavyBlue}{Default} & Change  &
        LiNGAM & CAM & DAG-NoCurl & fGES & sortnregress & MMHC   \\
    \midrule
1.  & \color{NavyBlue}{Default settings}    &                                                                                              &                                                                                        &  $0.99\pm0.00$  &          failure          &  $    0.84\pm0.02 $  &  $    0.80\pm0.04 $  &  $    0.87\pm0.02 $  &  $    0.58\pm0.03 $  \\ 
2.  & Graph type                            & \color{NavyBlue}{Erd\"os-Renyi}                                                              &  Scale-free                                                                            &  $\bm{0.99\pm0.01}$  &          failure          &  $    0.92\pm0.02 $  &  $    0.76\pm0.06 $  &  $    0.96\pm0.01 $  &          failure          \\ 
3.  & $\mb{N}_c,\mb{N}_x$ distribution      & \color{NavyBlue}{Gaussian}                                                                   &   Gumbel                                                                               &  $0.94\pm0.10$  &          failure          &  $    0.82\pm0.02 $  &  $    0.82\pm0.03 $  &  $    0.88\pm0.02 $  &  $    0.56\pm0.04 $  \\ 
4.  & Edges / Vertices                      & \color{NavyBlue}{$4$}                                                                        &   $10$                                                                                  &  $0.98\pm0.01$  &          failure          &  $    0.84\pm0.05 $  &  $    0.54\pm0.04 $  &  $    0.90\pm0.01 $  &          failure          \\ 
5.  & Samples                               & \color{NavyBlue}{$n=1000$}                                                                   &   $n=100$                                                                              &  error & time-out  & $0.76\pm0.03$  &    error & error&      failure          \\ 
6.  & Standardization                       & \color{NavyBlue}{No}                                                                         &   Yes                                                                                  &   $0.94\pm0.01$  &          failure          &          failure          &  $    0.80\pm0.03 $  &          failure          &  $    0.57\pm0.05 $  \\ 
7.  & Larger weights in $\mb{A}$            & \color{NavyBlue}{$(0.1,0.9)$}                                                                &   $(0.5, 2)$                                                                           &  $\bm{1.00\pm0.00}$  &          failure          &  $    0.92\pm0.04 $  &  $    0.74\pm0.06 $  &  $    0.94\pm0.01 $  &          failure          \\ 
8.  & $\mb{N}_c,\mb{N}_x$ deviation         & \color{NavyBlue}{$\sigma=0.01$}                                                              &  $\sigma=0.1$                                                                          &  $    0.64\pm0.12 $  &          failure          &  $    0.78\pm0.01 $  &  $    0.76\pm0.04 $  &  ${0.85\pm0.01}$  &  $    0.58\pm0.02 $  \\ 
9.  & Dense root causes $\mb{C}$            & \color{NavyBlue}{$p=0.1$}                                                                    &   $p=0.5$                                                                              &  $    0.70\pm0.03 $  &  $    0.51\pm0.03 $  &  $    0.84\pm0.04 $  &  $    0.83\pm0.03 $  &  ${0.88\pm0.02}$  &  $    0.57\pm0.04 $  \\ 
10. & Fixed support                         & \color{NavyBlue}{No}                                                                         &   Yes                                                                                  &  $    0.52\pm0.04 $  &          failure          &          failure          &  $    0.54\pm0.03 $  &  ${0.58\pm0.04}$  &          failure          \\ 
\bottomrule
    \end{tabular}
    }
    \end{subtable}
\end{table*}

\begin{table*}[t]
    \caption{SID (lower is better).}
    \begin{subtable}{\textwidth}
    \centering
    \caption{}
    \resizebox{\textwidth}{!}{%
    \renewcommand{\arraystretch}{1.1}
    \begin{tabular}{@{}lllllllllll@{}}
    \toprule
   & Hyperparameter & \color{NavyBlue}{Default} & Change  &
        \mobius (ours) & GOLEM & NOTEARS & DAGMA & DirectLiNGAM & PC & GES   \\
    \midrule
    1.  & \color{NavyBlue}{Default settings}    &                                                                                              &                                                                                        &  $\bm{51\pm66}$  &  $    4005\pm1484 $  &  $    2720\pm720 $  &  $    6989\pm268 $  &  $    257\pm94 $  &  $    7944\pm118 $  &  $    6006\pm561 $  \\ 
2.  & Graph type                            & \color{NavyBlue}{Erd\"os-Renyi}                                                              &  Scale-free                                                                            &  $\bm{72\pm42}$  &  $    442\pm71 $  &  $    425\pm59 $  &  $    1581\pm172 $  &  $    98\pm64 $  &  $    2570\pm314 $  &  $    1032\pm371 $  \\ 
3.  & $\mb{N}_c,\mb{N}_x$ distribution      & \color{NavyBlue}{Gaussian}                                                                   &   Gumbel                                                                               &  $\bm{125\pm91}$  &  $    4208\pm1767 $  &  $    2945\pm601 $  &  $    7697\pm220 $  &  $    214\pm133 $  &  $    7986\pm406 $  &  $    4957\pm577 $  \\ 
4.  & Edges / Vertices                      & \color{NavyBlue}{$4$}                                                                        &   $10$                                                                                  &  $    1190\pm149 $  &  $    7174\pm1083 $  &  $    5914\pm nan $  &  $    8833\pm95 $  &  $\bm{808\pm156}$  &          SID time-out     &          SID time-out     \\ 
5.  & Samples                               & \color{NavyBlue}{$n=1000$}                                                                   &   $n=100$                                                                              &          SID time-out     &  $ 7410\pm326 $  &          SID time-out     &          SID time-out     & error & SID time-out     &      SID time-out     \\ 
6.  & Standardization                       & \color{NavyBlue}{No}                                                                         &   Yes                                                                                  &          SID time-out     &  $    9381\pm147 $  &  $    8937\pm30 $  &  $    8851\pm185 $  &  $    \bm{293\pm174} $  &  $    7801\pm358 $  &  $    5268\pm1078 $  \\ 
7.  & Larger weights in $\mb{A}$            & \color{NavyBlue}{$(0.1,0.9)$}                                                                &   $(0.5, 2)$                                                                           &  $    6988\pm305 $  &  $    3165\pm549 $  &  $    2379\pm292 $  &  $    6624\pm351 $  &  ${17\pm34}$  &          SID time-out     &          SID time-out     \\ 
8.  & $\mb{N}_c,\mb{N}_x$ deviation         & \color{NavyBlue}{$\sigma=0.01$}                                                              &  $\sigma=0.1$                                                                          &  $    7140\pm411 $  &  $    3898\pm635 $  &  $    4398\pm238 $  &  $    7227\pm373 $  &  $\bm{882\pm157}$  &  $    7841\pm306 $  &  $    6810\pm792 $  \\ 
9.  & Dense root causes $\mb{C}$            & \color{NavyBlue}{$p=0.1$}                                                                    &   $p=0.5$                                                                              &          SID time-out     &  $    1871\pm256 $  &  $    3040\pm257 $  &  $    4539\pm383 $  &  $    \bm{275\pm102} $  &          SID time-out     &          SID time-out     \\ 
10. & Fixed support                         & \color{NavyBlue}{No}                                                                         &   Yes                                                                                  &  $    7605\pm259 $  &  $    8110\pm413 $  &  $    7707\pm275 $  &  $    7749\pm310 $  &  $    6800\pm601 $  &  $    8119\pm314 $  &  ${5817\pm759}$  \\ 
\bottomrule
    \end{tabular}
    }
    \vspace{10pt}
\end{subtable}

\begin{subtable}{\textwidth}
    \centering
    \caption{}
    \resizebox{\textwidth}{!}{%
    \renewcommand{\arraystretch}{1.1}
    \begin{tabular}{@{}lllllllllllll@{}}
    \toprule
   & Hyperparameter & \color{NavyBlue}{Default} & Change  &
        LiNGAM & CAM & DAG-NoCurl & fGES & sortnregress & MMHC \\
    \midrule
    1.  & \color{NavyBlue}{Default settings}    &                                                                                              &                                                                                        &  ${595\pm212}$  &  $    8494\pm476 $  &  $    5320\pm594 $  &  $    7706\pm456 $  &  $    4418\pm464 $  &          SID time-out     \\ 
2.  & Graph type                            & \color{NavyBlue}{Erd\"os-Renyi}                                                              &  Scale-free                                                                            &  ${99\pm45}$  &          SID time-out     &  $    807\pm219 $  &  $    7412\pm493 $  &  $    427\pm83 $  &          SID time-out     \\ 
3.  & $\mb{N}_c,\mb{N}_x$ distribution      & \color{NavyBlue}{Gaussian}                                                                   &   Gumbel                                                                               &  ${1796\pm2754}$  &  $    8916\pm317 $  &  $    5723\pm575 $  &  $    9040\pm170 $  &  $    4522\pm797 $  &          SID time-out     \\ 
4.  & Edges / Vertices                      & \color{NavyBlue}{$4$}                                                                        &   $10$                                                                                  &  ${1516\pm234}$  &  $    9540\pm30 $  &  $    6200\pm620 $  &  $    9057\pm172 $  &  $    4967\pm557 $  &          SID time-out     \\ 
5.  & Samples                               & \color{NavyBlue}{$n=1000$}                                                                   &   $n=100$                                                                              &  error  &  time-out &         $\bm{4496\pm507}$ & error & error         &          SID time-out          \\ 
6.  & Standardization                       & \color{NavyBlue}{No}                                                                         &   Yes                                                                                  &  ${2847\pm577}$  &  $    8707\pm287 $  &  $    8955\pm68 $  &  $    7872\pm605 $  &  $    7945\pm124 $  &          SID time-out     \\ 
7.  & Larger weights in $\mb{A}$            & \color{NavyBlue}{$(0.1,0.9)$}                                                                &   $(0.5, 2)$                                                                           &  $\bm{16\pm33}$  &  $    8147\pm234 $  &  $    1660\pm781 $  &  $    7137\pm496 $  &  $    958\pm181 $  &          SID time-out     \\ 
8.  & $\mb{N}_c,\mb{N}_x$ deviation         & \color{NavyBlue}{$\sigma=0.01$}                                                              &  $\sigma=0.1$                                                                          &  $    7403\pm862 $  &  $    8854\pm168 $  &  $    6006\pm555 $  &  $    7842\pm625 $  &  ${4715\pm611}$  &          SID time-out     \\ 
9.  & Dense root causes $\mb{C}$            & \color{NavyBlue}{$p=0.1$}                                                                    &   $p=0.5$                                                                              &  $    6944\pm401 $  &  $    8334\pm278 $  &  $    5008\pm489 $  &  $    7510\pm187 $  &  ${4072\pm181}$  &          SID time-out     \\ 
10. & Fixed support                         & \color{NavyBlue}{No}                                                                         &   Yes                                                                                  &  $    6952\pm619 $  &  $    7175\pm578 $  &  $    7350\pm569 $  &  $    8632\pm383 $  &  $\bm{5477\pm279}$  &          SID time-out     \\ 
\bottomrule
    \end{tabular}
    }
    \end{subtable}
\end{table*}

\begin{table*}[t]
    \caption{Runtime [s].}
    \begin{subtable}{\textwidth}
    \centering
    \caption{}
    \resizebox{\textwidth}{!}{%
    \renewcommand{\arraystretch}{1.1}
    \begin{tabular}{@{}lllllllllll@{}}
    \toprule
   & Hyperparameter & \color{NavyBlue}{Default} & Change  &
        \mobius (ours) & GOLEM & NOTEARS & DAGMA & DirectLiNGAM & PC & GES   \\
    \midrule
    1.  & \color{NavyBlue}{Default settings}    &                                                                                              &                                                                                        &  $    10\pm1.8 $  &  $    529\pm210 $  &  $    796\pm185 $  &  $    320\pm94 $  &  $    106\pm1.5 $  &  $    22\pm3.1 $  &  ${7.1\pm1.5}$  \\ 
2.  & Graph type                            & \color{NavyBlue}{Erd\"os-Renyi}                                                              &  Scale-free                                                                            &  $    11\pm1.1 $  &  $    460\pm184 $  &  $    180\pm7.2 $  &  $    258\pm49 $  &  $    108\pm0.8 $  &  $    115\pm59 $  &  ${5.2\pm1.0}$  \\ 
3.  & $\mb{N}_c,\mb{N}_x$ distribution      & \color{NavyBlue}{Gaussian}                                                                   &   Gumbel                                                                               &  $    8.2\pm0.7 $  &  $    349\pm125 $  &  $    251\pm48 $  &  $    256\pm70 $  &  $    107\pm0.8 $  &  $    15\pm2.1 $  &  ${4.6\pm0.4}$  \\ 
4.  & Edges / Vertices                      & \color{NavyBlue}{$4$}                                                                        &   $10$                                                                                  &  $    14\pm1.0 $  &  $    347\pm121 $  &  $    471\pm82 $  &  $    217\pm11 $  &  $    149\pm24 $  &  ${5.6\pm1.1}$  &  $    64\pm2.6 $  \\ 
5.  & Samples                               & \color{NavyBlue}{$n=1000$}                                                                   &   $n=100$                                                                              &  $    13\pm0.7 $  &  $    194\pm9.6 $  &  $    679\pm72 $  &  $    254\pm28 $  & error& ${2.2\pm0.1}$  &  $    3.9\pm0.5 $  \\ 
6.  & Standardization                       & \color{NavyBlue}{No}                                                                         &   Yes                                                                                  &  $    11\pm1.9 $  &  $    326\pm145 $  &  $    781\pm76 $  &  $    157\pm19 $  &  $    107\pm0.6 $  &  $    17\pm4.4 $  &  ${5.3\pm1.4}$  \\ 
7.  & Larger weights in $\mb{A}$            & \color{NavyBlue}{$(0.1,0.9)$}                                                                &   $(0.5, 2)$                                                                           &  $    8.4\pm0.6 $  &  $    431\pm177 $  &  $    2834\pm228 $  &  $    493\pm46 $  &  $    117\pm3.2 $  &  ${2.3\pm0.1}$  &  $    21\pm4.7 $  \\ 
8.  & $\mb{N}_c,\mb{N}_x$ deviation         & \color{NavyBlue}{$\sigma=0.01$}                                                              &  $\sigma=0.1$                                                                          &  $    8.7\pm0.7 $  &  $    309\pm63 $  &  $    433\pm53 $  &  $    257\pm37 $  &  $    111\pm7.1 $  &  $    19\pm4.8 $  &  ${5.3\pm0.5}$  \\ 
9.  & Dense root causes $\mb{C}$            & \color{NavyBlue}{$p=0.1$}                                                                    &   $p=0.5$                                                                              &  $    9.1\pm0.7 $  &  $    334\pm121 $  &  $    427\pm35 $  &  $    133\pm20 $  &  $    142\pm3.6 $  &  $    15\pm5.4 $  &  ${4.9\pm1.2}$  \\ 
10. & Fixed support                         & \color{NavyBlue}{No}                                                                         &   Yes                                                                                  &  $    15\pm2.0 $  &  $    360\pm142 $  &  $    669\pm386 $  &  $    501\pm91 $  &  $    106\pm0.6 $  &  ${5.3\pm3.2}$  &  $    6.2\pm2.1 $  \\ 
\bottomrule
    \end{tabular}
    }
    \vspace{10pt}
\end{subtable}

\begin{subtable}{\textwidth}
    \centering
    \caption{}
    \resizebox{\textwidth}{!}{%
    \renewcommand{\arraystretch}{1.1}
    \begin{tabular}{@{}lllllllllllll@{}}
    \toprule
   & Hyperparameter & \color{NavyBlue}{Default} & Change  &
        LiNGAM & CAM & DAG-NoCurl & fGES & sortnregress & MMHC \\
    \midrule
    1.  & \color{NavyBlue}{Default settings}    &                                                                                              &                                                                                        &  $\bm{2.4\pm0.0}$  &  $    372\pm7.4 $  &  $    15\pm2.8 $  &  $    7.9\pm2.7 $  &  $    2.6\pm0.2 $  &  $    3.5\pm0.3 $  \\ 
2.  & Graph type                            & \color{NavyBlue}{Erd\"os-Renyi}                                                              &  Scale-free                                                                            &  $    2.7\pm0.0 $  &  $    282\pm9.3 $  &  $    13\pm1.1 $  &  $    6.2\pm0.9 $  &  $\bm{2.0\pm0.0}$  &  $    210\pm160 $  \\ 
3.  & $\mb{N}_c,\mb{N}_x$ distribution      & \color{NavyBlue}{Gaussian}                                                                   &   Gumbel                                                                               &  $    2.5\pm0.3 $  &  $    312\pm38 $  &  $    11\pm1.0 $  &  $    6.2\pm1.4 $  &  $\bm{2.1\pm0.3}$  &  $    3.4\pm0.6 $  \\ 
4.  & Edges / Vertices                      & \color{NavyBlue}{$4$}                                                                        &   $10$                                                                                  &  $    3.1\pm0.5 $  &  $    341\pm9.7 $  &  $    24\pm3.8 $  &  $    573\pm167 $  &  $\bm{2.1\pm0.0}$  &  $    3.1\pm0.2 $  \\ 
5.  & Samples                               & \color{NavyBlue}{$n=1000$}                                                                   &   $n=100$                                                                              &  error  &  time-out &         $    33\pm4.9 $  & error & error         &          $\bm{0.7\pm0.1}$          \\ 
6.  & Standardization                       & \color{NavyBlue}{No}                                                                         &   Yes                                                                                  &  $    2.4\pm0.1 $  &  $    304\pm13 $  &  $    13\pm1.5 $  &  $    7.2\pm1.9 $  &  $\bm{2.0\pm0.0}$  &  $    3.7\pm0.6 $  \\ 
7.  & Larger weights in $\mb{A}$            & \color{NavyBlue}{$(0.1,0.9)$}                                                                &   $(0.5, 2)$                                                                           &  $    3.0\pm0.1 $  &  $    339\pm10.0 $  &  $    106\pm11 $  &  $    118\pm46 $  &  $\bm{2.3\pm0.0}$  &  $    2.5\pm0.2 $  \\ 
8.  & $\mb{N}_c,\mb{N}_x$ deviation         & \color{NavyBlue}{$\sigma=0.01$}                                                              &  $\sigma=0.1$                                                                          &  $    8.2\pm2.1 $  &  $    323\pm2.8 $  &  $    7.8\pm0.9 $  &  $    4.4\pm0.8 $  &  $\bm{2.0\pm0.0}$  &  $    3.7\pm0.3 $  \\ 
9.  & Dense root causes $\mb{C}$            & \color{NavyBlue}{$p=0.1$}                                                                    &   $p=0.5$                                                                              &  $    6.8\pm0.7 $  &  $    306\pm7.4 $  &  $    11\pm3.1 $  &  $    6.8\pm0.5 $  &  $\bm{2.0\pm0.0}$  &  $    3.1\pm0.2 $  \\ 
10. & Fixed support                         & \color{NavyBlue}{No}                                                                         &   Yes                                                                                  &  $    5.5\pm0.6 $  &  $    342\pm11 $  &  $    52\pm15 $  &  $    6.1\pm5.1 $  &  $\bm{2.0\pm0.0}$  &  $    2.4\pm0.6 $  \\ 
\bottomrule
    \end{tabular}
    }
    \end{subtable}
\end{table*}

\begin{figure}[t]
    \centering
    \begin{subfigure}{0.27\linewidth}
    \includegraphics[width=\linewidth]{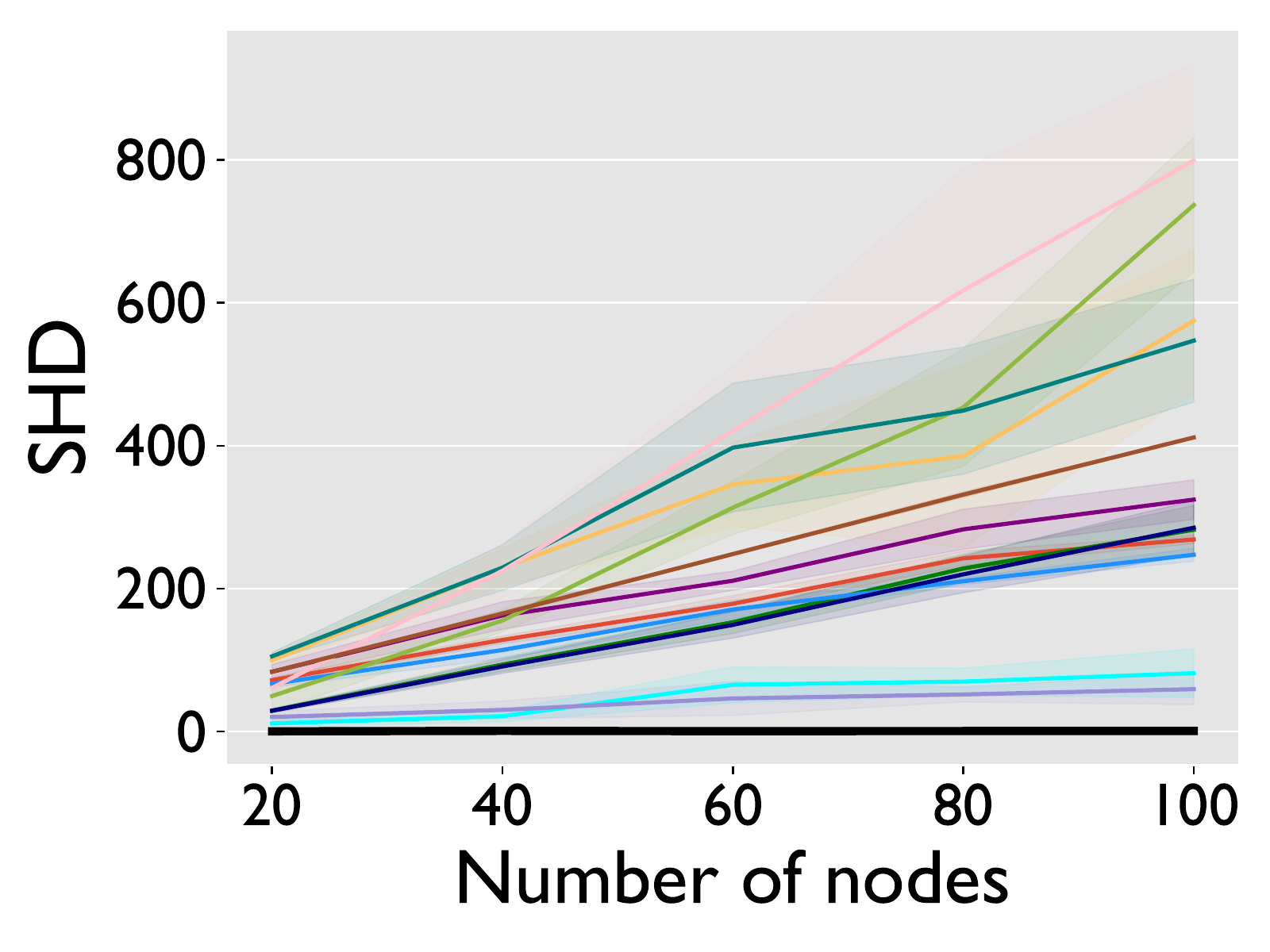}

    \includegraphics[width=\linewidth]{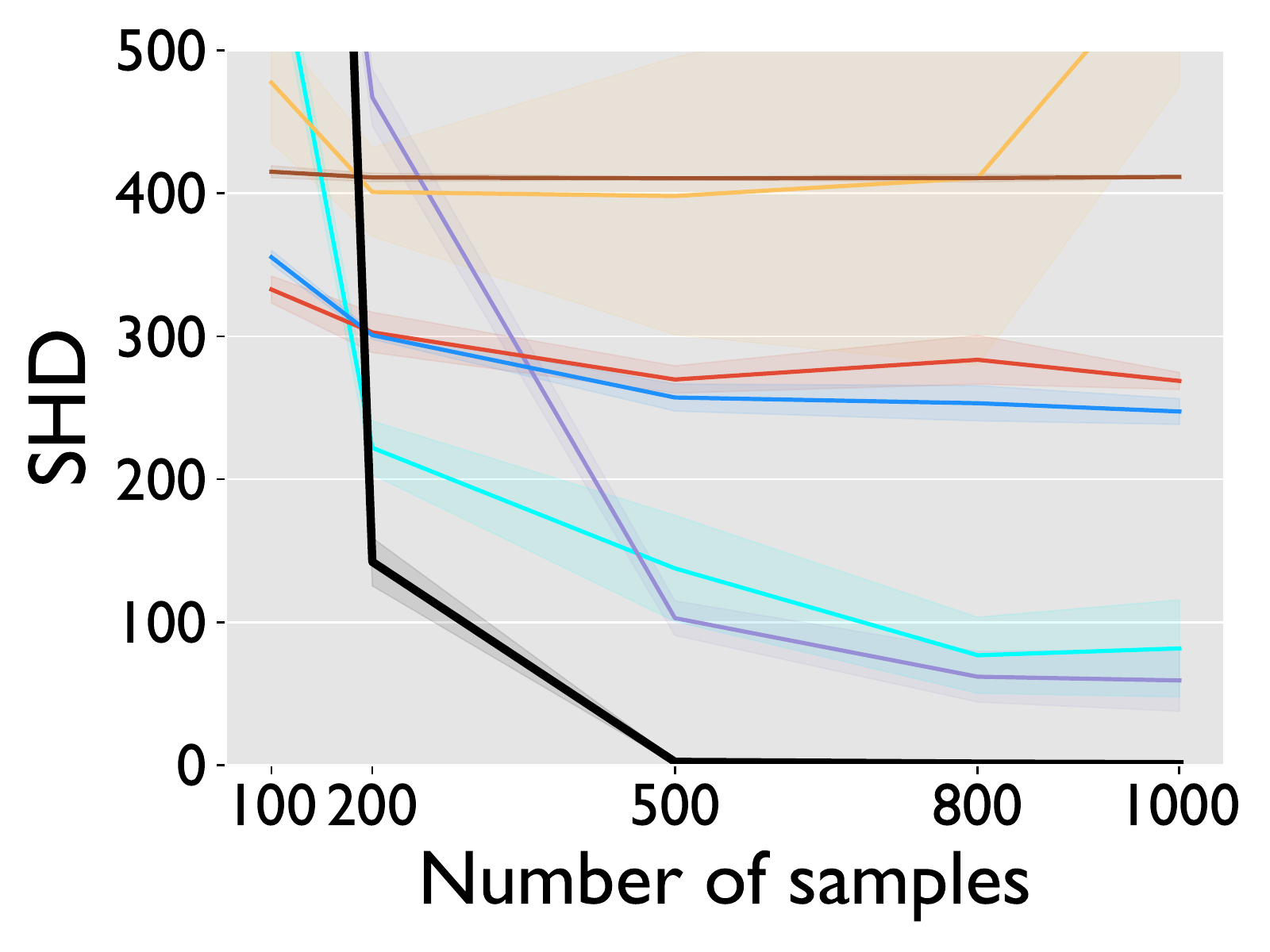}
    \caption{SHD ($\downarrow$)}
    \end{subfigure}
    \begin{subfigure}{0.27\linewidth}
    \includegraphics[width=\linewidth]{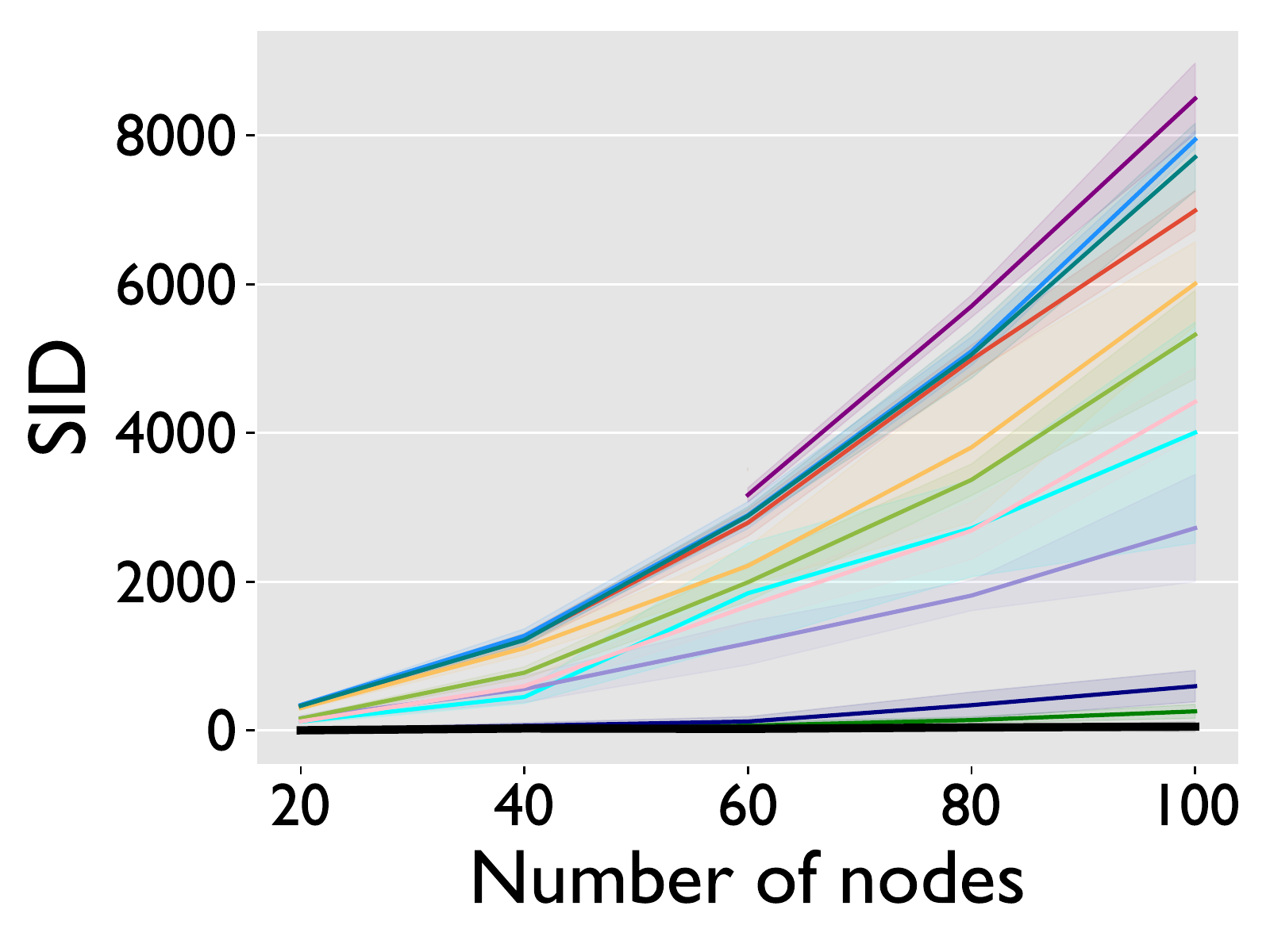}

    \includegraphics[width=\linewidth]{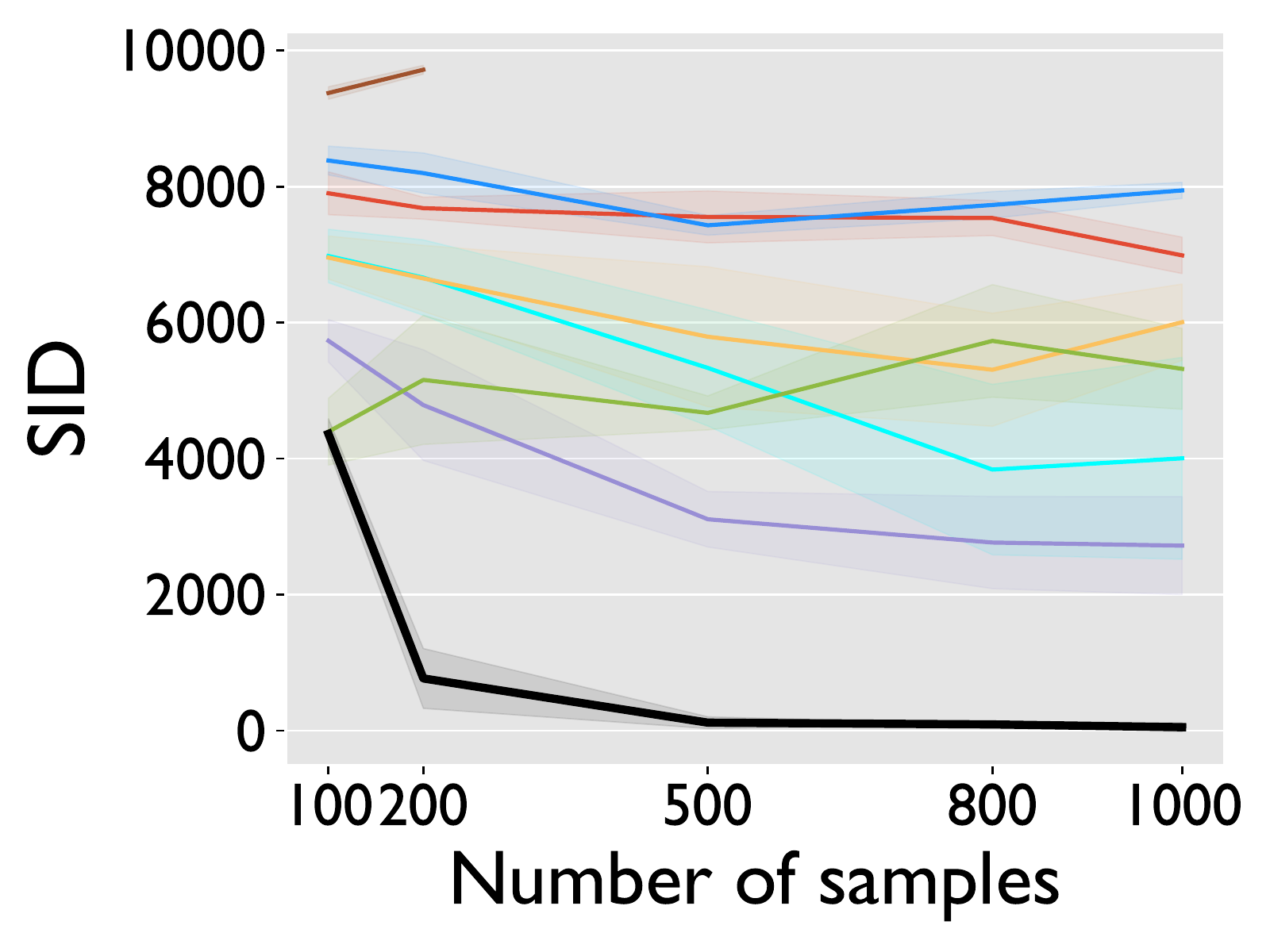}
    \caption{SID ($\downarrow$)}
    \end{subfigure}
    \begin{subfigure}{0.27\linewidth}
    \includegraphics[width=\linewidth]{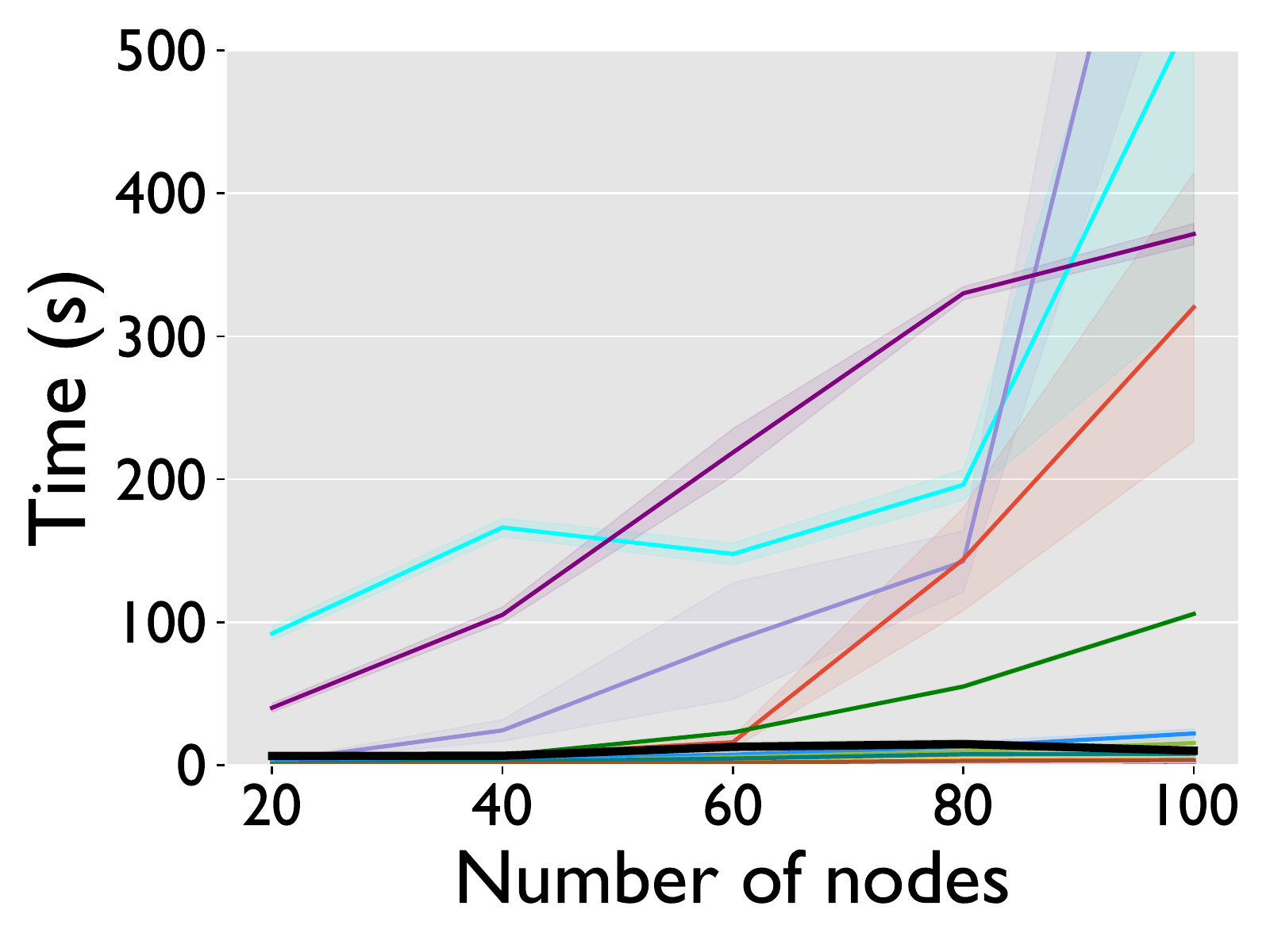}

    \includegraphics[width=\linewidth]{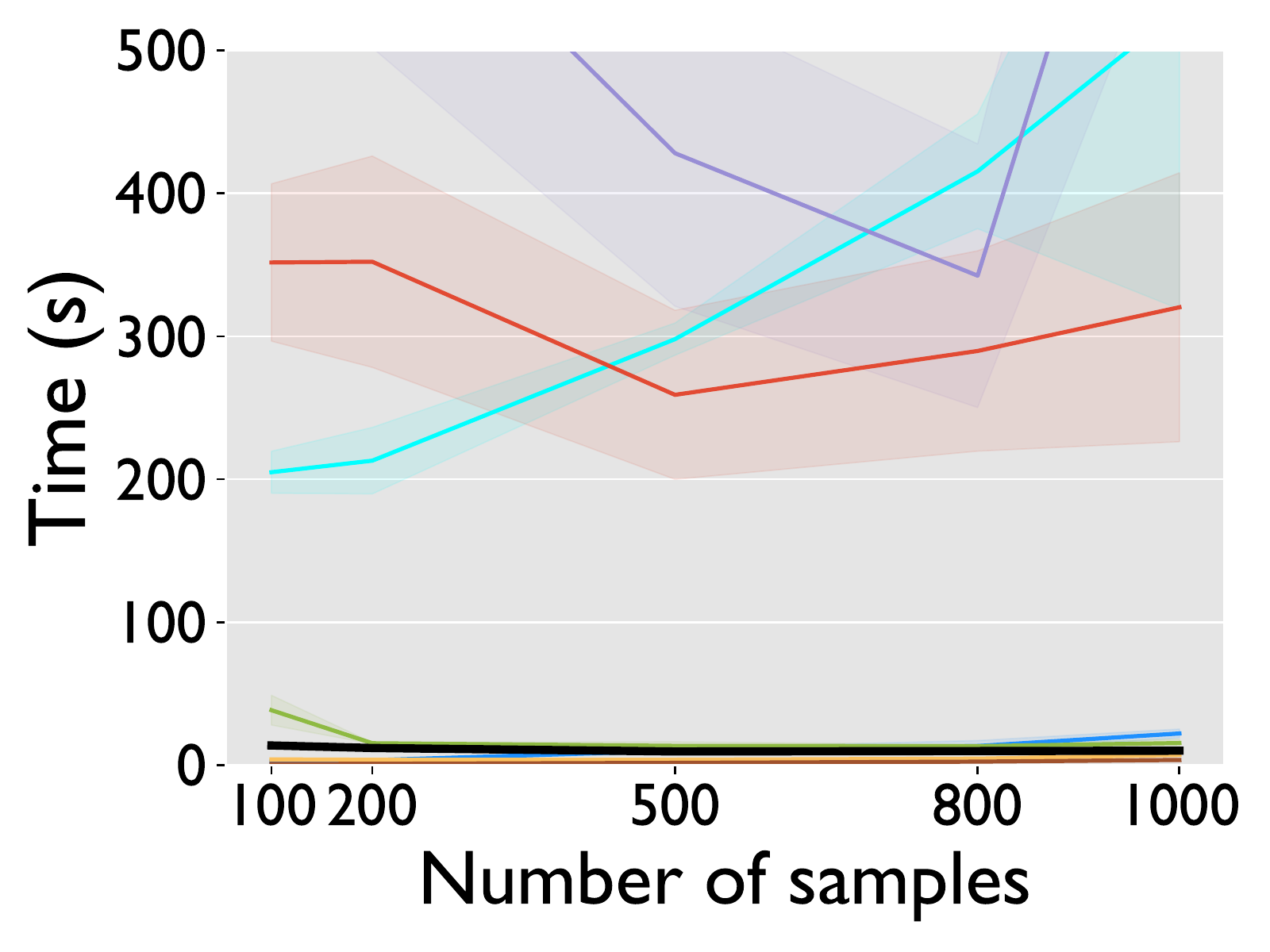}
    \caption{Runtime [s]}
    \end{subfigure}
    \hspace{5pt}
    \begin{subfigure}{0.15\linewidth}
    \includegraphics[trim={9.cm 0 0 0}, clip, width=\linewidth]{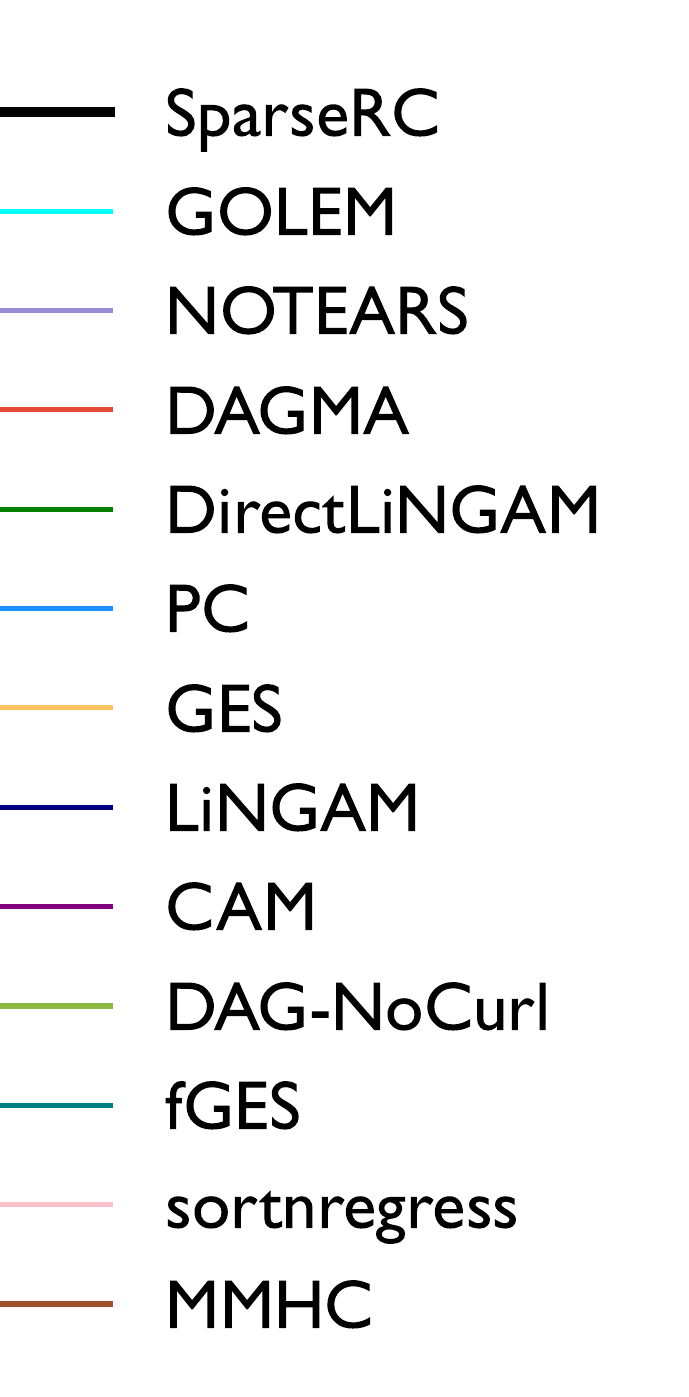}
    \vspace{-40pt}
    \end{subfigure}

    \begin{subfigure}{0.27\linewidth}
    \includegraphics[width=\linewidth]{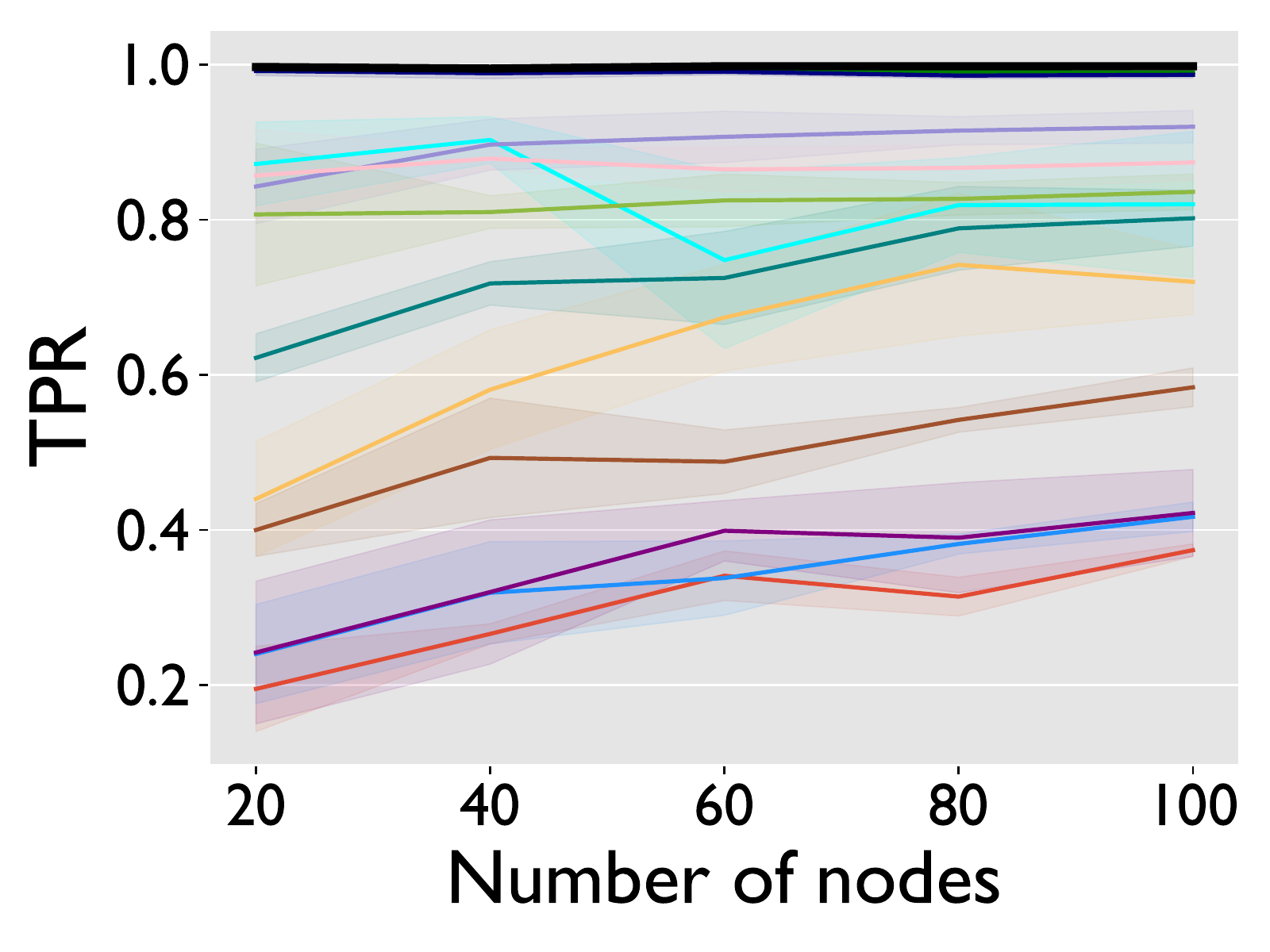}

    \includegraphics[width=\linewidth]{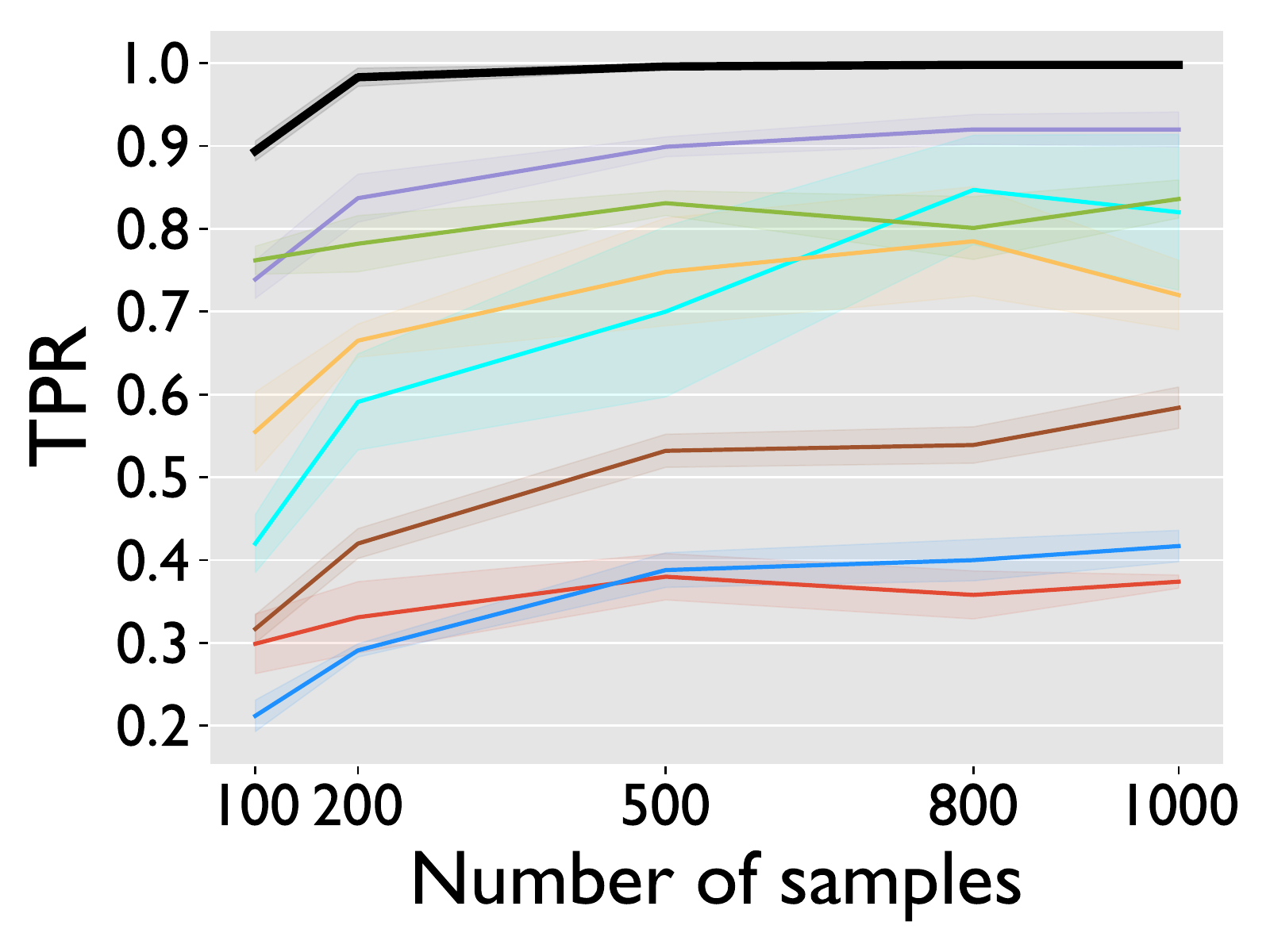}
    \caption{TPR ($\uparrow$)}
    \end{subfigure}
    \begin{subfigure}{0.27\linewidth}
    \includegraphics[width=\linewidth]{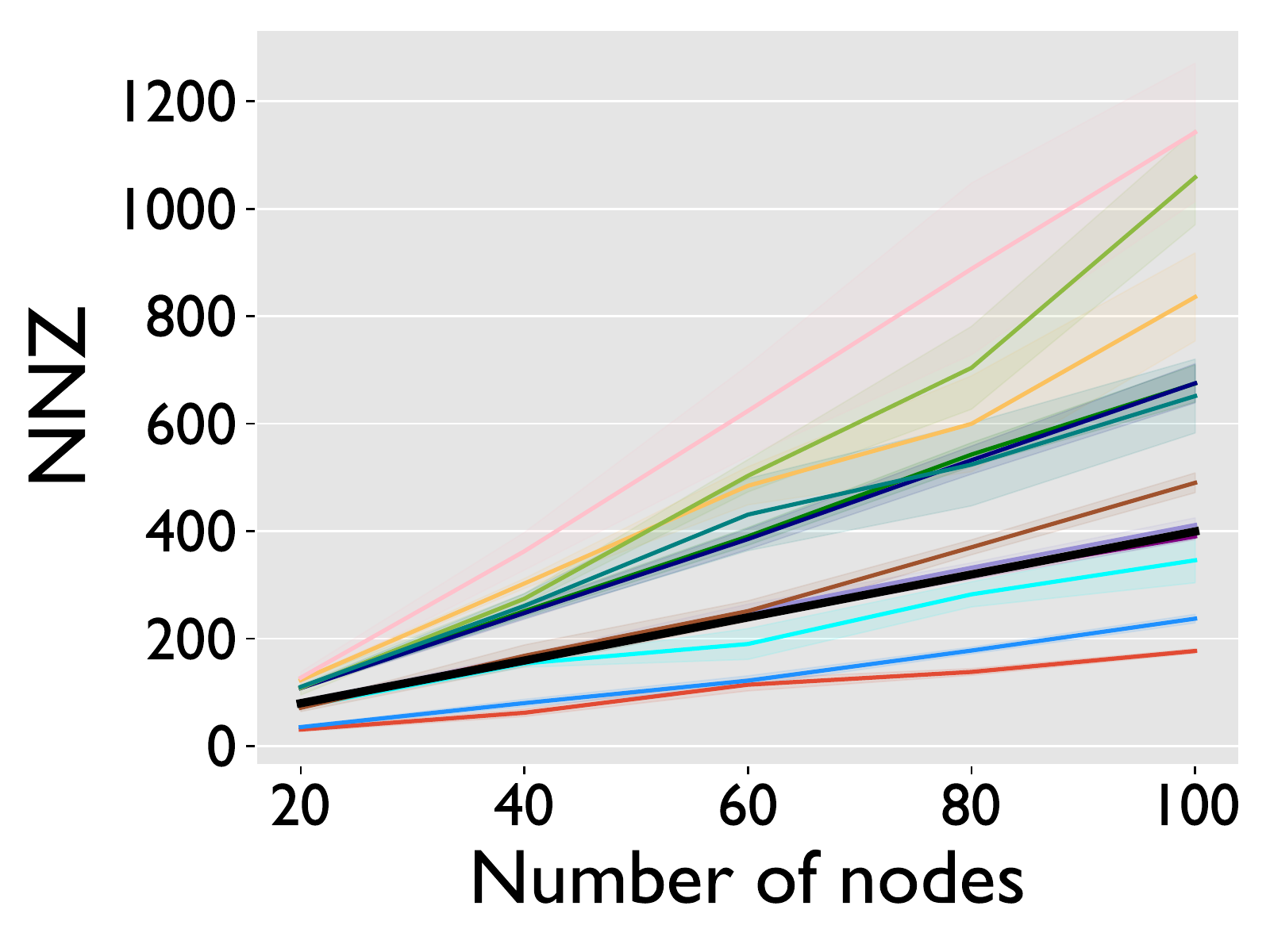}

    \includegraphics[width=\linewidth]{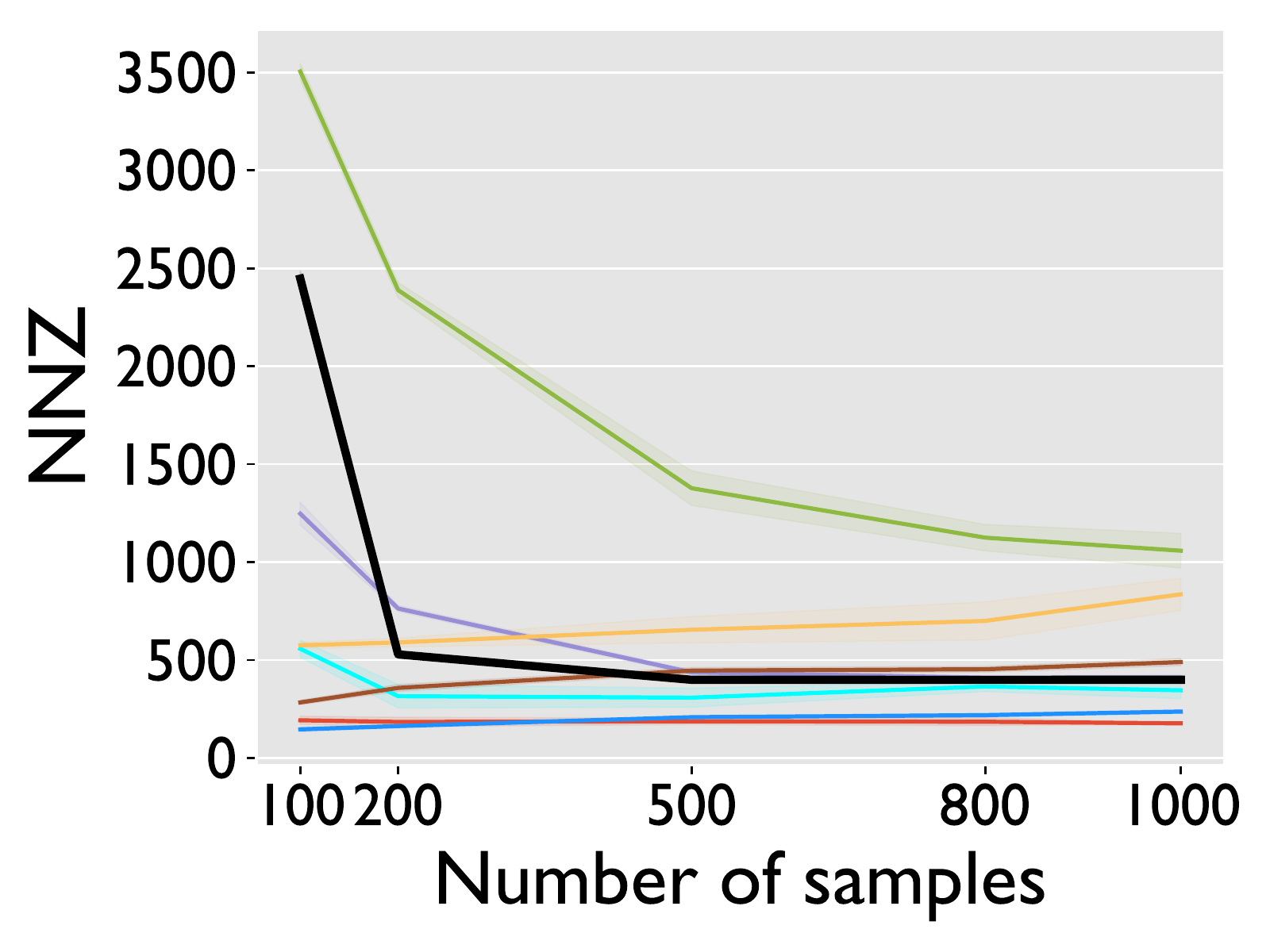}
    \caption{Total edges}
    \end{subfigure}
    \begin{subfigure}{0.27\linewidth}
    \includegraphics[width=\linewidth]{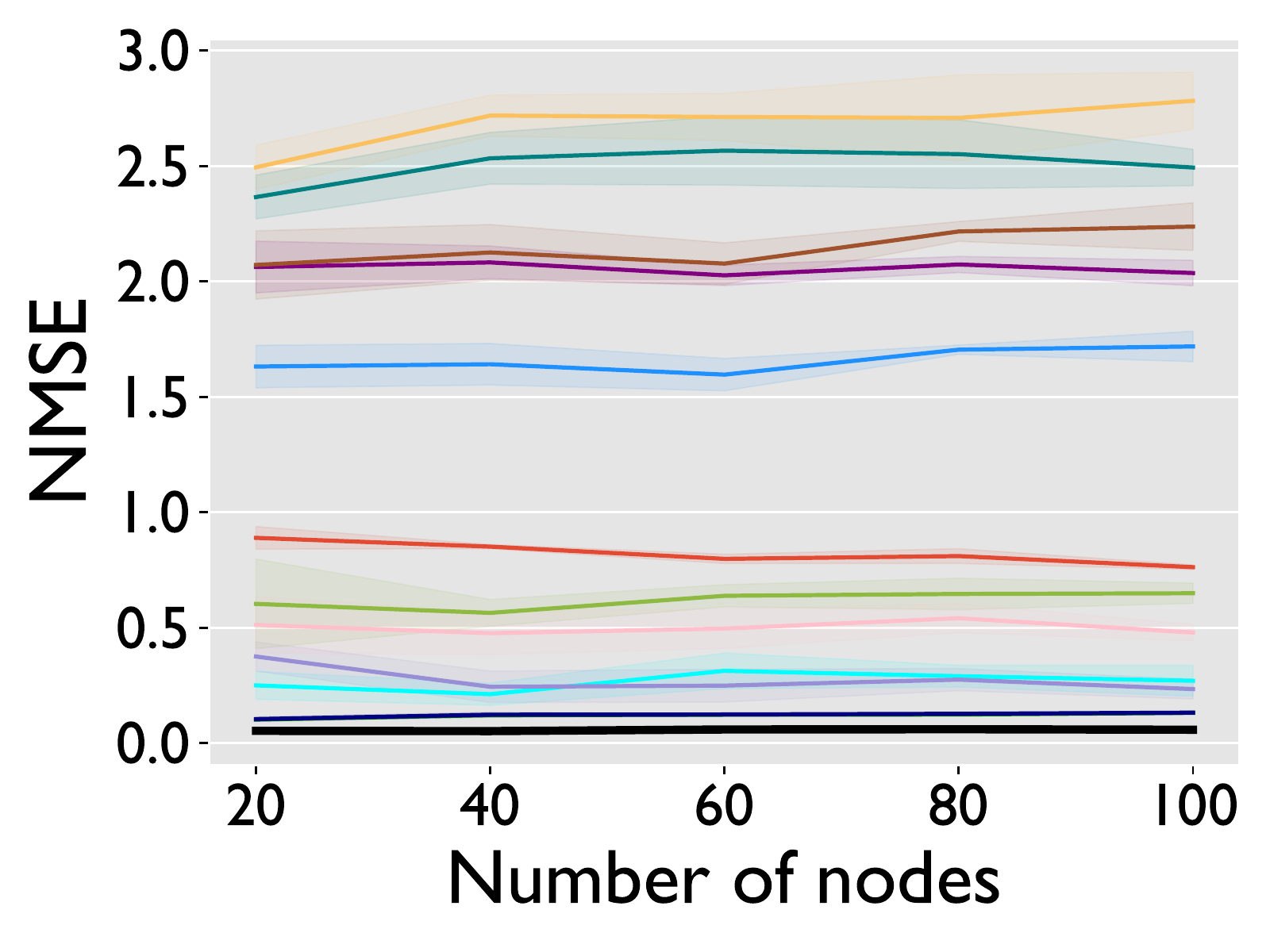}

    \includegraphics[width=\linewidth]{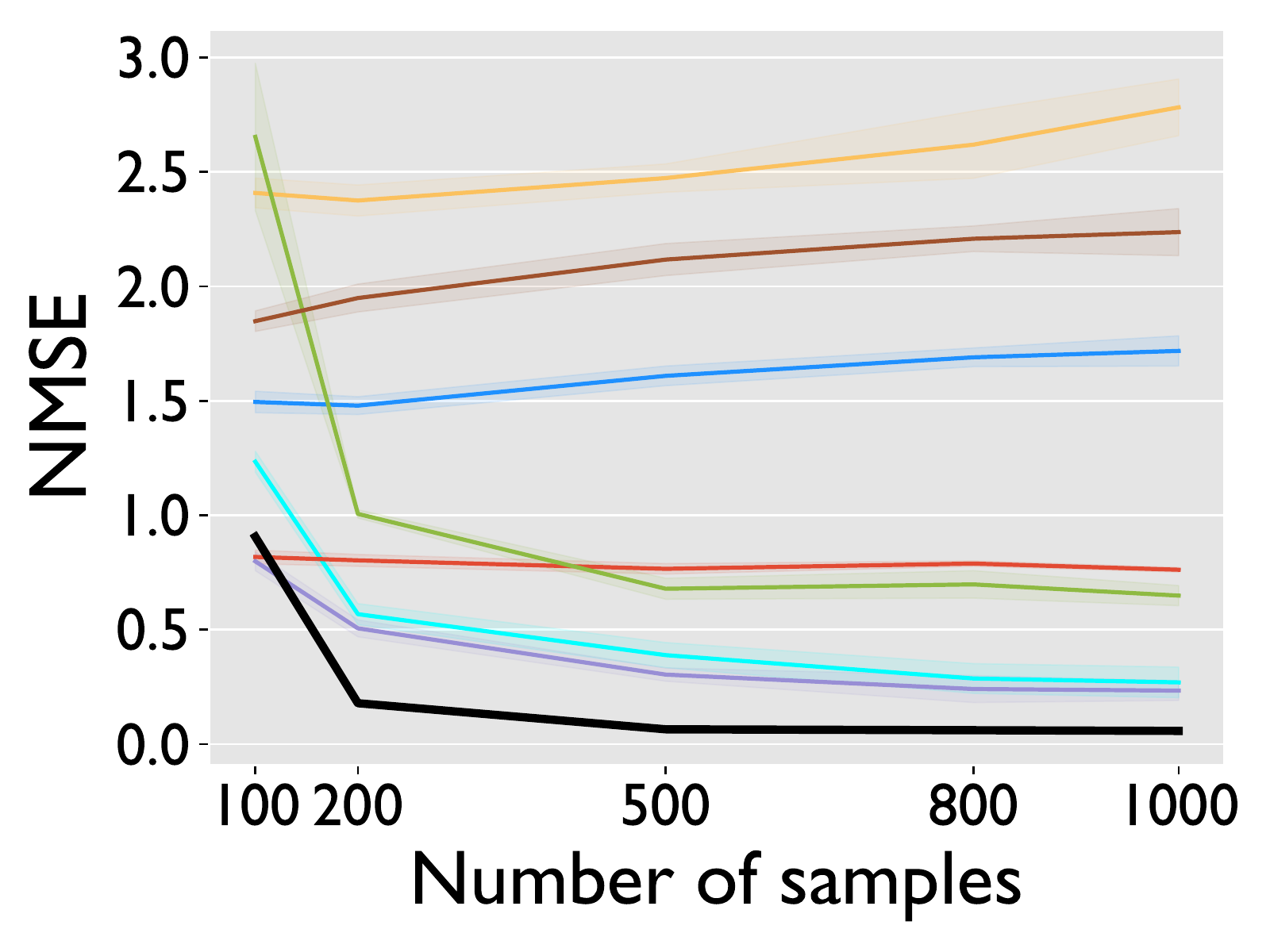}
    \caption{NMSE ($\downarrow$)}
    \end{subfigure} 
    \hspace{67pt}
    \caption{Plots illustrating performance metrics (a) SHD (lower is better), (b) SID (lower is better), (c) Time [seconds], (d) TPR (higher is better), (e) Total number of proposed edges and (f) NMSE (lower is better). Each metric is evaluated in two experimental scenarios: when varying the number of rows (upper figure) and when varying the number of samples (lower figure).}
    \label{fig:all_large}
\end{figure}

\subsection{Varying number of nodes or samples}
\label{app:sec:plots}

\mypar{Experiment 2: Varying number of nodes or samples} In addition to Fig.~\ref{fig:plots} we include the plots of Fig.~\ref{fig:all_large} for the experiments that vary the number of nodes of the ground truth DAG or the number of samples in the data. The plots include the methods LiNGAM, CAM, DAG-NoCurl, fGES, sortnregress and MMHC and additionally contain the metrics TPR and total number of edges regarding the unweighted adjacency matrix and NMSE with respect to the weighted approximation of the adjacency matrix.

\subsection{Larger DAGs} 
\label{app:sec:largeDAG}
\mypar{Experiment 3: Larger DAGs} In the large scale experiment we don't report SID as it is computationally too expensive for a large number of nodes. Since, our method achieved perfect reconstruction, we further investigate whether the true weights were recovered. Table \ref{tab:weight_reconstruction} reports metrics that evaluate the weighted approximation of the true adjacency matrix. Denoting with $|E|$ the number of edges of the ground truth DAG $\ml{G} = (V,E)$, we compute:
\begin{itemize}
    \item the average $L^1$ loss $\frac{\left\|\mb{A} - \optW\right\|_1}{|E|}$, 
    \item the Max-$L^1$ loss $\max_{i,j}\left|\mb{A}_{ij} - \optW_{ij}\right|$,
    \item the average $L^2$ loss $\frac{\left\|\mb{A} - \optW\right\|_2}{|E|}$ and
    \item the NMSE $\frac{\left\|\mb{A} - \optW\right\|_2}{\left\|\mb{A} \right\|_2}$.
\end{itemize}

\begin{table}[t]
\footnotesize
    \centering
    \caption{SparseRC weight reconstruction performance on larger DAGs.}
    \renewcommand{\arraystretch}{1.1}
    \begin{tabular}{@{}lllll@{}}
    \toprule
      Nodes $d$, samples $n$ &  Avg. $\normlone$ loss  & Max-$\normlone$ loss & Avg. $\normltwo$ loss & NMSE\\
    \midrule
    $d=200,\,n=500$ & $ 0.071 $ &  $ 0.317 $ & $ 0.003 $ & $ 0.087 $\\
$d=500,\,n=1000$ & $ 0.066 $ &  $ 0.275 $ & $ 0.002 $ & $ 0.079 $\\
$d=1000,\,n=5000$ & $ 0.050 $ &  $ 0.287 $ & $ 0.001 $ & $ 0.060 $\\
$d=2000,\,n=10000$ & $ 0.050 $ &  $ 0.388 $ & $ 0.001 $ & $ 0.062 $\\
$d=3000,\,n=10000$ & $ 0.054 $ &  $ 0.399 $ & $ 0.001 $ & $ 0.067 $\\
\bottomrule
    \end{tabular}
    \label{tab:weight_reconstruction}
\end{table}

\begin{figure}[t]
    \centering
    \begin{subfigure}{0.27\linewidth}
    \includegraphics[width=\linewidth]{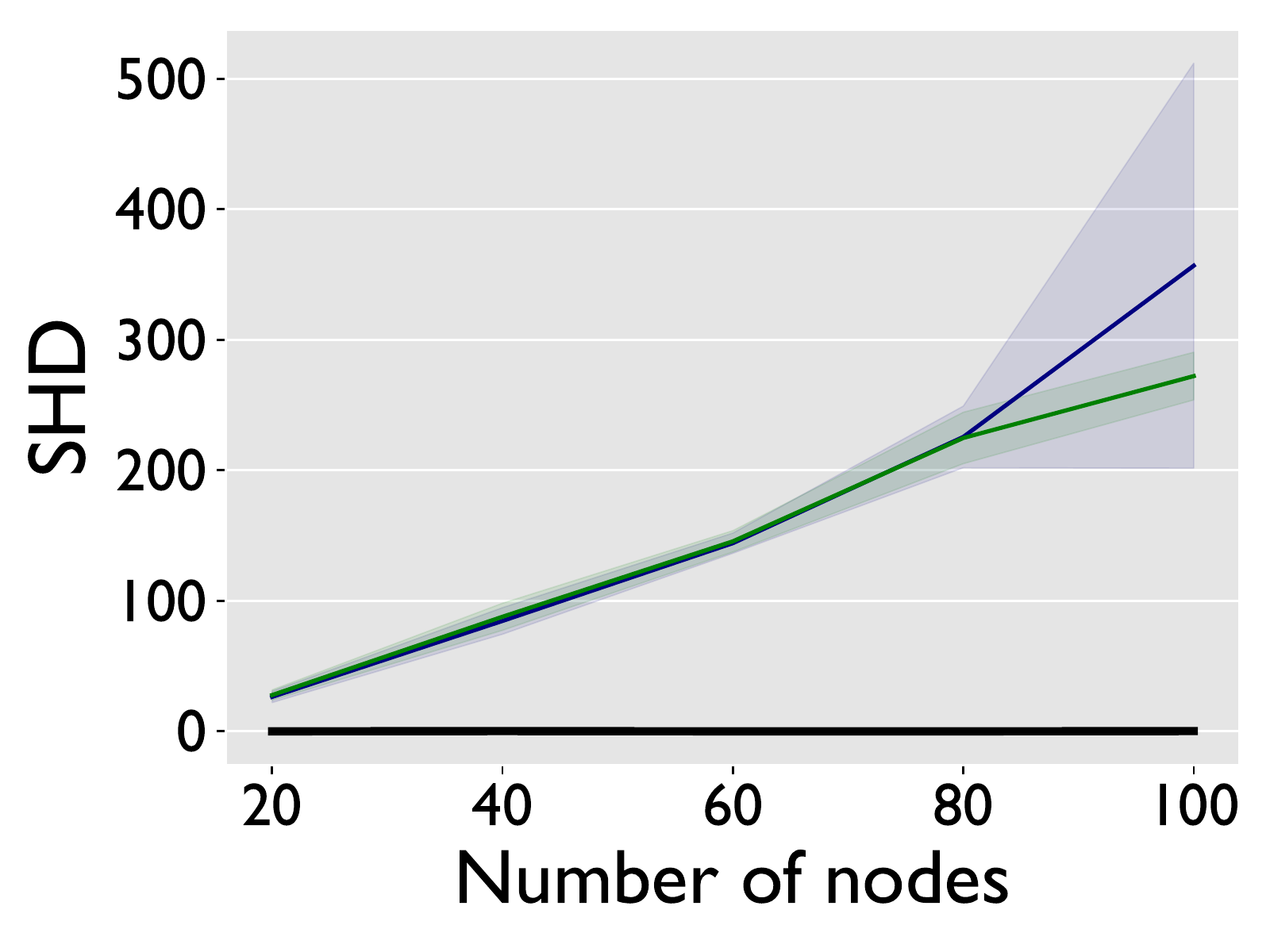}
    \caption{SHD ($\downarrow$)}
    \end{subfigure}
    \begin{subfigure}{0.27\linewidth}
    \includegraphics[width=\linewidth]{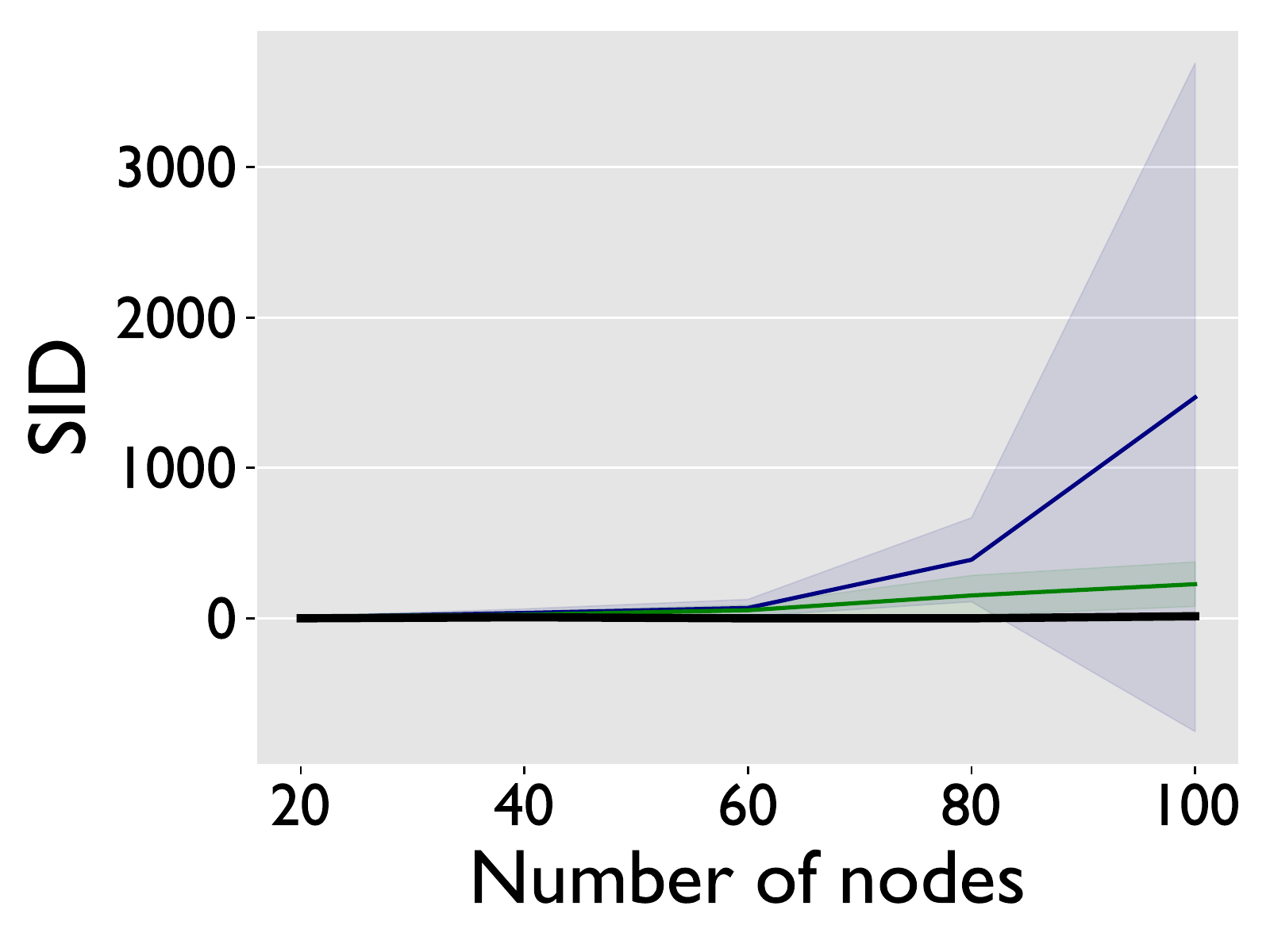}
    \caption{SID ($\downarrow$)}
    \end{subfigure}
    \begin{subfigure}{0.27\linewidth}
    \includegraphics[width=\linewidth]{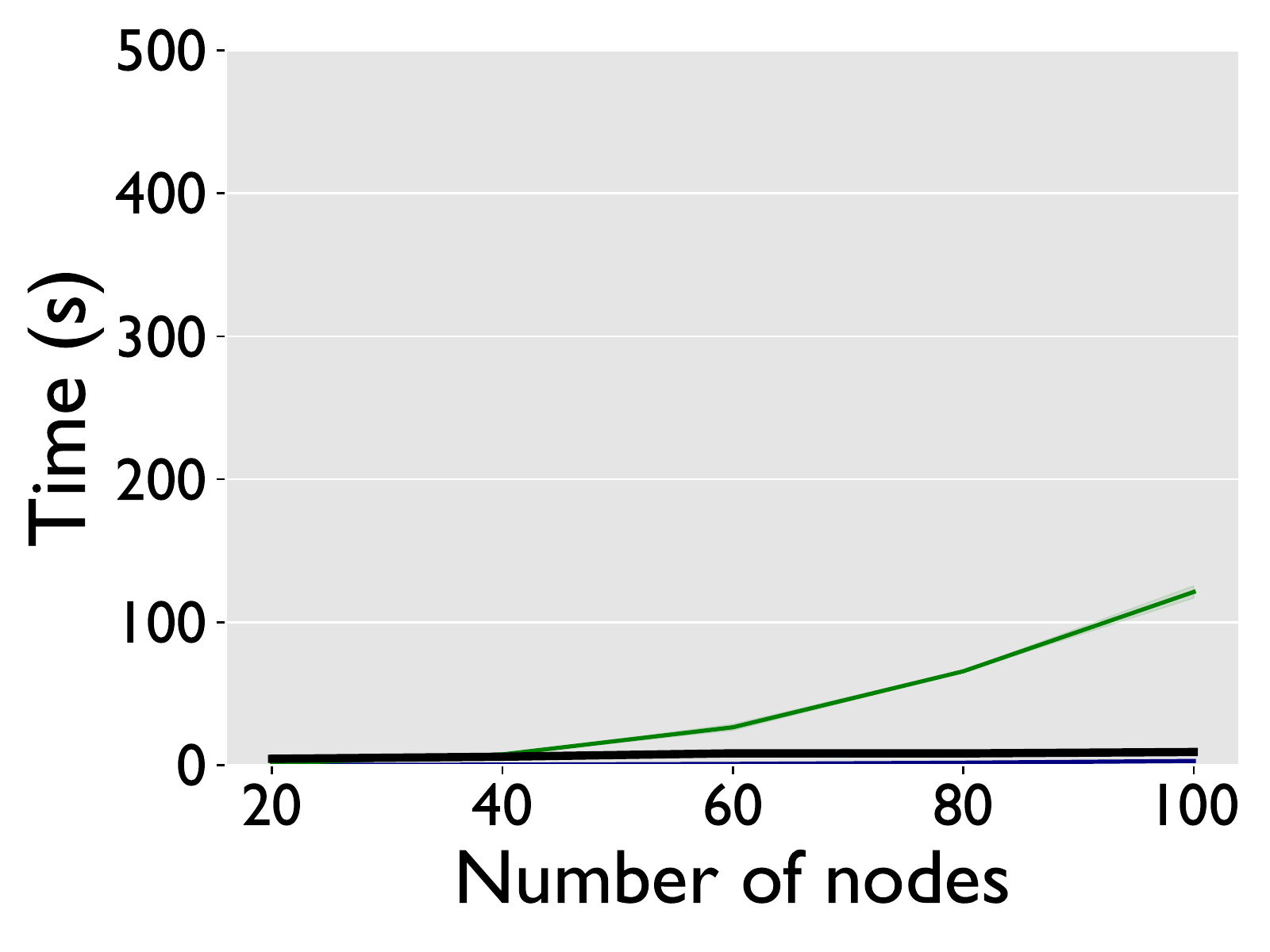}
    \caption{Runtime [s]}
    \end{subfigure} 
    \hfill
    \begin{subfigure}{0.15\linewidth}
    \includegraphics[trim={8.cm 0 0 0}, clip, width=\linewidth]{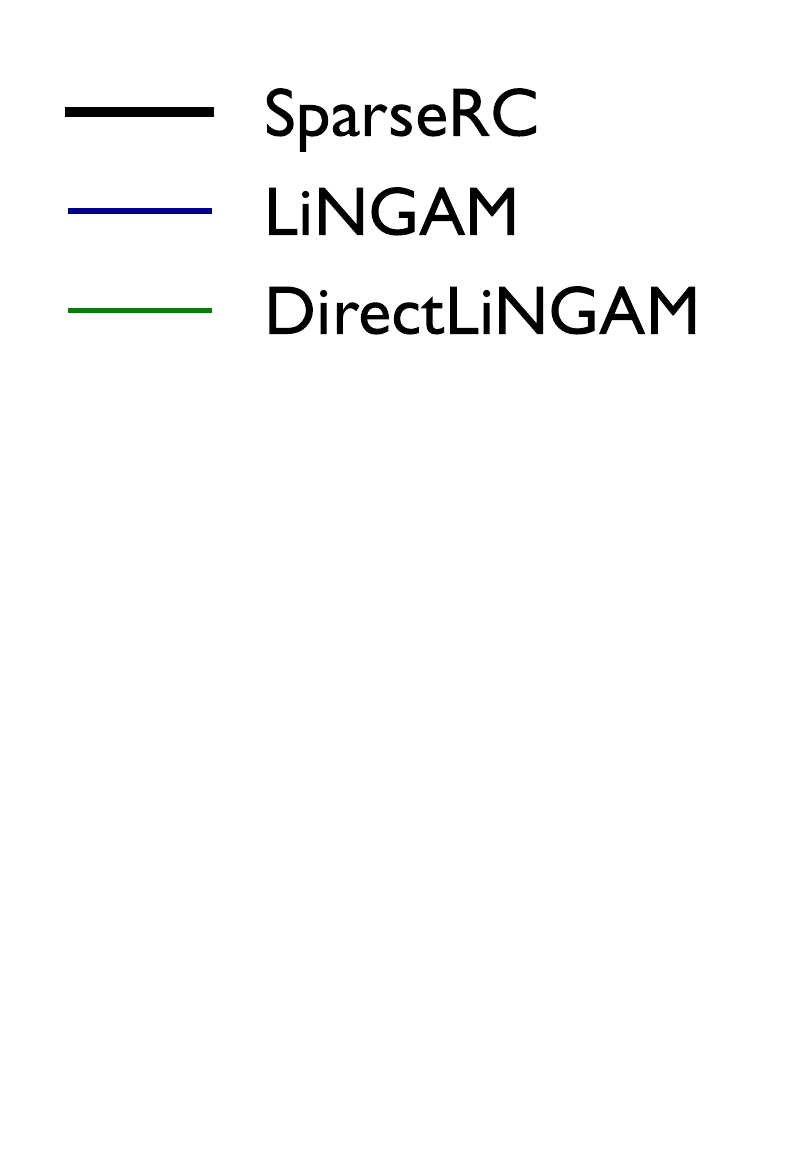}
    \vspace{-20pt}
    \end{subfigure}
    \caption{Evaluation of LiNGAM's performance when $\mb{N}_x=\bm{0}$.}
    \label{fig:lingam}
\end{figure}

\subsection{LiNGAM's performance}
\label{app:sec:lingam}
The subpar performance of DirectLiNGAM arises as a contradiction to the fact that \citep{shimizu2006lingam} provide the identifiability result for our setting. A possible reasoning to this can be that the corresponding linear noise variances as in~\eqref{eq:SEMfewcausesrec} are non i.i.d., where as both LiNGAM and DirectLiNGAM consider i.i.d. noise distribution. In Fig.~\ref{fig:lingam}, we evaluate LiNGAM and DirectLiNGAM with respect to ours,  using default settings, but with zero measurement noise $\mb{N}_x = 0$, which translates~\eqref{eq:SEMfewcauses} to a linear SEM with i.i.d. noise. We conclude that the methods cannot recover the true graph even in these settings.
The reasoning we give for their failure is the following: First, LiNGAM can stay on a local optimum due to a badly chosen initial state in the ICA step of the algorithm, or even compute a wrong ordering of the variables \citep{shimizu2011directlingam}. Secondly, DirectLiNGAM is guaranteed to converge to the true solution only if the conditions are striclty met, among which is the infinite number of available data, which is not the case in our experiments.

\subsection{Real Data}
\label{app:sec:sachs}
In the real dataset from \citep{sachs2005causal} we follow \citet{lachapelle2019granDAG} and only use the first $853$ samples from the dataset. Differences may occur between the reported results and the literature due to different choice of hyperparameters and the use of $853$ samples, where others might utilize the full dataset.

\mypar{Root causes in real data} To question whether the few root causes assumption is valid in the real dataset of \citep{sachs2005causal} we perform the following experiment. Using the ground truth matrix in the \citep{sachs2005causal} dataset we perform an approximation of the underlying root causes. As the known matrix is unweighted we choose to assign them the weights that would minimize the sparsity of the root causes $\left\|\mb{C}\right\|_1$.  The root causes $\mb{C}$ that are computed based on this optimization are depicted in Fig.~\ref{fig:root_causes_sachs_spikes}. In Fig.~\ref{fig:root_causes_sachs_histogram} We see the distribution of the values of root causes: there are many values with low magnitude close to zero and a few with significantly high magnitude. Also many at roughly the same location (Fig.~\ref{fig:root_causes_sachs_spikes}).

\begin{figure}[t]
    \centering
    \begin{subfigure}{0.68\linewidth}
    \includegraphics[width=\linewidth]{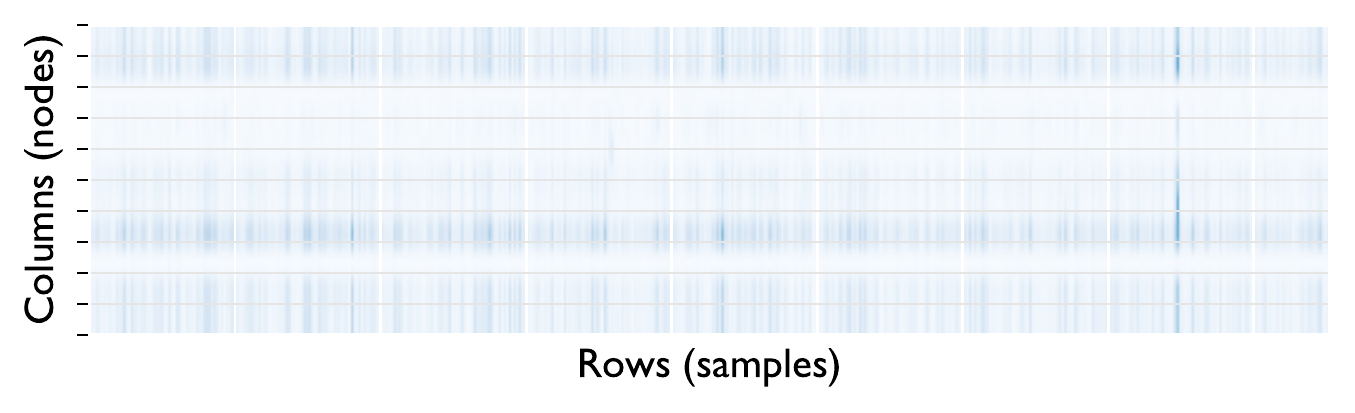}
    \caption{Illustration of $\mb{C}^T$}
    \label{fig:root_causes_sachs_spikes}
    \end{subfigure}
    \begin{subfigure}{0.3\linewidth}
    \includegraphics[width=\linewidth]{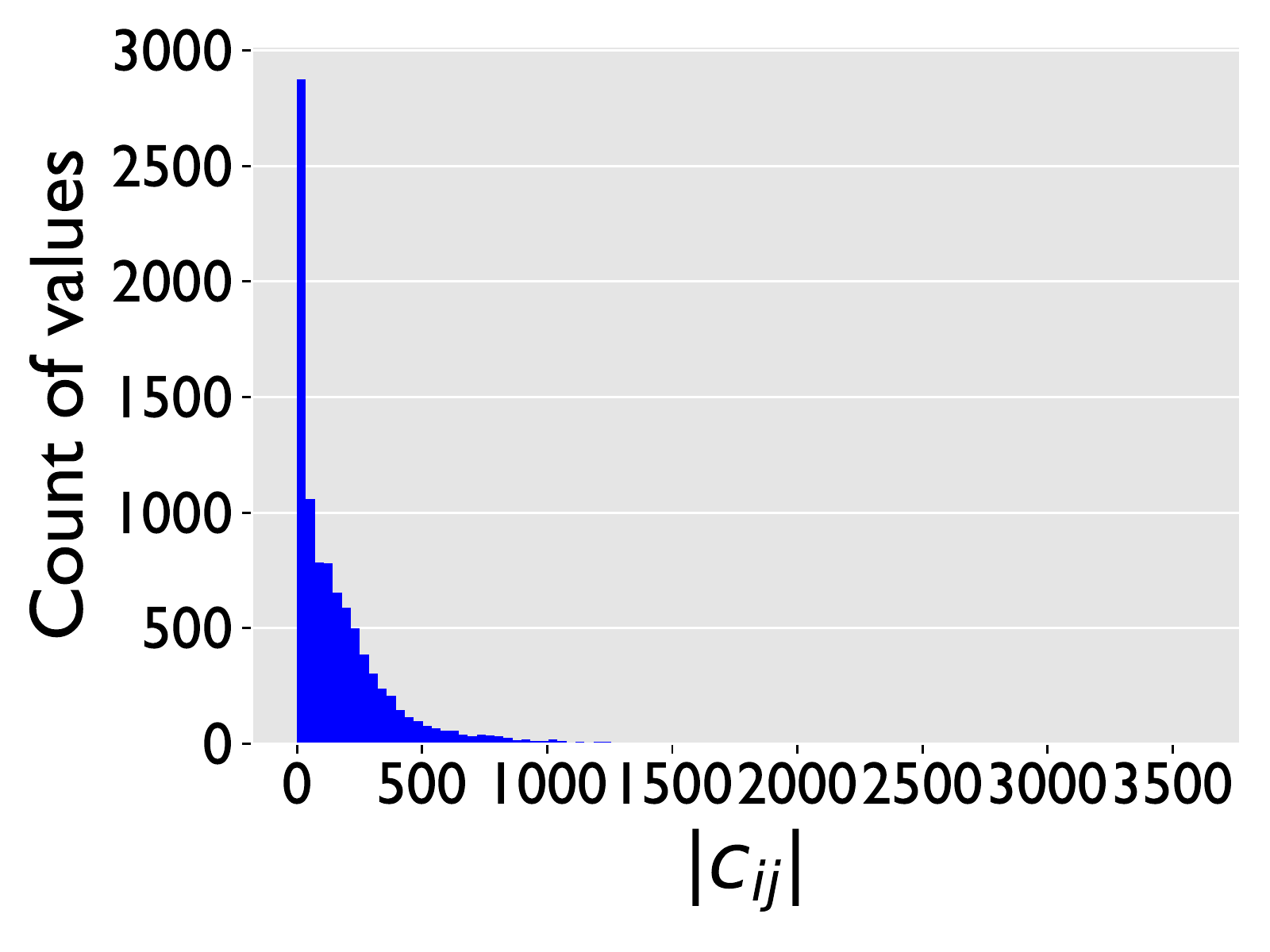}
    \caption{Histogram of values of $\mb{C}$}
    \label{fig:root_causes_sachs_histogram}
    \vspace{-2pt}
    \end{subfigure}
    \caption{Root causes $\mb{C}$ in \citep{sachs2005causal} dataset. (a) Root causes illustration: darker color means higher value. (b) Histogram of values of $\mb{C}$.}
    \label{fig:root_causes_sachs}
\end{figure}

\subsection{Implementation details}
\mypar{Training details} In the implementation of SparceRC we initiate a PyTorch nn.Module with a single $d\times d$ weighted matrix represented as a linear layer. The model is trained so that the final result will have a weight matrix which approximates the original DAG adjacency matrix. To train our model we use the Adam optimizer \citep{kingma2014adam} with learning rate $10^{-3}$. We choose $λ=10^{-3}$ as the coefficient for the $L^1$ regularizer of optimization problem \eqref{eq:MobiusDAGopt}. For NOTEARS we also choose $λ = 10^{-3}$. The rest of the methods are used with default parameters, except for the post-processing threshold which is always set to $0.09$, except to the case of larger weights (Table~\ref{tab:experiments}, row 6) where it is set slightly below the weights $ω = 0.4$. 

\mypar{Resources} Our experiments were run on a single laptop machine with 8 core CPU with 32GB RAM and an NVIDIA GeForce RTX 3080 GPU. 

\mypar{Licenses} We use code repositories that are open-source and publicly available on github. All repositories licensed under the Apache 2.0 or MIT license. In particular we use the github repositories of DAGMA \citep{bello2022dagma} \href{https://github.com/kevinsbello/dagma}{https://github.com/kevinsbello/dagma}, GOLEM \citep{ng2020GOLEM} \href{https://github.com/ignavierng/golem}{https://github.com/ignavierng/golem}, NOTEARS \citep{zheng2018notears} \href{https://github.com/xunzheng/notears}{https://github.com/xunzheng/notears}, DAG-NoCurl \citep{DAGnoCurl2021} \href{https://github.com/fishmoon1234/DAG-NoCurl}{https://github.com/fishmoon1234/DAG-NoCurl}, LiNGAM \citep{shimizu2006lingam} \href{https://github.com/cdt15/lingam}{https://github.com/cdt15/lingam} the implementation of sortnregress \citep{reisach2021beware} \href{https://github.com/Scriddie/Varsortability}{https://github.com/Scriddie/Varsortability} and the causal discovery toolbox \citep{cdt} \href{https://github.com/FenTechSolutions/CausalDiscoveryToolbox}{https://github.com/FenTechSolutions/CausalDiscoveryToolbox}. Our code is licensed under the MIT license and available publicly in \href{https://github.com/pmisiakos/SparseRC}{github}.

\section{Global Minimizer}
\label{appendix:sec:glob_min}

We provide the proof of Theorem~\ref{th:global_minimizer} of the main paper. The data $\mb{X}$ are assumed to be generated via noise-less root causes as described . 
We denote by $\optW$ the optimal solution of the optimization problem \eqref{eq:discrete_opt}. We want to prove the following result. 
    
\begin{theorem}
\label{app:th:global_minimizer}
    Consider a DAG with weighted adjacency matrix $\mb{A}$ with $d$ nodes. Given exponential (in $d$) number $n$ of samples $\mb{X}$ the matrix $\mb{A}$ is, with high probability, the global minimizer of the optimization problem \eqref{eq:discrete_opt}.
\end{theorem}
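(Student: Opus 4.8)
The plan is to show that for the true adjacency matrix $\mb{A}$ the associated root causes $\mb{C} = \mb{X}(\mb{I}+\transclos{\mb{A}})^{-1}$ have $\normlzero$-norm exactly equal to the number of nonzero entries of the generating matrix $\mb{C}$, and that any other acyclic $\mb{A}'$ yields strictly more nonzeros with high probability over the randomness of the data. First I would fix notation: write $\mb{M} = \mb{I}+\transclos{\mb{A}}$ (the reflexive-transitive closure of the true DAG, which is invertible), and for a candidate $\mb{A}'$ write $\mb{M}' = \mb{I}+\transclos{\mb{A}'}$. Given $\mb{X} = \mb{C}\mb{M}$, the objective at $\mb{A}'$ is $\|\mb{C}\mb{M}(\mb{M}')^{-1}\|_0 = \|\mb{C}\mb{T}\|_0$ where $\mb{T} = \mb{M}(\mb{M}')^{-1}$. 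Since $\mb{M},\mb{M}'$ are both unit upper-triangular (after topological sorting, or after accounting for the permutation if $\mb{A}'$ induces a different order), $\mb{T}$ is an invertible matrix, and at the true $\mb{A}$ we have $\mb{T} = \mb{I}$, giving objective value $\|\mb{C}\|_0$. So it suffices to prove that for every invertible $\mb{T}$ arising from a distinct acyclic $\mb{A}'$, with high probability $\|\mb{C}\mb{T}\|_0 > \|\mb{C}\|_0$.

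The core combinatorial step is to count, for a fixed $\mb{T} \ne c\mb{I}$ on any single coordinate, how a random sparse row $\mb{c}$ (each entry independently zero w.p.\ $1-p$, else uniform on $[0,1]$) behaves under $\mb{c}\mapsto \mb{c}\mb{T}$. The key observation: if $\mb{T}$ is not a generalized permutation matrix (a permutation times a diagonal), then there is some column $k$ of $\mb{T}$ with at least two nonzero entries, say in rows $i$ and $j$. Condition on $\mb{c}$ being nonzero in exactly those coordinates where the true support lies; with positive probability bounded below by a function of $p$, a row has both $c_i \ne 0$ and $c_j$ in a "generic" position so that $(\mb{c}\mb{T})_k$ is forced to be nonzero even when $c_k = 0$, i.e.\ $\mb{c}\mb{T}$ picks up a spurious nonzero; and because the uniform distribution is atomless, accidental cancellations $(\mb{c}\mb{T})_\ell = 0$ happen with probability zero. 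Hence in expectation $\mathbb{E}\|\mb{c}\mb{T}\|_0 \ge \mathbb{E}\|\mb{c}\|_0 + \gamma$ for some constant $\gamma = \gamma(p,\mb{T}) > 0$, strictly, whenever $\mb{T}$ is not a generalized permutation. One then argues separately that a generalized permutation $\mb{T} \ne \mb{I}$ is incompatible with $\mb{A}'$ being a valid distinct DAG adjacency (it would permute the topological order in a way that reintroduces a nonzero diagonal or a cycle, or leaves $\mb{A}'=\mb{A}$), so those cases are excluded.

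To upgrade the per-sample expectation gap to a high-probability statement over all $n$ samples and all candidate DAGs, I would: (i) apply a Chernoff/Hoeffding bound to $\|\mb{C}\mb{T}\|_0 = \sum_{\text{rows}} \|\mb{c}^{(r)}\mb{T}\|_0$, which concentrates around $n(\mathbb{E}\|\mb{c}\|_0 + \gamma)$, while $\|\mb{C}\|_0$ concentrates around $n\mathbb{E}\|\mb{c}\|_0$; for $n$ large enough the former exceeds the latter w.h.p.; (ii) take a union bound over a finite (though super-exponential in $d$) covering of the relevant candidate matrices $\mb{T}$ — here one needs that the objective $\|\mb{C}\mb{T}\|_0$ only depends on the combinatorial pattern of zeros of $\mb{T}$ (equivalently, on the DAG $\mb{A}'$ and its induced order), of which there are at most $2^{d^2}$, so $n$ exponential in $d$ suffices to beat the union bound. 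The main obstacle I anticipate is step (i)–(ii) done uniformly: making the expectation gap $\gamma$ bounded below by a quantity that does not degrade too fast as $\mb{T}$ ranges over all bad matrices (e.g.\ $\mb{T}$ extremely close to $\mb{I}$ but corresponding to a genuinely different DAG), and correctly handling the permutation/topological-order subtlety so that the reduction to "$\mb{T}$ invertible, not a generalized permutation" is airtight. The atomlessness of the uniform weights is what rules out measure-zero cancellations and is essential throughout; the "exponential in $d$" sample size is exactly the price of the union bound over DAG structures.
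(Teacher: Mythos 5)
Your reduction to $\left\|\mb{C}\mb{T}\right\|_0$ with $\mb{T}=(\mb{I}+\transclos{\mb{A}})(\mb{I}-\mb{A}')$, the observation that $\mathbb{E}\left\|\mb{c}\mb{T}\right\|_0=\sum_k\bigl(1-(1-p)^{s_k}\bigr)$ where $s_k$ is the number of nonzeros in column $k$ of $\mb{T}$ (so the per-column surplus is at least $p(1-p)$ whenever $s_k\ge 2$, which also disposes of your worry that the gap $\gamma$ might degrade), and the exclusion of generalized permutations via acyclicity are all sound. This is a genuinely different route from the paper's: the paper instead requires every $k$-sparse support pattern to occur many times among the rows of $\mb{C}$ and then pins down $\transclos{\optW}$ entry by entry through rank and linear-independence lemmas and a double induction along the topological order; your argument is global and, if completed, more transparent.

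The genuine gap is in step (ii). The claim that $\left\|\mb{C}\mb{T}\right\|_0$ ``only depends on the combinatorial pattern of zeros of $\mb{T}$'' is false as a deterministic statement, and atomlessness only kills cancellations for each \emph{fixed} $\mb{T}$. The minimization in \eqref{eq:discrete_opt} ranges over a continuum of weighted matrices chosen \emph{after} the data are realized, so for a fixed zero pattern the adversary can tune the nonzero entries of $\mb{T}$ to the realized $\mb{C}$, e.g.\ choose $T_{ik},T_{jk}$ with $c^{(r)}_iT_{ik}+c^{(r)}_jT_{jk}=0$ for some particular row $r$, cancelling an entry that your per-pattern expectation counts as nonzero. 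Hence a union bound over the $2^{d^2}$ zero patterns does not by itself suffice; you additionally need a general-position statement: almost surely over $\mb{C}$, for \emph{every} coefficient vector supported on $S_k$ with all entries nonzero, the number of rows $r$ with $\mathrm{supp}(\mb{c}^{(r)})\cap S_k\neq\emptyset$ but $(\mb{c}^{(r)}\mb{T})_k=0$ is bounded, e.g.\ by $\sum_{U\subseteq S_k,\,|U|\ge 2}(|U|-1)$, using that any $|U|$ rows sharing support $U$ are a.s.\ linearly independent. That bound is of order $d\,2^{d}$ per column, which is precisely what forces $n$ to be exponential in $d$ (consistent with the theorem and, incidentally, with the paper's bound); without this step the argument does not close, and the two obstacles you flagged instead (the size of $\gamma$ and the permutation case) are not where the difficulty lies.
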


In particular what we will show the following result, from which our theorem follows directly. 

\begin{theorem}
    Consider a DAG with weighted adjacency matrix $\mb{A}$.
    Given that the number of data $n$ satisfies 
    \begin{equation}
        n\geq \frac{2^{3d-2}d(d-1)}{(1-\delta)^2p^k(1-p)^{d-k}}
    \end{equation}
    where $k = \floor{dp}$ and 
    \begin{equation}
     \delta \geq \max\left\{\frac{1}{p^k(1-p)^{d-k}\binom{d}{k}}\sqrt{\frac{1}{2}\ln\left(\frac{1}{\epsilon}\right)},\,\binom{d}{k}\sqrt{\frac{1}{2}\ln\left(\frac{\binom{d}{k}}{\epsilon}\right)}\right\}.  
    \end{equation}
    Then with probability $(1-\epsilon)^2$ the solution to the optimization problem \eqref{eq:discrete_opt} coincides with the true DAG, namely $\optW = \mb{A}$.
\end{theorem}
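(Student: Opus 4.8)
The plan is to show that, with probability at least $(1-\epsilon)^2$, the true $\mb{A}$ \emph{strictly} beats every other acyclic matrix in the objective of \eqref{eq:discrete_opt}, by isolating two events each of probability $\ge 1-\epsilon$. Write $\mb{X}=\mb{C}(\mb{I}+\transclos{\mb{A}})$ as in \eqref{eq:SEMfewcauses_noisefree}; then substituting $\mb{A}'=\mb{A}$ into \eqref{eq:discrete_opt} recovers exactly $\|\mb{C}\|_0=\sum_i|S_i|$, where $S_i=\mathrm{supp}(\mb{C}_{i,:})$ is the random root-cause support of sample $i$ with $|S_i|\sim\mathrm{Binomial}(d,p)$. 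The first event is a routine concentration statement: the empirical counts of the samples realizing each fixed support pattern stay within a relative factor $1\pm\delta$ of their expectations, via Hoeffding's inequality union-bounded over the $\binom{d}{k}$ patterns of size $k=\floor{dp}$ — this is where the factors $p^k(1-p)^{d-k}$, $\binom{d}{k}$ and $\sqrt{\tfrac12\ln(\binom{d}{k}/\epsilon)}$ enter, and on this event $\|\mb{C}\|_0$ (the value attained by $\mb{A}$) is pinned down up to the allowed slack $\delta$. The substantive part is the second event: that no acyclic $\mb{A}'\ne\mb{A}$ attains an equal or smaller value.

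For a fixed acyclic $\mb{A}'\ne\mb{A}$, I would set $\mb{C}':=\mb{X}(\mb{I}+\transclos{\mb{A}'})^{-1}=\mb{C}\,\mb{M}$ with $\mb{M}:=(\mb{I}+\transclos{\mb{A}})(\mb{I}-\mb{A}')$, using the reflexive-transitive-closure identity $(\mb{I}-\mb{A}')(\mb{I}+\transclos{\mb{A}'})=\mb{I}$. Since $\mb{A}'\mapsto\mb{M}$ is injective and maps $\mb{A}$ to $\mb{I}$, we have $\mb{M}\ne\mb{I}$. Because the nonzero entries of $\mb{C}$ come from a continuous distribution, for this fixed $\mb{M}$ one has almost surely $(\mb{C}')_{ij}\ne0$ iff $S_i$ meets $\mathrm{supp}(\mb{M}_{:,j})$. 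The structural heart of the argument is to show that $\mb{M}\ne\mb{I}$ together with acyclicity of $\mb{A}$ and $\mb{A}'$ forces an ordered pair of nodes $(k_0,j_0)$ with $M_{k_0j_0}\ne0$ that cannot be "cancelled away", so that on each sample $i$ whose root-cause support is a prescribed pattern containing $k_0$ but not $j_0$, the entry $(\mb{C}')_{ij_0}$ is nonzero while $(\mb{C})_{ij_0}=0$ — an extra nonzero. Summing over such samples gives $\|\mb{C}'\|_0\ge\|\mb{C}\|_0+(1-\delta)\,n\,p^{k}(1-p)^{d-k}$ with high probability via a second Hoeffding bound, which exceeds the slack $\delta\cdot\Theta(ndp)$ allowed for $\mb{A}$, so this competitor loses.

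The remaining step — which I expect to be the main obstacle — is upgrading "for each fixed $\mb{A}'$" to "for all acyclic $\mb{A}'$ simultaneously", since there is a continuum of them and a carefully tuned $\mb{A}'$ could in principle exploit cancellations among the continuous values of $\mb{C}$ to suppress nonzeros below the naive combinatorial count. The resolution is that the minimizer of $\mb{A}'\mapsto\|\mb{X}(\mb{I}+\transclos{\mb{A}'})^{-1}\|_0$ may be taken on the generic stratum where the count equals its combinatorial value (the cancellation locus being a measure-zero union of hyperplanes in $\mb{A}'$-space), and there the value depends on $\mb{A}'$ only through a finite combinatorial datum — essentially a triple of support sets in $\{1,\dots,d\}$ together with the distinguished pair $(k_0,j_0)$. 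One then union-bounds the bad event over at most $2^{3d-2}\cdot d(d-1)$ such cases, each of Hoeffding type $e^{-2n\left((1-\delta)p^{k}(1-p)^{d-k}\right)^2}$; requiring this union bound to be at most $\epsilon$ yields precisely the stated exponential lower bound on $n$ and the second term in the condition on $\delta$. Intersecting the two events, with probability $(1-\epsilon)^2$ the true $\mb{A}$ is the strict, hence unique, global minimizer, i.e.\ $\optW=\mb{A}$.
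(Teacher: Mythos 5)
Your first event (Hoeffding concentration of the per-pattern counts, union-bounded over the $\binom{d}{k}$ supports of size $k=\floor{dp}$) matches the paper's probabilistic lemma. The gap is in the deterministic half. Your argument hinges on the almost-sure equivalence $(\mb{C}')_{ij}\ne0 \iff S_i\cap\mathrm{supp}(\mb{M}_{:,j})\ne\emptyset$, which is valid only for an $\mb{M}$ fixed \emph{independently} of $\mb{C}$; the minimizer of an $\normlzero$ objective is precisely the competitor that tunes itself to the data to sit \emph{on} the cancellation locus (cancellations lower the count), so restricting to a ``generic stratum'' of $\mb{A}'$ may exclude the true minimizer rather than capture it, and the claimed finite union bound over ``combinatorial data'' is not justified. (Incidentally, the constant $2^{3d-2}d(d-1)$ in the paper is the required number of repetitions of each support pattern, arising from a pigeonhole argument, not the cardinality of a union bound over competitors.) A second problem is the net count: exhibiting extra nonzeros in one column $j_0$ does not yield $\|\mb{C}'\|_0\ge\|\mb{C}\|_0+(1-\delta)np^k(1-p)^{d-k}$, because $\mb{C}'=\mb{C}\mb{M}$ can simultaneously \emph{lose} nonzeros elsewhere --- e.g.\ the diagonal of $\mb{M}=(\mb{I}+\transclos{\mb{A}})(\mb{I}-\mb{A}')$ equals $M_{jj}=1-\sum_{k>j}\transclos{a}_{jk}a'_{kj}$ when $\mb{A}'$ is not triangular in the ordering of $\mb{A}$, and this can vanish, so $j\in S_i$ need not give $(\mb{C}')_{ij}\ne0$. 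You would need to control gains minus losses over all patterns, which is the actual hard content.

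The paper circumvents both issues by arguing about the optimum $\optW$ directly rather than over all competitors: it uses the cancellation-immune identity $\mathrm{rank}(\optC_R)=\mathrm{rank}(\mb{C}_R)=k$ on row blocks $R$ sharing a support, a pigeonhole argument to extract ``$k$-pairs'' $(\mb{C}_R,\optC_R)$ in which $\optC_R$ has exactly $k$ linearly independent nonzero columns (else optimality of $\optW$ is violated by a counting bound), and then two inductions over column indices --- first forcing $\optW$ to be upper triangular in the topological order of $\mb{A}$, then matching $\transclos{\optW}$ to $\transclos{\mb{A}}$ entry by entry. To repair your route you would essentially have to reproduce that machinery; as written, the proposal does not establish the theorem.
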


\begin{remark}
    The reason we choose $k=\floor{dp}$ is that because of the Bernoulli trials for the non-zero root causes, the expected value of the cardinality of the support will be exactly $\floor{dp}$. Thus we expect to have more concentration on that value of $k$ which is something we desire, since in the proof we use all possible patterns with support equal to $k$.
\end{remark}

\begin{proof}[Proof sketch]
    Given a large (exponential), but finite number of data $n$ we can find, with high likelihood (approaching one as $n$ increases), pairs of sub-matrices of root causes matrices $\mb{C}$ (true) and $\optC$ (corresponding to the solution of optimization), with the same support and covering all possibilities of support with $k$ non-zero elements, due to the assumption of randomly varying support. Exploiting the support of the pairs we iteratively prove that the entries of the matrices $\mb{A}$ (true) and $\optW$ (solution of the optimization) are equal. The complete proof is provided next.
\end{proof}

We begin with some important observations and definitions. 

\begin{lemma}
    \label{lemma:trans_clos_bijectivity}
    If $\transclos{\optW} = \transclos{\mb{A}}$ then $\optW = \mb{A}$.
\end{lemma}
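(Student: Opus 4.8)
The plan is to read this lemma as the statement that the Floyd--Warshall closure map $\mb{A}\mapsto\transclos{\mb{A}}$ is injective on the class of (weighted) adjacency matrices of DAGs, and to get injectivity for free from the algebraic identity already recorded in the excerpt: whenever $\mb{A}^d=\bm{0}$ one has $(\mb{I}-\mb{A})(\mb{I}+\transclos{\mb{A}})=\mb{I}$. This identity says $\mb{I}+\transclos{\mb{A}}$ is invertible with inverse $\mb{I}-\mb{A}$, hence $\mb{A}$ is recovered from its closure by the explicit formula $\mb{A}=\mb{I}-(\mb{I}+\transclos{\mb{A}})^{-1}$, and the lemma follows because a matrix is determined by its inverse.

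Concretely I would proceed in three short steps. First, observe that $\optW$, being a feasible (indeed optimal) point of \eqref{eq:discrete_opt}, is acyclic, and that any acyclic weighted adjacency matrix on $d$ nodes is nilpotent with $\optW^d=\bm{0}$: a walk of length $d$ passes through $d+1$ vertices counted with multiplicity, so it repeats a vertex and therefore contains a directed cycle, which is impossible in a DAG; hence there are no walks of length $\geq d$ and every entry of $\optW^d$ is an empty sum. The same holds for $\mb{A}$. (Note this argument does not assume $\optW$ respects the topological order fixed for $\mb{A}$, so no ordering hypothesis is needed.) Second, apply the identity to both matrices: $(\mb{I}-\mb{A})(\mb{I}+\transclos{\mb{A}})=\mb{I}$ and $(\mb{I}-\optW)(\mb{I}+\transclos{\optW})=\mb{I}$, so both $\mb{I}+\transclos{\mb{A}}$ and $\mb{I}+\transclos{\optW}$ are invertible with inverses $\mb{I}-\mb{A}$ and $\mb{I}-\optW$ respectively. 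Third, assume $\transclos{\optW}=\transclos{\mb{A}}$; then $\mb{I}+\transclos{\optW}=\mb{I}+\transclos{\mb{A}}$, and taking inverses of both sides gives $\mb{I}-\optW=\mb{I}-\mb{A}$, i.e. $\optW=\mb{A}$.

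There is essentially no hard step here; the only point deserving a word of care is the justification that the acyclicity constraint in \eqref{eq:discrete_opt} already forces $\optW^d=\bm{0}$, which is exactly what licenses inverting $\mb{I}+\transclos{\optW}$ and not merely $\mb{I}+\transclos{\mb{A}}$. Once that is in place the conclusion is a two-line consequence of uniqueness of matrix inverses, and the lemma will then serve to reduce ``the optimizer recovers $\transclos{\mb{A}}$'' to ``the optimizer recovers $\mb{A}$'' in the proof of Theorem~\ref{th:global_minimizer}.
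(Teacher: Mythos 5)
Your proposal is correct and is essentially identical to the paper's own proof: both recover $\mb{A}$ from $\transclos{\mb{A}}$ by noting that $\mb{I}+\transclos{\mb{A}}$ has inverse $\mb{I}-\mb{A}$ and that a matrix determines its inverse. Your extra remark that the acyclicity of $\optW$ (hence $\optW^d=\bm{0}$) is what licenses inverting $\mb{I}+\transclos{\optW}$ is a point the paper leaves implicit, but it is the same argument.
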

\begin{proof}
    We have that 
    \begin{equation}
        \mb{I} + \transclos{\optW} = \mb{I} + \transclos{\mb{A}} \Leftrightarrow \left(\mb{I} + \transclos{\optW}\right)^{-1} = \left(\mb{I} + \transclos{\mb{A}}\right)^{-1}\Leftrightarrow \left(\mb{I} - \optW\right) = \left(\mb{I} - \mb{A}\right) \Leftrightarrow \optW = \mb{A}
    \end{equation}
\end{proof}

\begin{definition}
    We define $\optC = \mb{X} - \mb{X}\optW$ the root causes corresponding to the optimal adjacency matrix. 
\end{definition}

\begin{definition}
    Let $S\subset \{1,2,3,...,d\}$ be a set of indices. We say that a root causes vector $\mb{c}=(c_1,...,c_d)$ has support $S$ if $c_i = 0$ for $i\in [d]\setminus S$.
\end{definition}

\begin{definition}
    For a given support $S$ we consider the set $R\subset [n]$ of the rows of $\mb{C}$ that have support $S$. Then, $\mb{C}_R, \optC_R$ denote the submatrices consisting of the rows with indices in $R$. 
\end{definition}

\begin{lemma}
\label{lemma:rank}
    For any row subset $R\subset [n]$ we have that  $\text{rank}\left(\optC_R\right) = \text{rank}\left(\mb{C}_R\right)$
\end{lemma}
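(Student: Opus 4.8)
The plan is to reduce everything to the elementary fact that the rank of a matrix is unchanged under right-multiplication by an invertible matrix. First, recall from the noise-free data generation \eqref{eq:SEMfewcauses_noisefree} that $\mb{X} = \mb{C}\left(\mb{I} + \transclos{\mb{A}}\right)$, hence $\mb{C} = \mb{X}\left(\mb{I} + \transclos{\mb{A}}\right)^{-1} = \mb{X}\left(\mb{I} - \mb{A}\right)$, where we used the identity $\left(\mb{I} - \mb{A}\right)\left(\mb{I} + \transclos{\mb{A}}\right) = \mb{I}$ from Lemma~\ref{lemma:linear-sem-spectrum}. By definition $\optC = \mb{X} - \mb{X}\optW = \mb{X}\left(\mb{I} - \optW\right)$.

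Next I would observe that $\optW$ is acyclic, since it is a feasible point (indeed the optimum) of the constrained problem \eqref{eq:discrete_opt}. Consequently, after a suitable reordering of the vertices, $\optW$ is strictly upper triangular, so $\optW^d = \bm{0}$ and, exactly as in the derivation of \eqref{eq:SEMcloseform}, $\left(\mb{I} - \optW\right)\left(\mb{I} + \transclos{\optW}\right) = \mb{I}$; in particular $\mb{I} - \optW$ is invertible. The matrix $\mb{I} - \mb{A}$ is invertible for the same reason.

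Finally, restricting to the rows indexed by $R$ commutes with right-multiplication, i.e. $\left(\mb{X}\mb{M}\right)_R = \mb{X}_R\mb{M}$, so $\mb{C}_R = \mb{X}_R\left(\mb{I} - \mb{A}\right)$ and $\optC_R = \mb{X}_R\left(\mb{I} - \optW\right)$. Since $\mb{I} - \mb{A}$ and $\mb{I} - \optW$ are invertible $d\times d$ matrices, right-multiplying $\mb{X}_R$ by either of them preserves the rank, and therefore
\begin{equation}
\text{rank}\left(\mb{C}_R\right) = \text{rank}\left(\mb{X}_R\right) = \text{rank}\left(\optC_R\right).
\end{equation}

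The argument is essentially immediate; the only point needing a line of justification is that the optimizer $\optW$, being acyclic, makes $\mb{I} - \optW$ invertible — and this is the same nilpotency fact already used to obtain the closed form \eqref{eq:SEMcloseform}. So I do not anticipate a real obstacle here; the lemma is a bookkeeping step that will be used later (together with support/sparsity arguments on $R$) to compare $\mb{C}$ and $\optC$ and ultimately force $\optW = \mb{A}$.
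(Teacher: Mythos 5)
Your proposal is correct and is essentially the paper's own argument: the paper writes the same identity as $\optC_R\left(\mb{I} + \transclos{\optW}\right) = \mb{X}_R = \mb{C}_R\left(\mb{I} + \transclos{\mb{A}}\right)$ and concludes rank equality from invertibility of the two reflexive-transitive closures, which is exactly your observation phrased with the inverse matrices $\mb{I}-\mb{A}$ and $\mb{I}-\optW$. Your added justification that acyclicity of the feasible point $\optW$ makes $\mb{I}-\optW$ invertible is the right (and only) point that needs saying.
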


\begin{proof}
    We have that 
    \begin{equation}
        \optC\left(\mb{I} + \transclos{\optW}\right) = \mb{X} = \mb{C}\left(\mb{I} + \transclos{\mb{A}}\right)
    \end{equation}
     Therefore, since both $\mb{A}, \optW$ are acyclic and $\mb{I} + \transclos{\optW}, \mb{I} + \transclos{\mb{A}}$ are invertible we have that
     \begin{equation}
        \optC_R\left(\mb{I} + \transclos{\optW}\right) = \mb{C}_R\left(\mb{I} + \transclos{\mb{A}}\right) \Rightarrow \text{rank}\left(\optC_R\right) = \text{rank}\left(\mb{C}_R\right)
    \end{equation}
\end{proof}

\begin{lemma}
\label{lemma:lin_ind}
    For any row subset $R\subset [n]$ of root causes $\mb{C}$ with the same support $S$ such that $|S| = k$ and $|R| = r \geq k$ the nonzero columns of $\mb{C}_R$ are linearly independent with probability $1$ and therefore $\text{rank}\left(\mb{C}_R\right) = k$.
\end{lemma}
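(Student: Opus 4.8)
The plan is to reduce this to the standard fact that a random matrix whose entries have a joint density almost surely has full column rank, because the rank-deficient matrices form the zero set of a nontrivial polynomial, hence a Lebesgue-null set.

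First I would fix the support $S$ with $|S|=k$ and condition on the event that all rows indexed by $R$ have support exactly $S$. By the generative model of Theorem~\ref{th:identifiability}, conditioned on this event the entries of $\mb{C}_R$ in the columns indexed by $S$ are i.i.d.\ uniform on $(0,1)$, while every entry in a column outside $S$ is identically $0$; in particular the nonzero columns of $\mb{C}_R$ are almost surely exactly the $k$ columns indexed by $S$, since a uniform$(0,1)$ variable is a.s.\ nonzero. Collecting these $k$ columns into an $r\times k$ matrix $M$, it suffices to show that $M$ has rank $k$ with probability $1$ — and here the hypothesis $r=|R|\geq k$ is exactly what makes rank $k$ possible for $k$ vectors living in $\RR^r$.

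Next I would note that $\mathrm{rank}(M)<k$ forces every $k\times k$ minor of $M$ to vanish, in particular $\det M_0=0$, where $M_0$ is the submatrix formed by the first $k$ rows of $M$. The map $M_0\mapsto\det M_0$ is a polynomial in the $k^2$ entries of $M_0$ that is not the zero polynomial (it equals $1$ at $M_0=\mb{I}$), so its zero set in $\RR^{k^2}$ has Lebesgue measure $0$. Since the entries of $M_0$ are i.i.d.\ uniform on $(0,1)$, their joint law is absolutely continuous with respect to Lebesgue measure on $\RR^{k^2}$, whence $\Pr[\det M_0=0]=0$ and therefore $\Pr[\mathrm{rank}(M)<k]=0$. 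Thus $\mathrm{rank}(\mb{C}_R)=\mathrm{rank}(M)=k$ almost surely; if one needs the conclusion simultaneously over the finitely many possible supports $S$, a union bound over events of probability $1$ still yields probability $1$.

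The only non-probabilistic ingredient is the routine lemma that the zero locus of a nonzero polynomial is Lebesgue-null; I do not expect a genuine obstacle here. The one point to be careful about is the conditioning step — verifying that after conditioning on the support pattern the surviving entries are genuinely i.i.d.\ uniform — together with the harmless observation that whether the uniform distribution is taken on $(0,1)$, $[0,1)$, or $[0,1]$ is immaterial since $\{0\}$ and $\{1\}$ are null sets. This is in essence a ``generic position'' argument.
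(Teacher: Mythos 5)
Your proposal is correct and is essentially the same argument as the paper's: both are the standard ``generic position'' fact that a random matrix with a joint density is almost surely of full column rank, the paper phrasing it as a uniformly random column almost surely avoiding the span of the other $k-1$ columns, and you phrasing it via the vanishing of a $k\times k$ minor as the zero set of a nonzero polynomial. Your version is, if anything, slightly more careful about the conditioning on the support pattern and the reduction to a Lebesgue-null set, but the underlying idea is identical.
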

\begin{proof}
    Assume $\mb{c}_1,...,\mb{c}_k$ are the non-zero columns of $\mb{C}_R$. Then each $\mb{c}_i$ is a vector of dimension $r$ whose each entry is sampled uniformly at random in the range $(0,1]$. Given any $k-1$ vectors out of $\mb{c}_1,...,\mb{c}_k$ their linear span can form a subspace of $(0,1]^r$ of dimension at most $k-1$. However, since every $\mb{c}_i$ is sampled uniformly at random from $(0,1]^r$ and $r\geq k >k-1$ the probability that $\mb{c}_i$ lies in the linear span of the other $k-1$ vectors has measure $0$. Therefore, the required result holds with probability $1$.
\end{proof}

\begin{lemma}
    With probability $(1-ε)^2$ for every different support $S$ with $|S|=k$ there are rows $R$ such that $\mb{C}_R$ have support $S$ and 
    \begin{equation}
        |R| > 2^{3d-2}d(d-1)
    \end{equation}
\end{lemma}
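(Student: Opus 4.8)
Fix the threshold $T := 2^{3d-2}d(d-1)$ and put $q := p^{k}(1-p)^{d-k}$, the probability that a given row of $\mb{C}$ has support exactly a fixed set $S$ with $|S|=k$; crucially $q$ is the same for every such $S$. The goal is to show that, with probability at least $(1-\epsilon)^{2}$, the number of rows of $\mb{C}$ with support $S$ exceeds $T$ for all $\binom{d}{k}$ sets $S$ of size $k$ at once. The conceptual core is just a Hoeffding bound on each binomial count together with a union bound over the $\binom{d}{k}$ patterns; I would carry this out in two stages, which is what produces the factor $(1-\delta)^{2}$ in the hypothesis on $n$ and the probability $(1-\epsilon)^{2}$ in the conclusion.

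First I would control the total number $N$ of rows whose support has cardinality exactly $k$: this is a sum of $n$ i.i.d.\ Bernoulli$\bigl(\binom{d}{k}q\bigr)$ indicators with mean $n\binom{d}{k}q$, so a one-sided Hoeffding inequality gives $\Pr\!\bigl[N<(1-\delta)n\binom{d}{k}q\bigr]\le\exp\!\bigl(-2\delta^{2}n(\binom{d}{k}q)^{2}\bigr)$. Demanding this be at most $\epsilon$, and using $n\ge 1$ to drop the $\sqrt{n}$ factor, gives the first term of the $\max$ defining $\delta$, namely $\delta\ge\frac{1}{\binom{d}{k}\,p^{k}(1-p)^{d-k}}\sqrt{\frac{1}{2}\ln(1/\epsilon)}$. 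So with probability at least $1-\epsilon$ we have $N\ge(1-\delta)n\binom{d}{k}q$.

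Next I would condition on which rows these are. Because $q$ does not depend on $S$, the supports of those $N$ rows are i.i.d.\ uniform over the $\binom{d}{k}$ size-$k$ sets, so the count $N_{S}$ of rows with support $S$ is, conditionally, binomial with mean $N/\binom{d}{k}$. A second Hoeffding bound and a union bound over the $\binom{d}{k}$ choices of $S$ give $\Pr\!\bigl[\exists S:\, N_{S}<(1-\delta)N/\binom{d}{k}\bigr]\le\binom{d}{k}\exp\!\bigl(-2\delta^{2}N/\binom{d}{k}^{2}\bigr)$, which, on the first-stage event (where $N\ge 1$), is at most $\epsilon$ as soon as $\delta\ge\binom{d}{k}\sqrt{\frac{1}{2}\ln(\binom{d}{k}/\epsilon)}$ — the second term of the $\max$. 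Intersecting the two good events (joint probability at least $(1-\epsilon)^{2}$ by the tower rule), every $S$ with $|S|=k$ then has
\begin{equation}
    N_{S}\ \ge\ (1-\delta)\frac{N}{\binom{d}{k}}\ \ge\ (1-\delta)^{2}\,n\,q\ \ge\ 2^{3d-2}d(d-1),
\end{equation}
where the last inequality is exactly the hypothesis $n\ge\frac{2^{3d-2}d(d-1)}{(1-\delta)^{2}p^{k}(1-p)^{d-k}}$; taking $R$ to be the corresponding rows gives $|R|=N_{S}>T$ as claimed.

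The only genuinely delicate points are (i) the observation that, conditioned on which rows attain support cardinality $k$, their supports really are i.i.d.\ uniform over size-$k$ sets — exactly the place where equidistribution of $q$ over $S$ matters — and (ii) the bookkeeping that turns the two Hoeffding exponents into the stated closed forms for $\delta$, in particular the loose-but-convenient use of $n\ge 1$ and $N\ge 1$ to discard the $\sqrt{n}$ and $\sqrt{N}$ factors. I expect (ii) to be what needs the most care to write cleanly; everything else is routine. For context, this lemma is the sampling ingredient behind Theorem~\ref{th:global_minimizer}: once every size-$k$ support pattern is realized by enough rows, Lemmas~\ref{lemma:rank} and~\ref{lemma:lin_ind} make the corresponding $\mb{C}_{R}$ and $\optC_{R}$ of full rank $k$, which is what lets one match $\mb{A}$ with $\optW$ entrywise and deduce $\transclos{\optW}=\transclos{\mb{A}}$, hence $\optW=\mb{A}$ by Lemma~\ref{lemma:trans_clos_bijectivity}.
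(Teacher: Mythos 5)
Your proposal is correct and follows essentially the same route as the paper: a first Hoeffding bound on the total count $N$ of rows with support cardinality $k$, then a conditional Hoeffding bound plus a union bound over the $\binom{d}{k}$ patterns (using that the supports are conditionally uniform), intersected to give probability $(1-\epsilon)^2$, with the same closed forms for $\delta$ and the same use of the hypothesis on $n$ in the final chain of inequalities. The only cosmetic difference is that you keep the factor of $n$ in the first Hoeffding exponent and then discard it, whereas the paper writes the weaker empirical-mean form directly.
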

\begin{proof}
    Set $l = 2^{3d-2}d(d-1)$ and $K = \frac{l\binom{d}{k}}{(1-\delta)}$. Also set $N$ be the random variable representing the number of root causes $\mb{c}$ with $|supp(\mb{c})| = k$ and $N_i$ the number of repetitions of the $i-$th $k-$support pattern, $i=1,...,\binom{d}{k}$.

    We first use conditional probability. 
    \begin{equation}
        \prob{N\geq K\cap N_i\geq l,\,\forall i} = \prob{ N_i\geq l,\,\forall i|N\geq k}\prob{N\geq K}
    \end{equation}
    Now we will show that $\prob{N\geq K}\geq (1-\epsilon)$ and $\prob{ N_i\geq l,\,\forall i|N\geq k}\geq (1-\epsilon)$ using Chernoff bounds.
    From Hoeffding inequality we get:
    \begin{equation}
        \prob{N\leq (1-\delta)\mu} \leq e^{-2\delta^2\mu^2/n^2} 
    \end{equation}
    where $\mu = np^k(1-p)^{d-k}\binom{d}{k}$ is the expected value of $N$, $\delta \geq \frac{1}{p^k(1-p)^{d-k}\binom{d}{k}} \sqrt{\frac{1}{2}\ln\left(\frac{1}{\epsilon}\right)}$ and $n\geq\frac{1}{1-\delta}\frac{K}{p^k(1-p)^{d-k}\binom{d}{k}}$ so 
    \begin{equation}
        \prob{N\leq K} \leq \prob{N\leq (1-\delta)\mu} \leq e^{-2\delta^2\mu^2/n^2}\leq ε \Leftrightarrow \prob{N\geq K} \geq 1-\epsilon
    \end{equation}
    Now given that we have at least $K$ root causes with support exactly $k$, we will show that, on exactly $K$ such root causes, each different pattern will appear at least $l$ times with probability $(1-\epsilon)$. 
    From the union bound
    \begin{equation}
        \prob{\bigcup_iN_i \leq l }\leq \sum_i \prob{N_i\leq l}
    \end{equation}
    So we need to show that $\prob{N_i\leq l} \leq \frac{\epsilon}{\binom{d}{k}}$. We use again the Hoeffding inequality. 
    \begin{equation}
        \prob{N_i\leq (1-δ)μ} \leq e^{-2\delta^2\mu^2/K^2} 
    \end{equation}
    where the expected value is $μ = \frac{K}{\binom{d}{k}} = \frac{l}{1-\delta}$ since now the probability is uniform over all possible $k-$sparsity patterns. Given $δ \geq \binom{d}{k}\sqrt{\frac{1}{2}\ln\left(\frac{\binom{d}{k}}{\epsilon}\right)}$ we derive
    \begin{equation}
        \prob{N_i\leq l}= \prob{N_i\leq (1-δ)μ} \leq e^{-2\delta^2\mu^2/K^2} \leq \frac{\epsilon}{\binom{d}{k}}
    \end{equation}
    The result follows.
\end{proof}

\begin{lemma}
\label{lemma:pairs_of_matrices}
    Assume that the data number $n$ is large enough so that for every different support $S$ we have the rows $R$ such that $\mb{C}_R$ have support $S$ and $|R| > 2^dl$ where 
    \begin{equation}
        l>2^{2d-2}d(d-1)=2^d\sum_{k=2}^d\binom{d}{k}k(k-1)
    \end{equation}
    Then there exist rows $\hat{R}$ such that $|\hat{R}|>k$, $\mb{C}_{\hat{R}}$ has support $S$ and $\optC_{\hat{R}}$ has support $\hat{S}$ with $|\hat{S}| = k$. Moreover, both the non-zero columns of $\mb{C}_{\hat{R}}, \optC_{\hat{R}}$ form two linearly independent set of vectors. In words, this means that the data are many enough so we can always find pairs of corresponding submatrices of root causes with each consisting of $k$ non-zero columns. 
\end{lemma}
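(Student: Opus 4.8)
The plan is to fix an arbitrary support $S$ with $|S|=k$ together with its large row set $R$ (so $|R|>2^d l$), to carve out of $R$ — by a pigeonhole over the possible supports of the optimal root causes — a sub-collection $\hat{R}$ on which the rows of $\optC_{\hat{R}}$ all share one common support $\hat{S}$, and then to pin $|\hat{S}|=k$ using the rank identities of Lemmas~\ref{lemma:rank} and~\ref{lemma:lin_ind} together with the $\normlzero$-optimality of $\optW$. First I would record the algebraic reduction that is used throughout: from the proof of Lemma~\ref{lemma:rank} we have $\optC(\mb{I}+\transclos{\optW})=\mb{X}=\mb{C}(\mb{I}+\transclos{\mb{A}})$, so with the fixed invertible matrix $M:=(\mb{I}+\transclos{\mb{A}})(\mb{I}-\optW)$ it holds that $\optC=\mb{C}M$, and hence $\optC_R=\mb{C}_R M=(\mb{C}_R)_S\,M_S$ for any row subset $R$, where $(\mb{C}_R)_S$ is the $|R|\times k$ block of nonzero columns of $\mb{C}_R$ and $M_S$ is the $k\times d$ block of the rows of $M$ indexed by $S$.

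Then I would run the pigeonhole. Grouping the rows $r\in R$ by the \emph{exact} support of $\optC_r$ produces at most $2^d$ classes, so since $|R|>2^d l$ some class $\hat{R}\subseteq R$ has $|\hat{R}|>l$; because $l>2^{2d-2}d(d-1)\ge k$ this gives $|\hat{R}|>k$, and every row of $\optC_{\hat{R}}$ shares one exact support $\hat{S}$. By Lemma~\ref{lemma:lin_ind} the $k$ nonzero columns of $\mb{C}_{\hat{R}}$ are linearly independent with probability $1$, so $(\mb{C}_{\hat{R}})_S$ has full column rank $k$. Two things follow. First, by Lemma~\ref{lemma:rank}, $\text{rank}(\optC_{\hat{R}})=\text{rank}(\mb{C}_{\hat{R}})=k$, so $|\hat{S}|\ge k$. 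Second, $(\mb{C}_{\hat{R}})_S\,M_{S,\,[d]\setminus\hat{S}}=(\optC_{\hat{R}})_{[d]\setminus\hat{S}}=\bm{0}$ and the injectivity of $(\mb{C}_{\hat{R}})_S$ force $M_{S,\,[d]\setminus\hat{S}}=\bm{0}$, while each column of $\optC_{\hat{R}}$ indexed by $\hat{S}$ is nonzero; hence $\hat{S}$ is exactly the column support of $M_S$, and I set $t_S:=|\hat{S}|=|\text{colsupp}(M_S)|\ge k$, a number independent of which large subclass was chosen.

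The remaining and central step is $t_S\le k$, which I would obtain from the global $\normlzero$-optimality of $\optW$, namely $\|\optC\|_0\le\|\mb{C}\|_0$. For every true support pattern $S'$ occurring in the data, $\text{colsupp}(M_{S'})$ contains at least $|S'|$ columns (the rows of $M$ indexed by $S'$ are independent, $M$ being invertible), and each $\optC_r$ with $r$ of true support $S'$ is a \emph{generic} combination $\mb{C}_r^{(S')}M_{S'}$ of those rows, hence has support equal to $\text{colsupp}(M_{S'})$; summing, $\|\optC\|_0=\sum_{S'}|R(S')|\,|\text{colsupp}(M_{S'})|\ge\sum_{S'}|R(S')|\,|S'|=\|\mb{C}\|_0$, so equality holds and $|\text{colsupp}(M_{S'})|=|S'|$ for every pattern, in particular $t_S=k$. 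Combining with $|\hat{S}|=t_S$ gives $|\hat{S}|=k$; since $\optC_{\hat{R}}$ then has exactly $k$ nonzero columns and rank $k$, those columns are linearly independent, and by Lemma~\ref{lemma:lin_ind} so are the $k$ nonzero columns of $\mb{C}_{\hat{R}}$, which finishes the proof.

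\textbf{Main obstacle.} The hard part will be the ``generic combination'' step, because $M$ — through $\optW$ — itself depends on the random values of $\mb{C}$, so one cannot simply invoke a measure-zero argument with $M$ held fixed. I expect this to be handled by conditioning on the combinatorial data (which rows realize which support patterns), under which the numerical entries remain independent uniforms and enter only through the almost-sure full-column-rank statement of Lemma~\ref{lemma:lin_ind}, and by using the pigeonhole to isolate a \emph{single} large sub-collection on which a common support is forced, rather than arguing about every individual row. The explicit threshold $|R|>2^d l$ with $l>2^{2d-2}d(d-1)=2^d\sum_{k=2}^d\binom{d}{k}k(k-1)$ is then calibrated precisely so that this extraction — and the downstream iteration of the main theorem over all ordered pairs of coordinates inside all supports — goes through with the claimed failure probability.
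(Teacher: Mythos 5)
Your overall skeleton matches the paper's: pigeonhole over the at most $2^d$ possible supports of $\optC$ within the block $R$ to extract a subclass $\hat{R}$ with $|\hat{R}|>l$ and a common support $\hat{S}$, the rank identity plus Lemma~\ref{lemma:lin_ind} to force $|\hat{S}|\geq k$, and $\normlzero$-optimality to cap $|\hat{S}|$ at $k$. The reformulation $\optC=\mb{C}M$ with $M=(\mb{I}+\transclos{\mb{A}})(\mb{I}-\optW)$ and the identification $\hat{S}=\mathrm{colsupp}(M_S)$ are correct and a clean way to organize the first two steps.

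The gap is in the capping step. Your identity $\|\optC\|_0=\sum_{S'}|R(S')|\,|\mathrm{colsupp}(M_{S'})|$ requires that \emph{every} row $\optC_r=\mb{C}_{r,S'}M_{S'}$ have support exactly equal to $\mathrm{colsupp}(M_{S'})$, i.e., no accidental cancellation in any single row. As you yourself note, this cannot be obtained by a genericity or measure-zero argument, because $M$ depends on $\optW$, which is a function of the entire random matrix $\mb{C}$; conditioning on the combinatorial pattern does not decouple the row values from $M$. Without that claim you only get the per-row inequality $\|\optC_r\|_0\leq|\mathrm{colsupp}(M_{S'})|$, which points in the wrong direction, so the chain $\|\optC\|_0\geq\|\mb{C}\|_0$ is not established and the conclusion $t_S=k$ does not follow. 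The paper's proof avoids per-row statements entirely: within each pair of a true support and an $\optC$-support, either the subclass has at least $k$ rows, in which case the rank argument shows it contributes no deficit to $\|\optC\|_0-\|\mb{C}\|_0$, or it has fewer than $k$ rows, in which case its deficit is crudely bounded by $k(k-1)$; summing over all $2^d\sum_{k}\binom{d}{k}$ subclasses bounds the total possible deficit by $2^{2d-2}d(d-1)<l$, while a subclass $\hat{R}$ with $|\hat{R}|>l$ and at least $k+1$ nonzero columns would create a surplus exceeding $l$, contradicting optimality. This counting is exactly what the thresholds $|R|>2^dl$ and $l>2^{2d-2}d(d-1)$ are calibrated for; your proposal uses them only to conclude $|\hat{R}|>k$, which signals that the central estimate is missing. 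Repairing your route means replacing the exact identity by this deficit bound, at which point you have reproduced the paper's argument.
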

\begin{proof}
    According to the assumption there exists a row-submatrix $\mb{C}_R$ where all rows have support $R$ and $|R|>2^dl$. Group the root causes $\optC_{R}$ into all different supports, which are $2^d$ in number.
    Take any such group $R'\subset R$ of rows of $\optC$ that all have the same support $S'$. If $|R'|\geq k$ then from Lemma~\ref{lemma:rank} $\text{rank}\left(\optC_{R'}\right) = \text{rank}\left(\mb{C}_{R'}\right) = k$ so $\optC_{R'}$ will have at least $k$ non-zero columns, and since the support is fixed, it will have at least $k$ non-zero elements at each row, which means at least as many non-zeros as $\mb{C}_{R'}$. 
    Therefore $\optC_{R'}$ can only have less non-zero elements if $|R'| < k$, and in that case $\mb{C}_{R'}$ has at most $k(k-1)$ more elements.
    If we count for all $k=1,...,d$ all different supports of $\optC_{R'}$ for all possible supports of $\mb{C}_R$ this gives that $\optC$ can have at most $\sum_{k=2}^d\binom{d}{k}2^dk(k-1)$ less non-zero elements compared to $\mb{C}$.

    Due to the pigeonhole principle, there exists $\hat{R}\subset R$ and $|\hat{R}|>l$ with $\optC_{\hat{R}}$ all having the same support $\hat{S}$, not necessarily equal to $S$. According to our previous explanation we need to have at least $k$ non-zero columns in $\optC_{\hat{R}}$. If we had $k+1$ columns then this would give $l$ more non-zero elements, but 
    \begin{equation}
        l > 2^{2d-2}d(d-1) = 2^{d}d(d-1)2^{d-2} =2^d\sum_{k=0}^{d-2} \binom{d-2}{k-2}d(d-1) = 2^d\sum_{k=2}^d\binom{d}{k}k(k-1)
    \end{equation}
    So then $\|\optC\|_0 > \|\mb{C}\|$ which is a contradiction due to the optimality of $\optW$.
    Therefore, $\optC_{\hat{R}}$ has exactly $k$ non-zero columns which necessarily need to be linearly independent in order to have $\text{rank}\left(\optC_{\hat{R}}\right) = \text{rank}\left(\mb{C}_{\hat{R}}\right)=k$ as necessary. The linear independence of the columns of $\mb{C}_{\hat{R}}$ follows from Lemma~\ref{lemma:lin_ind} since $l\gg k$.
\end{proof}

\begin{definition}
A pair $\mb{C}_R, \optC_R$ constructed according to Lemma~\ref{lemma:pairs_of_matrices} that have fixed support $S,\hat{S}$ respectively with cardinality $k$ each and $|R|$ large enough so that $\text{rank}\left(\optC_{{R}}\right) = \text{rank}\left(\mb{C}_{{R}}\right)=k$, will be called a \textit{$k-$pair} of submatrices.
\end{definition}
\begin{remark}
For notational simplicity we will drop the index $R$ whenever the choice of the rows according to the sparsity pattern $S$ is defined in the context. 
\end{remark}

The complete Theorem~\ref{th:global_minimizer} follows after combining the following two propositions \ref{prop:upper_triang} and \ref{prop:unique_opt}. Before, the proof of the first proposition we need the following lemmas.
\begin{lemma}
\label{lemma:upper_triang}
    $\mb{A}$ is (strictly) upper triangular if and only if $\transclos{\mb{A}}$ is (strictly) upper triangular.
\end{lemma}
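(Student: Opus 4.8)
The plan is to argue the two implications separately, using throughout that in the paper's setting $\mb{A}$ is acyclic, hence nilpotent with $\mb{A}^d = \bm{0}$. This nilpotency is what makes the statement go through: both $\mb{A}$ and $\transclos{\mb{A}}$ then have vanishing diagonal, so the parenthetical ``(strictly)'' is immaterial and it suffices to prove the equivalence for \emph{strictly} upper triangular matrices; moreover nilpotency is genuinely needed, since for a general (non-nilpotent) matrix the equivalence can fail, as the powers appearing in $\transclos{\mb{A}} = \mb{A} + \mb{A}^2 + \dots + \mb{A}^{d-1}$ may cancel a nonzero diagonal or sub-diagonal entry of $\mb{A}$.

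The forward direction is immediate. The strictly upper triangular $d\times d$ matrices are closed under both matrix multiplication and addition, so if $\mb{A}$ is strictly upper triangular then every power $\mb{A}^k$ is, and therefore so is the finite sum $\transclos{\mb{A}} = \sum_{k=1}^{d-1}\mb{A}^k$.

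For the reverse direction I would invoke the identity $(\mb{I} - \mb{A})(\mb{I} + \transclos{\mb{A}}) = \mb{I}$, which holds because $\mb{A}^d = \bm{0}$ (this is exactly the computation leading to \eqref{eq:SEMcloseform}), rewritten as $\mb{I} - \mb{A} = (\mb{I} + \transclos{\mb{A}})^{-1}$. If $\transclos{\mb{A}}$ is strictly upper triangular, then $\mb{I} + \transclos{\mb{A}}$ is upper triangular with all diagonal entries equal to $1$; such a matrix is invertible and its inverse is again upper triangular with all diagonal entries equal to $1$ (for instance via the finite Neumann series $(\mb{I} + \transclos{\mb{A}})^{-1} = \sum_{j \geq 0} (-1)^j (\transclos{\mb{A}})^j$, which equals $\mb{I}$ plus strictly upper triangular terms). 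Hence $\mb{I} - \mb{A}$ is upper triangular with unit diagonal, which forces $\mb{A}$ to be strictly upper triangular.

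I expect the only real subtlety to lie in the reverse direction: one must resist the temptation of an entrywise argument on $\transclos{\mb{A}} = \mb{A} + \dots + \mb{A}^{d-1}$, which does not work because of possible cancellation among the powers, and instead route through the algebraic inverse relation, where nilpotency is used. The remaining ingredients --- closure of strictly upper triangular matrices under sums and products, and the fact that a unitriangular matrix has a unitriangular inverse --- are standard and require no extra work.
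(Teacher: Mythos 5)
Your proof is correct and follows essentially the same route as the paper's: the forward direction by closure of strictly upper triangular matrices under sums and products, and the reverse direction via the identity $(\mb{I}+\transclos{\mb{A}})^{-1}=\mb{I}-\mb{A}$ together with the fact that the inverse of a (uni)triangular matrix is (uni)triangular. Your additional remarks on why nilpotency is needed and why an entrywise argument would fail are accurate but not required.
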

\begin{proof}
    If $\mb{A}$ is (strictly) upper triangular then it is straightforward to show the same for $\transclos{\mb{A}} = \mb{A} + \mb{A}^2 + ... + \mb{A}^{d-1}$. For the other way round, remember that the inverse of an upper triangular matrix (if exists) it is an upper triangular matrix. However, the inverse of $\mb{I} + \transclos{\mb{A}}$ is $\mb{I}-\mb{A}$ which means, that if $\transclos{\mb{A}}$ is strictly upper triangular, then $\mb{I} + \transclos{\mb{A}}$ and so is $\mb{I}-\mb{A}$ are upper triangular which in turn reduces to $\mb{A}$ being strictly upper triangular. 
\end{proof}

\begin{lemma}
\label{lemma:zero_data_zero_spectra}
    Consider $\mb{C}, \optC$ a $k-$pair. Also let $\mb{X}$ be the corresponding submatrix of data. If $\mb{X}_{:,i} = \bm{0}$ then
    \begin{equation}
        \optC_{:,i} = \bm{0} \text{ and } \transclos{\optw}_{ji} = 0 \quad \forall j\in \text{supp}\left(\optC\right)
    \end{equation}
\end{lemma}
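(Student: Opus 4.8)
The plan is to unwind the definition $\optC = \mb{X} - \mb{X}\optW = \mb{X}(\mb{I}-\optW)$, which gives $\mb{X} = \optC\left(\mb{I} + \transclos{\optW}\right)$ since $(\mb{I}-\optW)$ is the inverse of $\mb{I}+\transclos{\optW}$, and then read off the $i$-th column and use the linear independence of the nonzero columns of $\optC$ that is part of the definition of a $k$-pair (established in Lemma~\ref{lemma:pairs_of_matrices}).

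First I would expand the $i$-th column: because $\optW$ is acyclic (hence no self-loops and $\transclos{\optW}$ has zero diagonal), the $i$-th column of $\mb{I}+\transclos{\optW}$ has a $1$ in position $i$ and entry $\transclos{\optw}_{ji}$ in each position $j\neq i$, so $\mb{X}_{:,i} = \optC_{:,i} + \sum_{j\neq i}\transclos{\optw}_{ji}\,\optC_{:,j}$. Since $\optC_{:,j}=\bm{0}$ for $j\notin\text{supp}(\optC)$, the hypothesis $\mb{X}_{:,i}=\bm{0}$ collapses this to the relation $\optC_{:,i} + \sum_{j\in\text{supp}(\optC),\,j\neq i}\transclos{\optw}_{ji}\,\optC_{:,j} = \bm{0}$ involving only the (linearly independent) columns indexed by $\text{supp}(\optC)$.

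Next I would rule out $i\in\text{supp}(\optC)$ by contradiction: in that case the displayed identity would be a nontrivial linear combination of $\{\optC_{:,j}:j\in\text{supp}(\optC)\}$ equal to $\bm{0}$, since the coefficient of $\optC_{:,i}$ is $1$, contradicting linear independence. Hence $\optC_{:,i}=\bm{0}$, and the identity reduces to $\sum_{j\in\text{supp}(\optC)}\transclos{\optw}_{ji}\,\optC_{:,j}=\bm{0}$; applying linear independence once more forces $\transclos{\optw}_{ji}=0$ for every $j\in\text{supp}(\optC)$, which is the claim.

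I do not anticipate a real obstacle: the lemma is essentially an immediate consequence of the $k$-pair structure. The only thing requiring care is the bookkeeping in the first step, namely confirming that the diagonal of $\mb{I}+\transclos{\optW}$ contributes a coefficient of exactly $1$ (which uses acyclicity of $\optW$) and that only columns in $\text{supp}(\optC)$ enter the surviving sum, so that the linear-independence statement we invoke is exactly the one guaranteed by the definition of a $k$-pair.
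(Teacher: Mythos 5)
Your proposal is correct and follows essentially the same route as the paper: expand $\mb{X}_{:,i}$ as $\optC_{:,i}$ plus a linear combination of the support columns of $\optC$ weighted by $\transclos{\optw}_{ji}$, then invoke the linear independence of those columns (guaranteed by the $k$-pair construction) to kill both $\optC_{:,i}$ and the coefficients. Your extra bookkeeping about the zero diagonal of $\transclos{\optW}$ is implicit in the paper's version but is the same argument.
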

\begin{proof}
    \begin{equation}
        \bm{0}= \mb{X}_{:,i} = \sum_{j=1}^d \transclos{\optw}_{ji}\optC_{:,j} + \optC_{:,i} = \sum_{j\in \text{supp}\left(\optC\right)} \transclos{\optw}_{ji}\optC_{:,j} + \optC_{:,i}
    \end{equation}
    If $\optC_{:,i}\neq \bm{0}$ then $i \in \text{supp}\left(\optC\right)$, and the expression above constitutes a linear combination of the support columns with not all coefficients non-zero. This contradicts Lemma~\ref{lemma:lin_ind}. Thus $\mb{C}_{:,i}= \bm{0}$ and $\transclos{\optw}_{ji} = 0 \quad \forall j\in \text{supp}\left(\optC\right)$.
\end{proof}

We are now ready to prove Prop. \ref{prop:upper_triang}.

\begin{proposition}
\label{prop:upper_triang}
    If the data $\mb{X}$ are indexed such that $\mb{A}$ is (strictly) upper triangular, then so is $\optW$.     
\end{proposition}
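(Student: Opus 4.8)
The plan is to use Lemma~\ref{lemma:upper_triang} to reduce the statement to showing that $\transclos{\optW}$ is strictly upper triangular, and then to clear the sub-diagonal entries of $\transclos{\optW}$ one column at a time using the $k$-pair machinery of Lemma~\ref{lemma:pairs_of_matrices} together with Lemma~\ref{lemma:zero_data_zero_spectra}. Concretely, fix a node index $i$. Since $\mb{A}$ is strictly upper triangular, every ancestor of $i$ in its DAG has index $<i$, so $\transclos{a}_{ji}=0$ for all $j\ge i$. Hence for any block of rows on which $\mb{C}$ has a fixed support $S$ disjoint from $\{1,\dots,i\}$ (e.g.\ any $k$-subset $S\subseteq\{i+1,\dots,d\}$), the $i$-th column of $\mb{X}$ restricted to those rows vanishes, because $\mb{X}_{:,i}=\mb{C}_{:,i}+\sum_j\transclos{a}_{ji}\mb{C}_{:,j}=\bm{0}$. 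Passing to a $k$-pair supported on such rows and applying Lemma~\ref{lemma:zero_data_zero_spectra}, I would get $i\notin\hat{S}$ and $\transclos{\optw}_{mi}=0$ for every $m$ in the associated $\optC$-support $\hat{S}$. Ranging over all admissible supports $S$ then yields $\transclos{\optw}_{mi}=0$ for every $m$ that occurs in at least one such $\hat{S}$.

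To finish I would induct on $i=1,2,\dots,d$. Once columns $1,\dots,i-1$ of $\transclos{\optW}$ are known to have zero sub-diagonal, a block-triangular inverse argument in the spirit of Lemma~\ref{lemma:upper_triang} shows $\optW$ has no edge from $\{i,\dots,d\}$ into $\{1,\dots,i-1\}$, so the entries $\transclos{\optw}_{\ell i}$ with $\ell>i$ are governed entirely by the sub-DAG of $\optW$ on $\{i,\dots,d\}$; this makes the shortage of admissible supports near the top of the order (where $d-i<k$) harmless, so the argument of the first paragraph goes through on the block $\{i,\dots,d\}$. The remaining step — and this is the main obstacle — is the \emph{covering claim}: the $\optC$-supports $\hat{S}$ produced by the admissible $\mb{C}$-supports must collectively exhaust all of $\{i+1,\dots,d\}$. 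Granting this, $\transclos{\optw}_{mi}=0$ for every $m>i$, which closes the induction and shows $\transclos{\optW}$, hence $\optW$, is strictly upper triangular.

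The covering claim I would establish by a counting argument in the style of Lemma~\ref{lemma:pairs_of_matrices}: rank is preserved on every row block (Lemmas~\ref{lemma:rank} and \ref{lemma:lin_ind}), so each admissible $\optC$-support again has size exactly $k$, and the optimality $\|\optC\|_0\le\|\mb{C}\|_0$ prevents these supports from being compressed onto fewer coordinates than the $\mb{C}$-supports they come from. Consequently no coordinate $m>i$ can be avoided by all the $\hat{S}$, for otherwise one could either exhibit an adjacency matrix whose root causes are strictly sparser than $\optC$ (contradicting optimality of $\optW$) or derive a rank deficiency incompatible with Lemma~\ref{lemma:zero_data_zero_spectra}. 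I expect the delicate part of the whole proof to be precisely this bookkeeping — tracking all admissible sparsity patterns and their images simultaneously and turning a missed coordinate into a violation of optimality.
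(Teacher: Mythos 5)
Your opening move (reduce to $\transclos{\optW}$ via Lemma~\ref{lemma:upper_triang}, then kill sub-diagonal entries of $\transclos{\optW}$ column by column using supports placed to the right of $i$ together with Lemma~\ref{lemma:zero_data_zero_spectra}) matches the paper's strategy, but the proof has two genuine gaps where you have correctly located the difficulty and not resolved it. First, the covering claim does not follow from the counting you sketch: Lemmas~\ref{lemma:rank} and~\ref{lemma:lin_ind} force each $\hat{S}$ to have size exactly $k$, but nothing in that bookkeeping prevents every $\hat{S}$ from avoiding some fixed coordinate $m_0>i$ --- missing a coordinate creates neither extra sparsity (so no contradiction with optimality of $\optW$) nor a rank deficit. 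The paper sidesteps the covering claim entirely: to show $\transclos{\optw}_{l i}=0$ for all $i<l$, it does \emph{not} take supports to the right of $l$ and hope $l$ lands in some $\hat{S}$; instead it takes a support that \emph{contains} $l$ (namely $S=\{l,d-k+2,\dots,d\}$), deduces $\optC_{:,j}=\bm{0}$ for $j<l$ from Lemma~\ref{lemma:zero_data_zero_spectra}, and then observes $\mb{X}_{:,l}=\mb{C}_{:,l}=\optC_{:,l}\neq\bm{0}$, so $l\in\mathrm{supp}(\optC)$ and Lemma~\ref{lemma:zero_data_zero_spectra} applies to $l$ itself. That is the missing idea: you guarantee membership of the target index in $\hat{S}$ by construction rather than by a union argument.

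Second, your treatment of the columns $i>d-k$ does not work. When $d-i<k$ there is no size-$k$ support contained in $\{i+1,\dots,d\}$ at all, and restricting to the sub-DAG on $\{i,\dots,d\}$ does not manufacture such data: the samples still have exactly $k$ nonzero root causes spread over all of $\{1,\dots,d\}$. The paper needs a separate, more delicate induction ($Q(l)$ in the proof of Proposition~\ref{prop:upper_triang}) for exactly this regime, using supports of the form $\{d-k,\dots,l,l+2,\dots,d\}$ (i.e., $k$ of the last $k+1$ slots with one hole that is slid rightward), and along the way it must also establish $\transclos{\optw}_{jl}=\transclos{a}_{jl}$ for $d-k\le j<l$ in order to cancel the contributions of the nonzero predecessor columns before Lemma~\ref{lemma:lin_ind} can be invoked. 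Your proposal contains neither this construction nor the accompanying identification of transitive weights, so the argument cannot be completed as written.
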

\begin{remark}
    Based on lemma \ref{lemma:upper_triang}, in our proof of Prop. \ref{prop:upper_triang} next we may only care about the transitive weights, namely we will only need to show that $\transclos{\optw}_{ji}=0$ for all $i<j$.
\end{remark}
\begin{proof}
    We first choose a $k-$pair $\mb{C}, \optC$ such that the support $S$ of $\mb{C}$ is concetrated to the last $k$ columns, namely  $\mb{C}_{:,i} = 0$ for $i=1,...,d-k$. Then the corresponding data submatrix $\mb{X}$ will necessarily have the same sparsity pattern since the values are computed according to predecessors, which in our case lie in smaller indices. Thus $\mb{X}_{:,i} = \bm{0} \quad \forall i=1,...,d-k$ and necessarily $\optC$ will follow the same sparsity pattern. Moreover, according to Lemma~\ref{lemma:zero_data_zero_spectra} we get that 
    \begin{equation}
        \transclos{\optw}_{ji} = 0, \text{ for all } 1\leq i \leq d-k\text{ and } d-k+1\leq j \leq d.
    \end{equation}
    We notice that the desired condition is fullfilled for $i = d-k$, i.e. it has zero influence from a node with larger index. We now prove the same sequentially for $i=d-k-1,d-k-2,...,1$ by moving each time (one step) left the leftmost index of the support $S$. We prove the following statement by induction:
    \begin{equation}
        P(l):\, \transclos{\optw}_{ji} = 0 \text{ for } l<j,\,i<j,\,i\leq d-k
    \end{equation}
    We know that $P(d-k)$ is true and $P(1)$ gives the required relation for all indices $i=1,...,d-k$.
    Now we assume that $P(l)$ holds. If we pick $k-$pair $\mb{C},\optC$ such that $\mb{C}$ has support $S = \{l,d-k+2,d-k+3...,d\}$. 
    Then $\mb{X}_{:,i} = \bm{0}$ for all $i<l$ which with Lemma~\ref{lemma:zero_data_zero_spectra} gives $\optC_{:,i} = \bm{0}$ for $i<l$ which means $\text{supp}\left(\optC\right) \subset \{l,l+1,...,d\}$ and $\transclos{\optw}_{ji} = 0$ for $j\in \text{supp}\left(\optC\right)$. Note, that by the induction hypothesis we have that $\transclos{\optw}_{jl} = 0$ for all $l<j$.
    However it is true that $\mb{X}_{:,l} = \mb{C}_{:,l}$ and also 
    \begin{equation}
        \mb{X}_{:,l} = \sum_{j\in \text{supp}(\optC)}^d   \transclos{\optw}_{jl} \optC_{:,j} + \optC_{:,l}   = \optC_{:,l}
    \end{equation}
    Therefore $l\in \text{supp}\left(\optC\right)$ and thus 
    $\transclos{\optw}_{li} = 0$ for all $i<l$ which combined with $P(l)$ gives 
    \begin{equation}
        \transclos{\optw}_{ji} = 0 \text{ for } l-1<j,\,i<j,\, i\leq d-k
    \end{equation}
    which is exactly $P(l-1)$ and the induction is complete. 

    Now it remains to show that $\transclos{\optw}_{ji} = 0$ for $d-k+1\leq i \leq d$ and $i<j$. 
    We will again proceed constructively using induction. This time we will sequentially choose support $S$ that is concentrated on the last $k+1$ columns and at each step we will move the zero column one index to the right. For $l=d-k+1,...,d$ let's define:
    \begin{equation}
        Q(l):\, \transclos{\optw}_{jl} = 0 \text{ for } l<j \text{ and } \transclos{\optw}_{jl} = \transclos{a}_{jl} \text{ for } d-k\leq j<l
    \end{equation}
    First we show the base case $Q(d-k+1)$. For this we choose a $k-$pair $\mb{C},\optC$ such that $\mb{C}$ has support $S = \{d-k,d-k+2,d-k+3,...,d\}$. It is true that $\mb{X}_{:,i} = \bm{0}$ for $i=1,...,d-k-1$ hence $\optC_{:,i} = \bm{0}$ for $i\leq d-k-1$ and therefore the node $d-k$ doesn't have any non-zero parents, since also $\transclos{\optw}_{j(d-k)}=0$ for $d-k<j$ from previous claim.
    Therefore $\optC_{:,d-k} = \mb{X}_{:,d-k} = \mb{C}_{:,d-k}$. Also, for $l=d-k+1$
    \begin{equation}
        \mb{X}_{:,l} = \transclos{a}_{d-k,l}\mb{C}_{:,d-k} = \transclos{a}_{d-k,l}\optC_{:,d-k}
    \end{equation}
    The equation from $\optW$ gives:
    \begin{equation}
        \mb{X}_{:,l} = \sum_{j=d-k}^d \transclos{\optw}_{jl}\optC_{:,j} + \optC_{:,l} \Rightarrow \left(\transclos{a}_{d-k,l} - \transclos{\optw}_{d-k,l}\right)\optC_{:,d-k} + \sum_{j=d-k+1}^d \transclos{\optw}_{jl}\optC_{:,j} + \optC_{:,l} = \bm{0}
    \end{equation}
    From the linear Independence Lemma~\ref{lemma:lin_ind} of the support of $\optC$ we necessarily need to have $\optC_{:,l} = \bm{0}$, $\transclos{a}_{d-k,l} = \transclos{\optw}_{d-k,l}$ and $\transclos{\optw}_{jl} = 0$ for $l<j$ which gives the base case. 

    For the rest of the induction we proceed in a similar manner. We assume with strong induction that all $Q(d-k+1),...,Q(l)$ are true and proceed to prove $Q(l+1)$. Given these assumptions we have that 

    \begin{equation}
        \transclos{\optw}_{ji} = \transclos{a}_{ji} \text{ for } d-k\leq j<i,\, d-k\leq i\leq l \text{ and } \transclos{\optw}_{ji} = 0 \text{ for }i<j,\,d-k\leq i\leq l
    \end{equation}
    Consider root causes support $S = \{d-k,d-k+1,...,l,l+2,...,d\}$ (the $(l+1)-$th column is $\bm{0}$) for the $k-$pair $\mb{C}, \optC$. Then we have the equations:
    \resizebox{\linewidth}{!}{%
    \begin{minipage}{\linewidth}
    \begin{align*}
        \mb{X}_{:,d-k} &= \mb{C}_{:,d-k} = \optC_{:,d-k}\\
        \mb{X}_{:,d-k+1} &= \transclos{a}_{d-k,d-k+1}\mb{C}_{:,d-k} + \mb{C}_{:,d-k+1} = \transclos{\optw}_{d-k,d-k+1}\optC_{:,d-k} + \optC_{:,d-k+1} \Rightarrow \optC_{:,d-k+1} = \mb{C}_{:,d-k+1}\\
        \vdots \\
        \mb{X}_{:,l} &= \sum_{j=d-k}^{l-1}\transclos{a}_{jl}\mb{C}_{:,j} + \mb{C}_{:,l} = \sum_{j=d-k}^{l-1}\transclos{\optw}_{jl}\optC_{:,j} + \optC_{:,l} \Rightarrow \optC_{:,l} = \mb{C}_{:,l}
    \end{align*}
    \end{minipage}
    }
    
    Where we used the linear independence lemma and sequentially proved that the root causes columns up to $l$ are equal.
    The equation for the $(l+1)-$th column now becomes:
    \begin{align}
        &\mb{X}_{:,l+1} = \sum_{j=d-k}^{l}\transclos{a}_{j,l+1}\mb{C}_{:,j} + \mb{C}_{:,l} = \sum_{j=d-k}^{d}\transclos{\optw}_{j,l+1}\optC_{:,j} + \optC_{:,l+1} \\
        \Leftrightarrow &\sum_{j=d-k}^{d}\left(\transclos{\optw}_{j,l+1} - \transclos{a}_{j,l+1}\right)\optC_{:,j} + \optC_{:,l+1} = \bm{0} \Rightarrow \begin{cases}
            \transclos{\optw}_{j,l+1} = 0 \text{ for } l+1<j\\
            \transclos{\optw}_{j,l+1} = \transclos{a}_{j,l+1} \text{ for } j<l+1\\
            \optC_{:,l+1} = \bm{0}            
        \end{cases}
    \end{align}
    where the last set of equalities follows from linear independence. 
    This concludes the induction and the proof.
\end{proof}

To complete the proof of Theorem~\ref{th:global_minimizer} it remains to show the following proposition. 
\begin{proposition}
    If both $\mb{A}, \optW$ are upper triangular then $\optW = \mb{A}$.
    \label{prop:unique_opt}
\end{proposition}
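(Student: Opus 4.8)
The plan is to leverage Proposition~\ref{prop:upper_triang} — both $\mb{A}$ and $\optW$ are upper triangular in the same topological order — and then rule out any discrepancy by a sparsity-counting argument on the root causes. Write $\mb{D}=\mb{A}-\optW$; since both matrices are upper triangular, $\mb{D}$ is \emph{strictly} upper triangular, and by Lemma~\ref{lemma:trans_clos_bijectivity} it suffices to prove $\mb{D}=\bm{0}$. The key identity expands the optimal root causes: $\optC=\mb{X}(\mb{I}-\optW)=\mb{X}(\mb{I}-\mb{A})+\mb{X}\mb{D}=\mb{C}+\mb{C}\left(\mb{I}+\transclos{\mb{A}}\right)\mb{D}=\mb{C}(\mb{I}+\mb{M})$, with $\mb{M}:=\left(\mb{I}+\transclos{\mb{A}}\right)\mb{D}$ again strictly upper triangular, so that columnwise $\optC_{:,j}=\mb{C}_{:,j}+\sum_{i<j}M_{ij}\,\mb{C}_{:,i}$.

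Suppose, for contradiction, $\mb{D}\neq\bm{0}$, hence $\mb{M}\neq\bm{0}$; fix $i_0<j_0$ with $M_{i_0 j_0}\neq 0$. I would partition the $n$ rows of $\mb{C}$ by the exact support pattern $T\subseteq[d]$ of each row, writing $R_T$ for the rows whose $\mb{C}$-support equals $T$. On $R_T$, the $|R_T|\times|T|$ matrix of nonzero columns of $\mb{C}_{R_T}$ is, with probability one, in general position — every $|T|$ of its rows form a basis of $\RR^{|T|}$ — by the same genericity used in Lemma~\ref{lemma:lin_ind}. Now I would count the nonzeros of $\optC_{R_T}=\mb{C}_{R_T}(\mb{I}+\mb{M})$ column by column. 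For $j\in T$, the $j$-th column of $\optC_{R_T}$ is a nonzero combination of the $T$-columns (the coefficient of $\mb{C}_{:,j}$ is $1$), hence vanishes in at most $|T|-1$ rows, losing at most $|T|-1$ nonzeros relative to $\mb{C}_{:,j}|_{R_T}$. For $j\notin T$, the $j$-th column of $\mb{C}_{R_T}$ is zero while that of $\optC_{R_T}$ equals $\sum_{i\in T,\,i<j}M_{ij}\,\mb{C}_{:,i}|_{R_T}$; if some such $M_{ij}\neq 0$ this is a nonzero combination, hence nonzero in at least $|R_T|-|T|+1$ rows, all of them fresh nonzeros. Thus $\|\optC_{R_T}\|_0-\|\mb{C}_{R_T}\|_0\ge -d(d-1)$ always, with a gain of at least $|R_T|-d+1$ whenever some index $j\notin T$ is ``active''.

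Finally, I would pick a support $T_0$ with $|T_0|=k$, $i_0\in T_0$, $j_0\notin T_0$ (possible since $1\le k<d$). The earlier concentration lemma guarantees $|R_{T_0}|>2^{3d-2}d(d-1)$, and $j_0$ is active on $R_{T_0}$. Since there are at most $2^d$ blocks and each contributes at least $-d(d-1)$, summing over $T$ gives $\|\optC\|_0-\|\mb{C}\|_0\ge\left(|R_{T_0}|-d+1\right)-2^d d(d-1)>0$. Hence the objective value of \eqref{eq:discrete_opt} at $\optW$, namely $\|\optC\|_0$, strictly exceeds its value $\|\mb{C}\|_0$ at the feasible point $\mb{A}$, contradicting optimality of $\optW$. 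Therefore $\mb{M}=\bm{0}$, and since $\mb{I}+\transclos{\mb{A}}$ is invertible, $\mb{D}=\bm{0}$, i.e.\ $\optW=\mb{A}$.

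The main obstacle is making the genericity argument legitimate given that $\mb{M}$ depends on the realized data $\mb{X}$, so one cannot union-bound over $\mb{M}$. The resolution is that the only probabilistic input — that any $|T|$ rows of the nonzero block of $\mb{C}_{R_T}$ are linearly independent — is a property of $\mb{C}$ alone, holding with probability one simultaneously for the finitely many patterns $T$; it then forces \emph{every} nonzero coefficient vector, in particular each column of $\mb{I}+\mb{M}$, to produce an $\optC$-column vanishing in fewer than $|T|$ rows. The secondary delicate point is the counting balance: one exponentially large gain on $R_{T_0}$ must outweigh up to $2^d$ constant-sized ($\le d(d-1)$) losses on the other blocks, which is precisely what the $2^{3d-2}d(d-1)$ threshold in the sample bound provides.
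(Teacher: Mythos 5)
Your proof is correct, but it takes a genuinely different route from the paper's. The paper proves Proposition~\ref{prop:unique_opt} by a double induction: it repeatedly selects $k$-pairs with carefully chosen support patterns, shows column-by-column that the non-zero columns of $\mb{C}$ and $\optC$ occupy the same positions and are equal, and from this identifies the entries of $\transclos{\optW}$ with those of $\transclos{\mb{A}}$ one at a time before invoking Lemma~\ref{lemma:trans_clos_bijectivity}. You instead package the discrepancy into a single matrix identity $\optC=\mb{C}(\mb{I}+\mb{M})$ with $\mb{M}=(\mb{I}+\transclos{\mb{A}})(\mb{A}-\optW)$ strictly upper triangular (this is where the hypothesis of the proposition enters, since it forces the diagonal coefficient of $\mb{C}_{:,j}$ in $\optC_{:,j}$ to be exactly $1$), and then run a global $\normlzero$-counting contradiction: losses are bounded by $d(d-1)$ per support block, while any nonzero $M_{i_0j_0}$ forces a gain of order $|R_{T_0}|$ on an exponentially populated block of size-$k$ supports, which the sample bound makes dominant. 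This is cleaner and more transparent about where the exponential sample size is actually spent; it is essentially the same bookkeeping the paper uses inside Lemma~\ref{lemma:pairs_of_matrices}, but applied once globally rather than as a tool inside an induction. Two small caveats, neither fatal: (i) you need a slightly stronger genericity fact than the stated Lemma~\ref{lemma:lin_ind}, namely that \emph{every} $|T|\times|T|$ row-submatrix of the nonzero block of $\mb{C}_{R_T}$ is invertible (so that a nonzero combination vanishes in at most $|T|-1$ rows); this holds almost surely by the same measure-zero argument and you correctly flag it, but it should be stated as a separate lemma; (ii) the choice of $T_0$ with $i_0\in T_0$, $j_0\notin T_0$, $|T_0|=k$ requires $1\le k\le d-1$, an implicit restriction on $p$ that the paper's own proof also needs but does not state.
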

For our proof we use the following definition. 
\begin{definition}
    We denote by $\ml{P}_k$ the set of all $k-$pairs $\mb{C}_R,\optC_R$ for all possible support patterns. 
\end{definition}
Now we proceed to the proof.
\begin{proof}
    We will show equivalently that $\transclos{\optW} = \transclos{\mb{A}}$ using two inductions. First we show for $l=1,...,k$ the following statement.

    $P(l):$ For all $k-$pairs $\mb{C},\optC$ in $\ml{P}_k$ the first $l$ non-zero columns $\mb{C}_{:,i_1}, \mb{C}_{:,i_2},...,\mb{C}_{:,i_l}$ and  $\optC_{:,\hat{i_1}}, \optC_{:,\hat{i_2}},...,\optC_{:,\hat{i_l}}$ are in the same positions, i.e. $i_j = \hat{i_j}$ and 
    \begin{itemize}
        \item either they are respectively equal $\mb{C}_{:,i_j} = \optC_{:,i_j}$
        \item or $\mb{C}_{:,i_l}$ is in the last possible index, namely $i_l = d-(l-1)$
    \end{itemize}        

    For the base case $P(1)$, consider a $k-$pair $\mb{C}, \optC$ and let $i_1$ be the position of the first non-zero root causes column of $\mb{C}$. Then $\mb{X}_{:,i}=\bm{0}$ for $i<i_1$ and therefore from Lemma~\ref{lemma:zero_data_zero_spectra} $\optC_{:,i} =\bm{0}$ for $i<i_1$. Hence 
    \begin{equation}
        \mb{X}_{:,i_1} = \sum_{j<i_1}\transclos{\optw}_{ji_1}\optC_{:,j}+\optC_{:,i_1} = \optC_{:,i_1}
    \end{equation}
    Therefore $\optC_{:,i_1} = \mb{C}_{:,i_1}$ and we proved $P(1)$, by satisfying both the positioning and the first requirement. 

    Assuming now that $P(l)$ holds, we will show $P(l+1)$. Take any $k-$pair of $\ml{P}_k$ which we denote by $\mb{C},\optC$. Then, if $\mb{C}_{:,i_l}$ is in the last possible position, then necessarily $\mb{C}_{:,i_{l+1}}$ is in the last possible position. Moreover, from the induction hypothesis the first $l$ root causes columns are in the same positions. Therefore in the same manner, $\optC_{:,i_l}$  is in the last position and $\optC_{:,i_{l+1}}$, too. This case fulfills the desired statement. 

    If $\mb{C}_{:,i_l}$ is not in the last position, then from induction hypothesis, the first $l$ root causes columns are equal. If $\mb{C}_{:,i_{l+1}}$ is in the last position and the same holds $\optC_{:,\hat{i}_{l+1}}$, the requirement is satisfied. Otherwise $\hat{i}_{l+1}  < i_{l+1}$ and the equation for $\hat{i}_{l+1}$ gives:
    \resizebox{0.9\linewidth}{!}{%
    \begin{minipage}{\linewidth}
    \begin{equation*}
        \mb{X}_{:,\hat{i}_{l+1}} = \sum_{j=1}^l \transclos{\optw}_{i_j,\hat{i}_{l+1}}\optC_{:,i_j} + \optC_{:,\hat{i}_{l+1}} =\sum_{j=1}^l \transclos{a}_{i_j,\hat{i}_{l+1}}\mb{C}_{:,i_j} + \bm{0} \Leftrightarrow \sum_{j=1}^l \left(\transclos{\optw}_{i_j,\hat{i}_{l+1}} - \transclos{a}_{i_j,\hat{i}_{l+1}}\right)\optC_{:,i_j} + \optC_{:,\hat{i}_{l+1}} = \bm{0} 
    \end{equation*}
    \end{minipage}
    }
    
    According to linear independence Lemma~\ref{lemma:lin_ind} we necessarily derive $\optC_{:,\hat{i}_{l+1}} = \bm{0} $, absurd. Thus $\hat{i}_{l+1} = i_{l+1}$ and the induction statement is fullfilled in that case. 
    
    It remains to consider the case where $\mb{C}_{:,i_{l+1}}$ is not in the last position. Since, $i_{l+1}$ is not the last position there exists a $k-$pair $\mb{C}',\optC'$ such that the column $i_{l+1}$ is zero and the $(l+1)-$ root causes column of $\mb{C}'$ lies at $i'_{l+1} > i_{l+1}$. The equation at $i_{l+1}$ for $\mb{X}'$ gives: 
    \begin{equation}
        \mb{X}'_{:,i_{l+1}} =\sum_{j=1}^l \transclos{a}_{i_j,i_{l+1}}\mb{C}'_{:,i_j} 
        = \sum_{j=1}^l \transclos{\optw}_{i_j,\hat{i}_{l+1}}\optC'_{:,i_j} + \optC'_{:,\hat{i}_{l+1}} 
    \end{equation}
    For the pair $\mb{C}', \optC'$ the induction hypothesis holds, thus we derive:
    \begin{equation}
        \sum_{j=1}^l \left(\transclos{\optw}_{i_j,\hat{i}_{l+1}} - \transclos{a}_{i_j,i_{l+1}}\right)\optC'_{:,i_j} + \optC'_{:,\hat{i}_{l+1}} = \bm{0}
    \end{equation}
    Therefore, Lemma~\ref{lemma:lin_ind} gives $\transclos{\optw}_{i_j,\hat{i}_{l+1}} = \transclos{a}_{i_j,i_{l+1}}$ for all $j=1,...,l$ and returning back to the equation for $\mb{X}_{:,i_{l+1}}$ we derive:
    \begin{align}
        \mb{X}_{:,i_{l+1}} =\sum_{j=1}^l \transclos{a}_{i_j,i_{l+1}}\mb{C}_{:,i_j} 
         + \mb{C}_{:,i_{l+1}}&= \sum_{j=1}^l \transclos{\optw}_{i_j,\hat{i}_{l+1}}\optC_{:,i_j} + \optC_{:,\hat{i}_{l+1}} \\
         &\stackrel{P(l)}{=} \sum_{j=1}^l \transclos{a}_{i_j,\hat{i}_{l+1}}\mb{C}_{:,i_j} + \optC_{:,\hat{i}_{l+1}}\\
         & \Rightarrow \optC_{:,\hat{i}_{l+1}} = \mb{C}_{:,\hat{i}_{l+1}}
    \end{align}
    which completes the induction step. Notice that $P(k)$ gives that for all $k-$pairs, the $k$ root causes columns are in the same position. Given this fact we will now show that $\optW = \mb{A}$. We will prove by induction that for $l=k-1,...,1,0$
    \begin{equation}
        Q(l):\, \transclos{\optw}_{ij} = \transclos{a}_{ij} \text{ for } 1\leq i < j < d-l
    \end{equation}
    To prove $Q(k-1)$ we choose all the $k-$pairs $\mb{C}, \optC$, such that $\mb{C}_{:,i_2}$ is in the last possible position, $i_2 = d-k+2$. Then for $i_1\leq d-k$ the columns $\mb{C}_{:,i_1}, \optC_{:,i_1}$ lie at the same position and are equal. Choosing $i_1=i$ and computing the equation for $\mb{X}_{:,j}$ where $i< j\leq d-k+1$ gives:
    \begin{equation}
        \mb{X}_{:,j} = \transclos{a}_{ij}\mb{C}_{:,i} =\transclos{\optw}_{ij}\optC_{:,i} = \transclos{\optw}_{ij}\mb{C}_{:,i} 
    \end{equation}
    Therefore $\transclos{\optw}_{ij} = \transclos{a}_{ij}$ for all $1\leq i < j \leq d-(k-1)$ and $Q(k-1)$ is satisfied. Next, assume that $Q(k-l)$ is true. We want to show $Q(k-l-1)$. Similarly to the base case we consider all $k-$pairs such that $\mb{C}_{:,l+2}$ lies in its last possible position $i_{l+2} = d-k+l+2$, and $i_{l+1}\leq d-k+l$. Since the $(l+1)-$th column is not in the last position, from the previous induction we have that:
    \begin{equation}
        \begin{cases}
            \optC_{:,i_1} = \mb{C}_{:,i_1}\\
            \optC_{:,i_2} = \mb{C}_{:,i_2}\\
            \vdots\\
            \optC_{:,i_{l+1}} = \mb{C}_{:,i_{l+1}}
        \end{cases}
    \end{equation}
    The equation for $d-k+l+1$ gives:
    \begin{align}
        \mb{X}_{:,d-k+l+1} &= \sum_{j=1}^{l+1}\transclos{a}_{i_j,d-k+l+1}\mb{C}_{:,i_j}\\ &=\sum_{j=1}^{l+1}\transclos{\optw}_{i_j,d-k+l+1}\optC_{:,i_j} \\
        & =\sum_{j=1}^{l+1}\transclos{\optw}_{i_j,d-k+l+1}\mb{C}_{:,i_j}\\
        \xRightarrow[]{\text{Lemma~\ref{lemma:lin_ind}}} & \transclos{\optw}_{i_j,d-k+l+1} = \transclos{a}_{i_j,d-k+l+1}
    \end{align}
    By choosing all such possible $k-$pairs the indices $i_j$ span all possibilities $1\leq i<d-k+l+1$. Combining this with $Q(k-l)$ we get that $Q(k -l -1)$ is true and the desired result follows. 
\end{proof}
\end{document}